\title{Pure Exploration Beyond Reward Feedback: \\ The Role of Post-Action Context}
\author{%
    Mohammad Shahverdikondori \thanks{Authors contributed equally.}   \\
    EPFL \\
    \texttt{mohammad.shahverdikondori@epfl.ch} \\
    \And
    Amir Mohammad Abouei \footnotemark[1]\\
    EPFL \\
    \texttt{amir.abouei@epfl.ch} \\
    \And
    Alireza Rezaeimoghadam \\
    Sharif University of Technology\\
    \texttt{alireza.moghadam@sharif.edu} \\
    \And
    Negar Kiyavash \\
    EPFL \\
    \texttt{negar.kiyavash@epfl.ch} \\
}
\DeclarePairedDelimiter\norm{\lVert}{\rVert}
\newcommand{\M}[0]{\mathcal{M}}
\newcommand{\A}[0]{\mathcal{A}}
\newcommand{\V}[0]{\mathcal{V}}
\newcommand{\ocal}[0]{\mathcal{O}}
\newcommand{\E}[0]{\mathcal{E}}
\newcommand{\I}[0]{\mathcal{I}}
\newcommand{\Is}[0]{\mathcal{I}^s}
\newcommand{\pr}[0]{P}
\newcommand{\C}[0]{\mathcal{C}}
\newcommand{\ch}[0]{\text{conv}}
\newcommand{\N}[0]{\mathcal{N}}
\newcommand{\hcal}[0]{\mathcal{H}}
\newcommand{\lcal}[0]{\mathcal{L}}
\newcommand{\ub}{\mathbf{u}}
\newcommand{\wb}{\mathbf{w}}
\newcommand{\pb}{\mathbf{p}}
\newcommand{\Nb}{\mathbf{N}}
\newcommand{\pol}[0]{\pb}
\newcommand{\bmu}[0]{\boldsymbol{\mu}}
\newcommand{\muh}[0]{\hat{\mu}}
\newcommand{\bmuh}[0]{\hat{\bmu}}
\newcommand{\muht}[0]{\hat{\bmu}(t)}
\newcommand{\lamht}[0]{\hat{\Lambda}(t)}
\newcommand{\delhit}[1]{\hat{\Delta}_{#1}(t)}
\newcommand{\logdel}[0]{\ln(1 / \delta)}  
\newcommand{\logDel}[0]{\ln(\frac{1} {\delta})}  
\newcommand{\sumij}[0]{\sum_{\substack{i \in [n] \\ j \in [k]}}}
\newcommand{\cdelt}[0]{\hat{c}_t(\delta)}
\newcommand{\fiwmu}[0]{f_i(\mathbf{w}, \bmu, \A)}
\newcommand{\istarmu}[0]{i^*(\bmu)}
\newcommand{\istarmut}[0]{i^*(\muht)}
\newcommand{\wstar}[1]{\mathbf{w}^*(#1, \A)}
\newcommand{\wzstar}[1]{\mathbf{w}_z^*(#1, \A)}
\newcommand{\wepsstar}[1]{\mathbf{w}^*_{#1}(\bmu, \A)}
\newcommand{\infnorm}[1]{\left\Vert #1 \right\Vert _{\infty}}
\newcommand{\lsnorm}[1]{\left\Vert #1 \right\Vert _{2}}
\newcommand{\dls}[2]{dist_{L^2}\left(#1, #2\right)}
\newcommand{\eteps}[0]{\E_T(\epsilon)}
\newcommand{\etepspr}[0]{\E'_T(\epsilon)}
\newcommand{\cstareps}[0]{C^*_{\epsilon}(\bmu, \A)}
\newcommand{\taudel}[0]{\tau_{\delta}}
\newcommand{\amin}[0]{a_{\min}}
\newcommand{\expec}[1]{\mathbb{E}[#1]}
\newcommand{\numaction}[0]{N^{X}_{i}(t)}
\newcommand{\numcontext}[0]{N^{Z}_{j}(t)}
\newcommand{\numac}[0]{N_{j, i}(t)}
\newcommand{\argmin}{\operatornamewithlimits{argmin}}
\newcommand{\argmax}{\operatornamewithlimits{argmax}}
\newtheorem{theorem}{Theorem}[section]
\newtheorem{proposition}[theorem]{Proposition}
\newtheorem{lemma}[theorem]{Lemma}
\newtheorem{definition}[theorem]{Definition}
\begin{document}

\maketitle

\begin{abstract}
We introduce the problem of best arm identification (BAI) with post-action context, a new BAI problem in a stochastic multi-armed bandit environment and the fixed-confidence setting. The problem addresses the scenarios in which the learner receives a \emph{post-action context} in addition to the reward after playing each action. This post-action context provides additional information that can significantly facilitate the decision process. We analyze two different types of the post-action context: (i) \textit{separator}, where the reward depends solely on the context, and (ii) \textit{non-separator}, where the reward depends on both the action and the context. 
For both cases, we derive instance-dependent lower bounds on the sample complexity and propose algorithms that asymptotically achieve the optimal sample complexity.
For the separator setting, we propose a novel sampling rule called \textit{G-tracking}, which uses the geometry of the context space to directly track the contexts rather than the actions.
For the non-separator setting, we do so by demonstrating that the Track-and-Stop algorithm can be extended to this setting. 
Moreover, in both settings, we theoretically and empirically show that algorithms that ignore the post-action context are sub-optimal.
Finally, our empirical results showcase the advantage of our approaches compared to the state of the art.
\end{abstract}

\section{Introduction}
Multi-armed bandit (MAB) refers to a class of sequential decision-making problems, where a learner selects actions (arms) in order to maximize a reward. 
MAB has widespread applications in various domains, such as clinical trials \citep{william_r__thompson_1933}, dynamic pricing \citep{kleinberg2003value, besbes2009dynamic}, recommender systems \citep{li2010contextual}, and resource allocation \citep{gai2012combinatorial}. Depending on the learner’s goal and constraints, different objectives may be pursued. For example, if the learner's goal is to minimize cumulative regret, they must balance the exploration-exploitation trade-off \citep{auer2002using, garivier2011kl}. Alternatively, in pure exploration problems, the learner aims to identify a desired object rather than maximize cumulative reward during learning. In the \emph{Best Arm Identification (BAI)} setting, a canonical pure exploration problem, the learner seeks the arm with the highest expected reward and must minimize the sample complexity, i.e., the number of rounds needed to identify this arm \citep{lb-tsitsiklis-mannor2004sample, SR-audibert2010best, track-stop-garivier2016optimal}. We focus on BAI with \textit{fixed-confidence}, where the error probability $\delta$ for identifying the best arm is fixed, and the objective is to minimize the sample complexity \citep{track-stop-garivier2016optimal, SR-audibert2010best, kaufmann2020contributions, confidence-jamieson2014best, lb-tsitsiklis-mannor2004sample}.


While the classic MAB model is suitable for a broad range of applications, additional side information about the environment can lead to more efficient algorithms. In the bandit literature, various types of side information have been considered. For example, \emph{causal bandits} \citep{lattimore2016causal, causal-cucb-lu2020regret} or \emph{linear bandits} \citep{auer2002using, abbasi2011improved} impose specific structures on the actions, and \emph{contextual bandits} \citep{tewari2017ads, langford2007epoch, russac2021b} allow the learner to observe a context before choosing an action. In this work, we study a form of side information known as \emph{mediator feedback} or \emph{post-action context}, which has been considered in the setting of cumulative regret minimization by several recent studies \citep{mann2019learning, metelli2021policy, causal-benign-bilodeau2022adaptively, esposito2023delayed, mediator-cum-eldowa2024information, causal-pareto-liu2024causal}. Specifically, after choosing an action in each round, the learner receives intermediate feedback from the environment along with the reward. This post-action context can substantially accelerate the process of identifying the best arm. We consider this new problem in the fixed-confidence setting, aiming to show how leveraging post-action context can reduce the sample complexity for different environments.

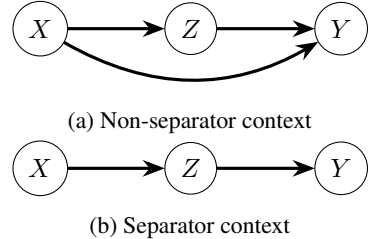
\begin{wrapfigure}{r}{0.4\textwidth}
        \begin{subfigure}[b]{0.45\textwidth}
            \centering
            \begin{tikzpicture}           
                \tikzset{line width=2pt, outer sep=0pt,
                ell/.style={draw,fill=white, inner sep=2pt,
                line width=2pt},
                };
                    \node[name=X, shape=circle, draw] at (-3,0){$X$};
                    \node[name=Z, shape=circle, draw] at (-1,0){$Z$};
                    \node[name=Y, shape=circle, draw] at (1,0){$Y$};
                
                \begin{scope}[>={Stealth[black]},
                              every edge/.style={draw=black,very thick}]
                    \path[->] (X) edge (Z);
                    \path[->] (X) edge[bend left=40] (Y);
                    \path[->] (Z) edge (Y);
                \end{scope}
            \end{tikzpicture}
            \caption{Non-separator context}
            \label{fig: nonsep}
        \end{subfigure}
        \begin{subfigure}[b]{0.45\textwidth}
            \centering
            \begin{tikzpicture}
                \tikzset{line width=2pt, outer sep=0pt,
                ell/.style={draw,fill=white, inner sep=2pt,
                line width=2pt},
                };
                    \node[name=X, shape=circle, draw] at (-3,0){$X$};
                    \node[name=Z, shape=circle, draw] at (-1,0){$Z$};
                    \node[name=Y, shape=circle, draw] at (1,0){$Y$};
                
                \begin{scope}[>={Stealth[black]},
                              every edge/.style={draw=black,very thick}]
                    \path[->] (X) edge (Z);
                    \path[->] (Z) edge (Y);
                \end{scope}
            \end{tikzpicture}
            \caption{Separator context}
            \label{fig: sep}
        \end{subfigure}
        \caption{Two possible structures for the post-action context.}
        \label{fig: causal_graph}
\end{wrapfigure}

Figure~\ref{fig: causal_graph} illustrates two structures pertaining to post-action context, which may also be interpreted as causal graphs. In both structures, variables $X$, $Z$, and $Y$ represent the action, the post-action context, and the reward, respectively. In Figure~\ref{fig: sep}, once $Z$ is observed, the distribution of $Y$ depends only on $Z$, making $X$ and $Y$ conditionally independent given $Z$. We call this the \textit{separator} context. By contrast, in Figure~\ref{fig: nonsep}, the context $Z$ is informative but not a sufficient statistic of $X$ for determining $Y$. In other words, the distribution of $Y$ depends on both $X$ and $Z$. We refer to this as the \textit{non-separator} context. Although the separator context can be viewed as a special case of the non-separator, an algorithm optimized for the latter may not be optimal for the former. This is because a separator context provides the important side information that $X$ and $Y$ are conditionally independent given $Z$. We analyze both settings in this work.

\textbf{Motivating Example.}  
Consider a healthcare scenario for diabetes management where $X$ denotes a chosen treatment regimen (e.g., medication or dosage), $Z$ is an intermediate biomarker such as short-term blood sugar levels, and $Y$ is the clinical target of improvement (e.g., glucose stability and range). In each round, a clinician assigns a treatment $X$ to a patient, measures $Z$ after a short interval, and later observes $Y$. Tracking this intermediate observation may provide additional information about the alternative treatments. The distribution $\pr( Z \vert X) $ can often be reliably estimated from electronic health records or prior clinical studies, making it valuable prior knowledge. By incorporating $Z$, clinicians can adapt treatment strategies more efficiently, reducing the number of trials needed to identify an optimal regimen. This scenario may fall into either non-separator or separator settings. In both cases, considering $Z$ enhances learning compared to relying solely on target $Y$.

\textbf{Comparison to closely related work.}
\cite{causal-cucb-lu2020regret, mediator-cum-eldowa2024information} and \cite{lattimore2016causal} study similar settings with separating contexts, focusing on regret minimization and simple regret optimization, respectively.
Several works study best-arm identification (BAI) in contextual bandits \citep{context1-li2022instance, context2-kato2021role}, including the proportional mode of \cite{russac2021b}. Although these problems are related to ours, there is a fundamental distinction: in contextual bandits, the context is generated before the action is selected, and the action is chosen based on the observed context. \cite{russac2021b} also considers more general scenarios, including active, agnostic, and oblivious modes. For example, in the agnostic mode, the context is independent of the action but is observed only after an action is chosen. However, none of these works captures the structure of our problem, where the context is generated as a consequence of the chosen action.
Finally, \cite{carlsson2024pure} considers a similar problem; however, in their setting, the learner can choose $Z$ in each round, whereas in ours the learner cannot directly choose $Z$. For a more detailed discussion of related work, see Appendix~\ref{apd: related-work}.

The presence of post-action context can fundamentally alter the optimal strategy for selecting arms. In classic BAI problems, the optimal algorithms proposed by \cite{track-stop-garivier2016optimal} focus more time on pulling the arms with the higher observed rewards, which is natural when no additional side information is available. However, in the presence of post-action context, collecting more samples from a certain context may expedite the identification of the best arm. In such scenarios, choosing a suboptimal arm that increases the probability of observing that context is more effective than selecting the arm with the maximum reward. 
We show the suboptimality of the methods that ignore the post-action context both theoretically (Section \ref{sec: non-optimality}) and empirically (Section \ref{sec: experiment}).

\textbf{Contributions.}
We introduce \textit{best-arm identification with post-action context}, a fixed-confidence pure exploration problem in which each action generates both a reward and an informative post-action context. We study two structural regimes: a \emph{separator} setting, where the reward depends on the action only through the post-action context, and a \emph{non-separator} setting, where the reward may depend on both the action and the context. For both regimes, we derive instance-dependent lower bounds and design algorithms that achieve the corresponding asymptotically optimal sample complexity. In the separator setting, the lower bound depends only on the frequencies with which contexts are observed; this motivates our new sampling rule, \emph{G-tracking}, which tracks optimal context frequencies directly rather than tracking arm pull frequencies. In the non-separator setting, we show that the classical Track-and-Stop framework can be extended to exploit post-action contexts optimally. Finally, we show that algorithms that ignore the post-action context can be strictly sub-optimal and characterize this suboptimality.


\section{Preliminaries and Problem Setup} \label{sec: preli}

\textbf{Notations.} The set $[n]$ denotes $\{1, 2, \dots, n\}$ and $\Delta^{n-1}$ represents the $(n-1)$-dimensional standard simplex, defined as $\{w \in \mathbb{R}^n \mid w_i \geq 0 \wedge \sum_{i = 1}^{n} w_i = 1\}$. The notation $d(P, Q)$ denotes the Kullback–Leibler (KL) divergence between two probability measures $P$ and $Q$. Additionally, $d_B(\delta,1-\delta)$ refers to the KL divergence between two Bernoulli random variables with parameters $\delta$ and $1 - \delta$. The convex hull of \(n\) vectors \(\A_i \in \mathbb{R}^d\) is denoted by $\ch(\{\A_1, \A_2, \dots, \A_n\}) = \left\{ \sum_{i = 1 }^{n} \lambda_i \A_i \mid \lambda_i \geq 0 \ \wedge \sum_{i = 1}^{n} \lambda_i = 1 \right\}$.

\textbf{Best arm identification with post-action context problem.} In this problem, a player interacts with a multi-armed bandit environment $\V$ with $n$ arms. In each round $t$, the player selects an action $X = i_t \in [n]$ and observes a pair $(z_t, y_t)$, where $z_t$ is the value of the post-action context variable $Z$, and $y_t$ is the realization of the reward variable $Y$.

We assume the context variable $Z$ is discrete and takes values in $[k]$, depending only on the arm pulled. The context probability matrix $\A = [\A_1 \vert \A_2 \vert \ldots \vert \A_n] \in \mathbb{R}^{k \times n}$, where $\A_{j,i} = \pr(Z = j \mid X = i)$, encodes the probability of contexts given each action. We further assume that the reward distribution $\pr(Y \mid X=i, Z=j)$ follows a Gaussian distribution with unit variance. The matrix $\bmu = [\bmu_1 \vert \bmu_2 \vert \ldots \vert \bmu_n] \in \mathbb{R}^{k \times n}$ represents the mean values of the reward distributions, such that $\mu_{j,i} = \mathbb{E}(Y | X=i, Z=j)$. 

In the case of the separator context, the reward depends only on $Z$, that is, $\pr (Y \mid X = i, Z = j) = \pr (Y \mid Z = j)$,
for all $i \in [n]$ and $j \in [k]$. This implies that all columns of $\bmu$ are identical. We denote the joint distribution of the context and reward for a given action $i$ by $P^{\bmu. \A}_{i}= \pr(Z, Y \mid X=i)$. The expected reward for a given action $i$ is computed as 
\begin{align*}
    & \mathbb{E}[Y \mid X=i] = \sum_{j=1}^{k} \pr(Z = j \vert X = i)\mathbb{E}[Y \vert Z = j, X = i] = \A_i^\top \bmu_i.
\end{align*}
The best arm is the arm with the maximum expected reward, i.e., $i^*(\bmu, \A)=\argmax_{i \in [n]} \A_i^\top \bmu_i$, where the mean matrix $\bmu$ and the context probability matrix $\A$ characterize the instance. In this work, we assume that $\A$ is known to the learner and that the best arm is unique. Let $\I(\A) \subset \mathbb{R}^{k \times n}$ denote the set of $\bmu$ matrices that imply a unique best arm for a given $\A$. For simplicity, as $\A$ is known, we often use $\I, \istarmu, $ and $P^{\mu}$ instead of $\I(\A), i^*(\bmu, \A),$ and $P^{\mu, \A}$, respectively.

We consider the best arm identification problem in the \textit{fixed-confidence} setting, where the player aims to identify the best arm $\istarmu$ for any $\bmu \in \I$ with a pre-specified error probability $\delta \in (0,1)$. A sequential learning algorithm for this setting consists of three main components: (i) A \emph{sampling rule} (deterministic or stochastic) that determines which arm to pull in round $t$ based on the history up to round $t-1$, denoted by $\mathcal{H}_{t-1} = (i_1, (z_1, y_1), i_2, (z_2, y_2), \ldots, i_{t-1}, (z_{t-1}, y_{t-1}))$, (ii) A \emph{stopping rule} $\tau$, based on $\mathcal{H}_t$, determines when to stop the process, and (iii) A \emph{decision rule} $\hat{  i}_{\tau}$, which recommends the best action.

For any $\delta \in (0, 1)$, an algorithm is \textit{$\delta$-correct} if for any $\A$ and $\bmu \in \I(\A)$ it satisfies $\pr_{\bmu, \A}(\tau_{\delta} < \infty, \hat{i}_{\tau} \neq i^*(\bmu)) \leq \delta$. The goal of the learner is to find a $\delta$-correct algorithm that minimizes the expected number of samples $\mathbb{E}_{\bmu, \A}[\tau_{\delta}]$.

The sub-optimality gap of arm $i$ is defined as
\begin{align*}
    &\Delta_i \triangleq \mathbb{E}[Y \mid X=\istarmu] - \mathbb{E}[Y \mid X=i] = \A_{\istarmu}^\top \bmu_{\istarmu} - \A_i^\top \bmu_i.
\end{align*}
During the learning process, let $\muht$ denote the matrix containing the empirical estimates of all entries of $\bmu$ based on the samples collected up to round $t$ and  define
\begin{align*}
    \delhit{i} \triangleq \A_{\istarmut}^\top \bmuh_{\istarmut}(t) - \A_i^\top \bmuh_i(t),
\end{align*}
which is the gap of arm $i$ relative to the arm with the best empirical mean up to round $t$ estimated using $\muht$. Finally, $\numaction$, $\numcontext$, and $\numac$ represent the number of times action $i$, context $j$, and joint action-context pair $(i,j)$ have been observed up to round $t$, respectively.

\paragraph{Summary of Assumptions.} We assume that the set of contexts is finite. The context probability matrix $\A$ is known, and the best arm is unique. In Appendix \ref{apd: unknown context}, we provide a discussion on the complexity of the problem when $\A$ is unknown. The reward distribution for each action-context pair follows a Gaussian distribution with unit variance; however, our results can be easily generalized to the one-parameter exponential family. Additionally, we assume positivity, meaning that for each action, each context occurs with positive probability, i.e., $\amin = \min_{i,j} \A_{i,j} > 0$.

\subsection{General Lower Bound} \label{subsec: general-lower}

For the classical best arm identification problem, \cite{track-stop-garivier2016optimal} established a lower bound on the expected sample complexity of any $\delta$-correct algorithm. Extending this lower bound to our problem setting is straightforward, as we show in Proposition \ref{th: general_lower1}. Before stating the proposition, let us define Alt$(\bmu, \A)$ as the set of alternative parameter matrices $\bmu' \in \I$ such that $\istarmu \neq i^*(\bmu')$. 


\begin{proposition}\label{th: general_lower1}
     For any bandit environment with parameters $\A$ and $\mu \in \I(\A)$ and any $\delta$-correct algorithm,  $\mathbb{E}_{\bmu, \A}[\tau_{\delta}] \geq T^*(\bmu, \A)d_B(\delta, 1- \delta)$, where $T^*(\bmu, \A)$ is 
        \begin{align} \label{eq: general_lower1}
            T^*(\bmu, \A)^{-1} \triangleq \sup_{\mathbf{w} \in \Delta^{n-1}} \inf_{\bmu' \in \text{Alt}(\bmu, \A)} \sum_{i \in [n]} w_i d(P^{\bmu}_{i}, P^{\bmu'}_{i}).
        \end{align}
\end{proposition}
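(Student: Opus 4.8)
The plan is to follow the classical change-of-measure argument underlying the lower bound of \cite{track-stop-garivier2016optimal}, adapting it to the fact that each round now yields a \emph{pair} $(z_t, y_t)$ rather than a scalar reward. The workhorse is the transportation (data-processing) inequality: for any $\delta$-correct algorithm, any alternative instance $\bmu' \in \text{Alt}(\bmu, \A)$, and the (almost-surely finite) stopping time $\taudel$, I want to establish the fundamental inequality
\begin{align*}
    \sum_{i \in [n]} \mathbb{E}_{\bmu, \A}\bigl[N^X_i(\taudel)\bigr]\, d(P^{\bmu}_i, P^{\bmu'}_i) \;\geq\; d_B(\delta, 1-\delta),
\end{align*}
where $N^X_i(\taudel)$ counts the pulls of arm $i$ before stopping. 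Everything else follows by a normalization and a sup-inf comparison.

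First I would prove the likelihood decomposition. Since both $\bmu$ and $\bmu'$ share the \emph{known} context matrix $\A$, and since the sampling and stopping rules depend only on the history $\hcal_{t}$, Wald's identity gives
\begin{align*}
    \mathbb{E}_{\bmu, \A}\!\left[ \log \frac{dP_{\bmu}}{dP_{\bmu'}}(\hcal_{\taudel}) \right] = \sum_{i \in [n]} \mathbb{E}_{\bmu, \A}\bigl[N^X_i(\taudel)\bigr]\, d(P^{\bmu}_i, P^{\bmu'}_i).
\end{align*}
The per-arm term is exactly the KL divergence between the \emph{joint} context–reward laws $P^{\bmu}_i$ and $P^{\bmu'}_i$; because $\A$ is identical under both instances, the chain rule collapses it to the context-weighted sum $\sum_{j} \A_{j,i}\, d\bigl(\mathcal{N}(\mu_{j,i},1), \mathcal{N}(\mu'_{j,i},1)\bigr)$, although the general bound only uses the joint form.

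Next I would apply the data-processing inequality to the $\hcal_{\taudel}$-measurable decision event $\{\hat{i}_{\taudel} = \istarmu\}$. By $\delta$-correctness on $\bmu$ its probability is at least $1-\delta$; and since every $\bmu' \in \text{Alt}(\bmu, \A)$ satisfies $i^*(\bmu') \neq \istarmu$, $\delta$-correctness on $\bmu'$ forces this probability to be at most $\delta$ under $\bmu'$. Monotonicity of the binary KL then yields the fundamental inequality above. Finally, setting $T = \mathbb{E}_{\bmu, \A}[\taudel]$ and $w_i = \mathbb{E}_{\bmu, \A}[N^X_i(\taudel)]/T \in \Delta^{n-1}$, dividing by $T$ and taking the infimum over $\bmu' \in \text{Alt}(\bmu, \A)$ gives $\inf_{\bmu'} \sum_i w_i d(P^{\bmu}_i, P^{\bmu'}_i) \geq d_B(\delta, 1-\delta)/T$. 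As this $w$ is feasible in the outer supremum defining $T^*(\bmu, \A)^{-1}$, we get $T^*(\bmu, \A)^{-1} \geq d_B(\delta, 1-\delta)/T$, i.e.\ $\mathbb{E}_{\bmu, \A}[\taudel] \geq T^*(\bmu, \A)\, d_B(\delta, 1-\delta)$.

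The only real subtlety — and the reason the extension is \emph{straightforward} rather than immediate — is the likelihood decomposition: one must confirm that the adaptive, history-dependent sampling does not disrupt Wald's identity and that the correct per-pull information term is the joint divergence $d(P^{\bmu}_i, P^{\bmu'}_i)$ over the observed pair $(Z, Y)$ (not merely over $Y$). Since $\A$ is common to $\bmu$ and $\bmu'$, the context component contributes zero divergence, and the restriction to $\text{Alt}(\bmu, \A)$ is exactly what guarantees the decision event has probability at most $\delta$ under each alternative; the remainder is a verbatim transcription of the standard BAI lower bound.
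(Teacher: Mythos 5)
Your proposal is correct and follows essentially the same route as the paper: the paper invokes the transportation lemma of Kaufmann et al.\ (which is exactly your Wald-identity-plus-data-processing argument), notes that the only adaptation needed is replacing the per-pull reward likelihood with the joint context--reward likelihood so that the per-arm information term becomes $d(P^{\bmu}_i, P^{\bmu'}_i)$, applies $\delta$-correctness to the (complementary but equivalent, by symmetry of $d_B(\delta,1-\delta)$) decision event, and finishes with the same normalization $w_i = \mathbb{E}[N^X_i(\taudel)]/\mathbb{E}[\taudel]$ and sup--inf comparison. No gaps.
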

This lower bound indicates that the optimal sampling strategy is to play each action $i$ in proportions to $w^*_i$, a solution of Equation~\eqref{eq: general_lower1}. Note that the bound is similar to a previous result \citep{track-stop-garivier2016optimal}, with the key distinction that it depends on the context variable through $P^{\bmu}_{i}=\pr(Z, Y \mid X=i)$. 

We derive lower bounds for the BAI problem under both separator and non-separator post-action context settings by explicitly characterizing Equation~\eqref{eq: general_lower1}. The key difference lies in the structure of the alternative set $\text{Alt}(\bmu, \A)$, which varies between the two settings. In the separator case, all $\bmu_i$ vectors are identical, imposing additional constraints on $\text{Alt}(\bmu, \A)$ and yielding a strictly smaller lower bound. This tighter bound motivates the design of a tailored sampling algorithm to achieve optimal sample complexity in the separator setting. The analysis for the non-separator case is deferred to Appendix~\ref{sec: non-sep}, as the derivation closely follows the separator case and known sampling rules such as D-tracking are sufficient to achieve optimality.

\section{Separator Context} \label{sec: sep}

This section presents the results for the \textit{bandit with separator post-action context}, where the reward distribution depends solely on the value of the context, i.e., $\pr(Y \mid X=i, Z=j) = \pr(Y \mid Z=j)$, as illustrated in Figure \ref{fig: sep}. This condition implies that all columns of the matrix $\bmu$ are identical. For ease of notation, in this section, we represent $\bmu$ as a vector in $\mathbb{R}^k$.

\subsection{Lower Bound}
    The following theorem provides a lower bound on the expected sample complexity of any $\delta$-correct algorithm in the separator setting. 
    This lower bound is derived by explicitly solving the minimization problem defined in \eqref{eq: general_lower1}.
    
    \begin{restatable}[Separator Lower Bound]{theorem}{sepLowerBound} \label{thm : sep lower} 
            Let $\delta \in (0, 1)$. Consider a bandit instance with a separator context and Gaussian reward distribution with unit variance, parameterized by the matrix $\A$ and the vector $\bmu$. Then, any $\delta$-correct algorithm with stopping time $\taudel$ satisfies $\mathbb{E}[\tau_{\delta}] \geq T^*(\bmu, \A)  d_B(\delta, 1-\delta)$, where
                    \begin{equation} \label{eq: sep LB1}
                        T_{S}^*(\bmu, \A)^{-1} = \sup_{\mathbf{w} \in \Delta^{n-1}} \min_{i \neq i^*(\bmu)} \frac{\Delta_{i}^2}{2 \sum_{j \in [k]} \frac{(\A_{j, \istarmu} - \A_{j, i})^2}{\sum_{l \in [n]} w_l \A_{j, l}}}.
                    \end{equation} 
    \end{restatable}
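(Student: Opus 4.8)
The plan is to invoke Proposition~\ref{th: general_lower1}, which already reduces the lower bound to evaluating the per-arm quantity $f_i(\mathbf{w},\bmu,\A)$ of~\eqref{eq: general_fi}, and then to solve the inner infimum explicitly for the separator Gaussian model. The first step is to expand the divergence $d(P^{\bmu}_i, P^{\bmu'}_i)$ between two separator instances that share the known matrix $\A$. Since $Z$ depends only on the action with the same (known) law $\A_{\cdot,i}$ under both $\bmu$ and $\bmu'$, and $Y$ is unit-variance Gaussian depending only on $Z$, the joint law factorizes as $P^{\bmu}_i(z,y)=\A_{z,i}\,\mathcal{N}(y;\mu_z,1)$, and the chain rule for KL gives $d(P^{\bmu}_i,P^{\bmu'}_i)=\sum_{j\in[k]}\A_{j,i}\tfrac{(\mu_j-\mu'_j)^2}{2}$. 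Summing against the weights yields $\sum_i w_i\,d(P^{\bmu}_i,P^{\bmu'}_i)=\sum_j \bar{w}_j\tfrac{(\mu_j-\mu'_j)^2}{2}$, where $\bar{w}_j\triangleq\sum_{l\in[n]}w_l\A_{j,l}$ is the induced frequency of context $j$. This is exactly where the separator structure matters: because the columns of $\bmu'$ are tied together, the weights enter only through the aggregate $\bar{w}_j$, so evidence collected under distinct arms sharing a context is pooled.

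Next I would solve the constrained quadratic program $f_i(\mathbf{w},\bmu,\A)=\inf_{\bmu'\in\C_i}\sum_j \bar{w}_j(\mu_j-\mu'_j)^2/2$. Writing $v_j=\mu_j-\mu'_j$ and using $\A_i^\top\bmu-\A_{\istarmu}^\top\bmu=-\Delta_i$, the defining constraint $\A_i^\top\bmu'\ge\A_{\istarmu}^\top\bmu'$ rewrites as the single linear inequality $\sum_j(\A_{j,\istarmu}-\A_{j,i})v_j\ge\Delta_i$. Minimizing a $\bar{w}$-weighted squared norm over a half-space with $\Delta_i>0$ forces the constraint to be tight, and a one-line Lagrange computation (equivalently Cauchy--Schwarz) gives the minimizer $v_j=\lambda(\A_{j,\istarmu}-\A_{j,i})/\bar{w}_j$ and the optimal value $f_i(\mathbf{w},\bmu,\A)=\Delta_i^2\big/\big(2\sum_j(\A_{j,\istarmu}-\A_{j,i})^2/\bar{w}_j\big)$. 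Substituting $\bar{w}_j=\sum_l w_l\A_{j,l}$ into~\eqref{eq: general_lower2} reproduces~\eqref{eq: sep LB1}.

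Two well-definedness points will be checked along the way. First, the denominator is finite and positive: positivity $\amin>0$ together with $\sum_l w_l=1$ gives $\bar{w}_j\ge\amin>0$, while $i\neq\istarmu$ with a unique best arm forces $\A_i\neq\A_{\istarmu}$ (otherwise $\Delta_i=0$), so $\sum_j(\A_{j,\istarmu}-\A_{j,i})^2/\bar{w}_j>0$. Second, $\C_i$ imposes a strict inequality and membership in $\I$, whereas the Lagrangian optimum sits on the boundary where arms $i$ and $\istarmu$ tie; I would argue by continuity that the infimum over $\C_i$ equals the value at this boundary point, since that point is a limit of feasible $\bmu'$ obtained by an arbitrarily small further perturbation making $i$ the unique best arm.

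The main obstacle I anticipate is not the optimization itself, which is a textbook projection onto a hyperplane, but making the two structural arguments rigorous: confirming that fixing the known $\A$ removes the context marginal from the divergence so that only the reward means contribute, and verifying that passing from the open, uniqueness-constrained set $\C_i$ to its closed boundary leaves the infimum unchanged. Once these are in place, the explicit formula follows directly.
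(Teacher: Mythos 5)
Your proposal is correct and follows essentially the same route as the paper's proof: both reduce to Proposition~\ref{th: general_lower1}, expand the KL divergence so the known context marginal cancels and the weights enter only through $\bar{w}_j=\sum_l w_l\A_{j,l}$, solve the resulting weighted projection onto a half-space via a Lagrangian (the paper uses an $\alpha$-shifted constraint with $\alpha\to 0$ where you invoke continuity and density of $\I^s$, which is the same device), and substitute into Equation~\eqref{eq: general_lower2}. Your explicit well-definedness checks ($\bar{w}_j\ge\amin>0$ for every $\mathbf{w}\in\Delta^{n-1}$ and $\A_i\neq\A_{\istarmu}$ whenever $\Delta_i>0$) are a minor tightening the paper leaves implicit, but the argument is otherwise the same.
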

    The values $w_{z,j} \triangleq \sum_{l \in [n]} w_l \A_{j, l}$ in Equation \eqref{eq: sep LB1} represent the expected frequency of observing context $Z = j$. Note that the optimization problem in \eqref{eq: sep LB1} depends only on $w_{z,j}$s. Consequently, defining the vector $\wb_z = (w_{z,1}, w_{z,2}, \dots, w_{z,k})$, Equation \eqref{eq: sep LB1} can be expressed as
    \begin{equation} \label{eq: sep LB2}
        T_{S}^*(\bmu, \A)^{-1} = \sup_{\mathbf{w}_z \in \ch(\A)} \min_{i \neq i^*(\bmu)} \frac{\Delta_{i}^2}{2 \sum_{j \in [k]} \frac{(\A_{j, \istarmu} - \A_{j, i})^2}{w_{z,j}}},
    \end{equation}
    where $\ch(\A)$ is the convex hull of the points $\A_1, \A_2, \ldots, \A_n$, and ${\mathbf{w}_z \in \ch(\A)}$ because $\wb_z$ is a convex combination of these points. The optimal solution to this new formulation represents the expected frequency of observing context values generated by any optimal algorithm.

    It can be shown that the objective functions in both Equations~\eqref{eq: sep LB1} and \eqref{eq: sep LB2} are concave. Since the domains $\Delta^{n-1}$ and $\ch(\A)$ are convex, both optimization problems can be tackled using convex optimization techniques. For more details on solving such optimization problems in BAI settings, see \citep{frank-wolf-wang2021fast,menard2019gradient}.

    \begin{algorithm}[t]
        \caption{Separator Track and Stop (STS)}
        \label{algo: sep}
        \begin{algorithmic}[1] 
            \STATE \textbf{Input:} Context probability matrix $\A$.
            \STATE \textbf{Initialization:} Pull arms until collecting at least one sample from $P(Y|Z=j)$ for each $j \in [k]$, then update $t$ accordingly.
            \WHILE{$\lamht \leq \cdelt$}
                \STATE Find policy $\pol(t + 1) \in \ch(\A)$ based on G-tracking rule.
                \STATE Play according to a distribution $\pi$ such that $\A \pi = \pol(t + 1)$. 
                \STATE Update $\muht$ and $t \gets t + 1.$
            \ENDWHILE 
            \STATE \textbf{Output:} $i^*(\muht)$. 
        \end{algorithmic}
    \end{algorithm}

\subsection{Learning Algorithm}

    \textbf{Stopping Rule.} The stopping rule operates independently of the sampling rule and determines, at each step, whether sufficient information has been gathered to terminate the algorithm or if further sampling is necessary. A widely used approach for designing stopping rules in one-parameter exponential family bandits, including Gaussian bandits, is based on Generalized Likelihood Ratio (GLR) tests \citep{track-stop-garivier2016optimal, kaufmann2021mixture}. We describe how we adapt these tests to our setting in Appendix~\ref{apd: GLR}. The GLR statistic in this setting is defined as
    \begin{equation} \label{eq: glr-def-sep}
            \lamht \triangleq  \inf_{\bmu' \in \text{Alt}(\muht, \A)} \sum_{j \in [k]} \numcontext \frac{(\hat{\bmu}_{j}(t) - \bmu'_{j})^2}{2},
    \end{equation}
    provided that $\muht \in \I(\A)$ and otherwise $\lamht$ is set to zero. \(\lamht\) reflects the current confidence in the best arm identified based on the observations so far. A higher value of \(\lamht\) indicates greater confidence. This definition can be simplified to
    \begin{align*}
             \lamht = \min_{i \neq \istarmut} \frac{\hat{\Delta}_{i}^2}{2 \sum_{j \in [k]} \frac{(\A_{j, \istarmut} - \A_{j, i})^2}{\numcontext}}.
    \end{align*}
    Appendix \ref{apd: GLR} and \ref{apd: glr-sep} provide a proof of how to compute the GLR statistic and a justification of the above simplification.

    To decide whether to terminate or continue playing, \(\lamht\) is compared to a sequential threshold \(\cdelt\), which determines if the confidence is sufficiently high. To design the sequential thresholds, we apply the result by \cite{kaufmann2021mixture} for adaptive sequential testing, which suggests
    \begin{equation} \label{eq: sep threshold}
           \cdelt = 2 \sum_{j \in [k]} \ln\left( 4 + \ln\left( \numcontext \right)\right) + k C^{g} \left( \frac{\ln\left(\frac{1}{\delta}\right)}{k} \right),
    \end{equation}
    where $C^{g}$ is the same function introduced in Equation \eqref{eq: non-sep threshold}. The stopping time $\tau_{\delta}$ is then defined as
    \begin{equation} \label{eq: sep stop rule}
         \tau_{\delta} \triangleq \inf \{ t \in \mathbb{N} \mid \lamht > \cdelt \}.
    \end{equation}
     At the stopping time $\tau_{\delta}$, the final suggestion $\hat{i}_{\tau}$ is equal to the unique estimated best arm $i^*(\bmuh(\tau_{\delta}))$. The following lemma establishes the correctness of this stopping rule.
     
    \begin{restatable}{lemma}{sepCorrectness} \label{lem: sep correctness}
             Consider a bandit instance with a separator context and Gaussian reward distribution with unit variance, parameterized by the matrix $\A$ and the vector $\bmu$. Any algorithm with the stopping rule of \eqref{eq: sep stop rule} is $\delta$-correct, that is, $\pr_{\bmu, \A}(\tau_{\delta} < \infty, \hat{i}_{\tau} \neq i^*(\bmu)) \leq \delta.$
    \end{restatable}

\textbf{Sampling Rule.} As mentioned earlier, the lower bound and stopping rule depend solely on how often each context value is observed, regardless of the actions that result in these contexts. Motivated by this observation, we introduce our new sampling method, called \textit{Geometric Tracking (G-tracking)}, which directly tracks the optimal frequency of observed contexts.



Note that this setting can be viewed as an active off-policy learning problem in $k$-armed bandit, where each arm corresponds to a context value. However, the agent cannot directly select a context to play. Instead, they have access to $n$ distinct policies (the original $n$ arms in our problem), where choosing policy $i$, probabilistically selects a context according to the known distribution $\A_i$. Each vector $\A_i$ lies in the $(k-1)$-dimensional simplex and encodes the probability of selecting a context by playing policy $i$.
Under this interpretation, the policy space is the convex hull of the points $\A_1, \A_2, \ldots, \A_n$, denoted by $\ch(\A)$. In each round $t$, choosing a policy $\pol(t) \in \ch(\A)$ is equivalent to fixing a distribution on context $Z$. Notably, there exists (although not necessarily unique) a distribution over arms $X$ that is compatible with $\pol(t)$, and the agent samples an arm at time $t$ according to this distribution.

We now propose a new sampling rule that directly tracks the optimal frequency of observing contexts and asymptotically achieves optimal sample complexity when combined with the stopping rule in Equation \eqref{eq: sep stop rule}. For any vector $\bmu \in \I(\A)$ (i.e., a unique best arm exists), we define the set $\wzstar{\bmu}$ as the set of solutions to the following optimization problem
\begin{align} \label{eq: sep-optimal-weights}
    \argmax_{\mathbf{w}_z \in \ch(\A)} \min_{i \neq i^*(\bmu)} \frac{\Delta_{i}^2}{2 \sum_{j \in [k]} \frac{(\A_{j, \istarmu} - \A_{j, i})^2}{w_{z,j}}},
\end{align} 
which corresponds to the formulation in Equation~\eqref{eq: sep LB2}.
Unlike the non-separator setting, in the separator setting, the optimal weights may not be unique, i.e., $\wzstar{\bmu}$ is not a singleton. The optimization problem \eqref{eq: sep-optimal-weights} is convex for each $\bmu \in \I$ and can be solved using numerical methods such as Frank-Wolfe \citep{pmlr-v28-jaggi13, frank-wolf-wang2021fast}. Although \eqref{eq: sep-optimal-weights} is non-smooth, the objective is the minimum of a finite number of smooth functions. The standard Frank–Wolfe algorithm does not provide theoretical convergence guarantees in this case, but \cite{frank-wolf-wang2021fast} establish such guarantees for optimization problems whose objectives admit this structure.


\begin{figure}
    \centering
    \includegraphics[width=0.65 \linewidth]{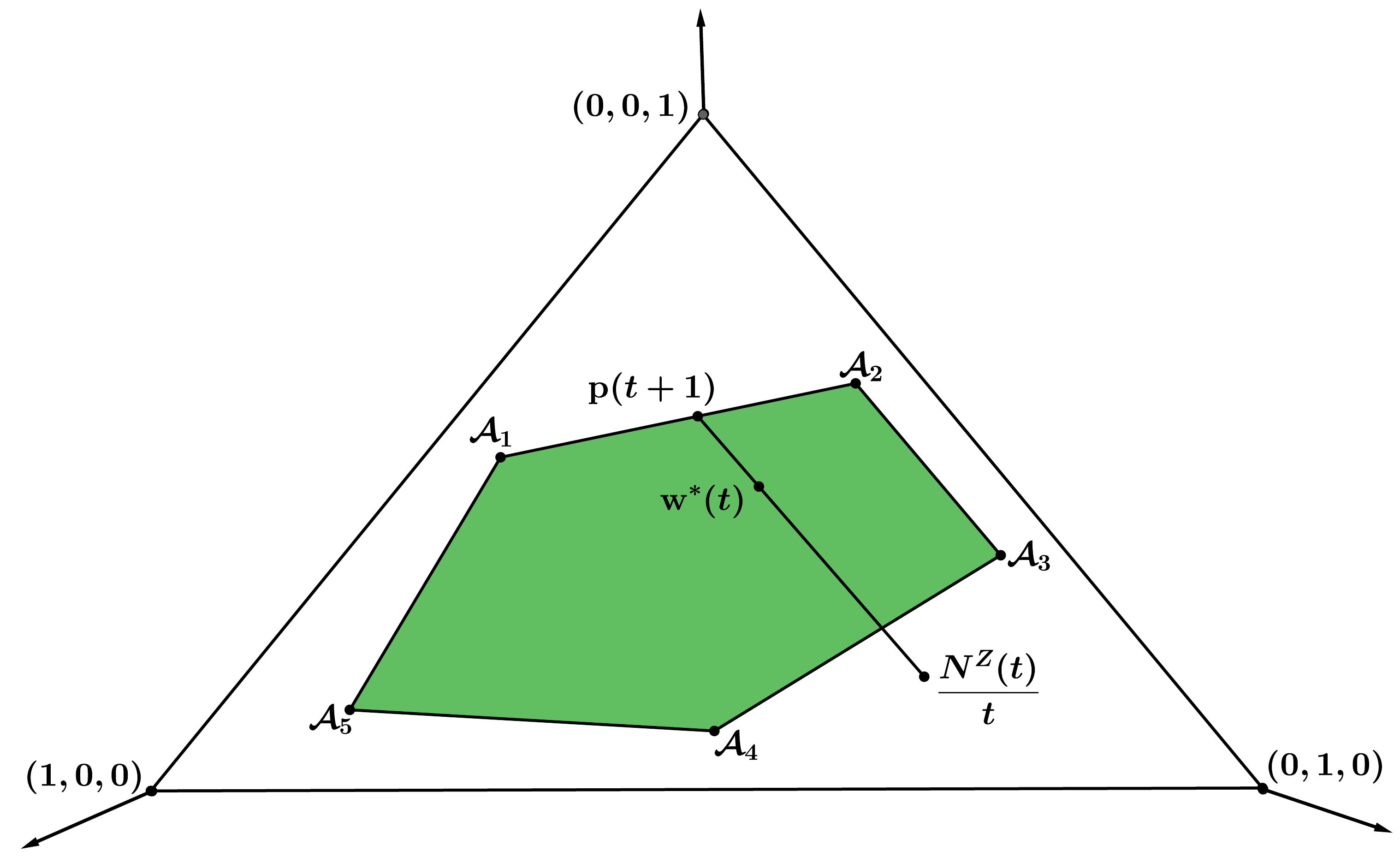}
    \caption{Illustration of G-tracking rule for an instance with $k=3, n=5$. The triangle depicts the two-dimensional simplex, and the green area shows the policy space 
    $\ch(\A)$.}
    \label{fig: sampling}
\end{figure}

At each round $t$, our sampling rule first solves the optimization problem \eqref{eq: sep-optimal-weights} using the current empirical estimate $\muht$ to find a solution $\wb^{*}(t) \in \wzstar{\muht}$. This vector lies within $\ch(\A)$. Recall that vector $\frac{\Nb^{Z}(t)}{t}$ denotes the actual frequency of observed contexts up to round $t$ and it lies in the simplex $\Delta^{k-1}$. Our sampling rule, called G-tracking, determines the next policy $\pol(t + 1) \in \ch(\A)$ by connecting $\frac{\Nb^{Z}(t)}{t}$ to $\wb^{*}(t)$ and extending this line until it intersects the boundary of $\ch(\A)$. The algorithm then pulls arms according to a distribution $\pi \in \Delta^{n-1}$, where $\A \pi = \pol(t + 1)$. A compatible distribution $\pi$ always exists because $\pol(t + 1) \in \ch(\A)$, but it may not be unique. For the formal mathematical formulation of G-tracking, see Appendix \ref{apd: form-g-tracking}.

Figure \ref{fig: sampling} illustrates the sampling rule for an instance with $k=3$ and $n=5$. Note that while the policy points always lie within $\ch(\A)$, $\frac{\Nb^{Z}(t)}{t}$ may lie outside this set (as shown in Figure \ref{fig: sampling}). In such cases, the policy point $\pol(t + 1)$ is the intersection point of the line passing through $\frac{\Nb^{Z}(t)}{t}$ and $\wb^{*}(t)$ with the boundary of policy space, ensuring that $\wb^{*}(t)$ lies between $\pol(t + 1)$ and $\frac{\Nb^{Z}(t)}{t}$.

The main advantage of the G-tracking rule is that it ensures $\wb^{*}(t)$ is a convex combination of the observed context frequencies $\frac{\Nb^{Z}(t)}{t}$ and the next action $\pol(t + 1)$. Thus, if a context $i$ is under-explored $\left( \frac{N^{Z}_{i}(t)}{t} \leq w^{*}_{i} (t) \right)$, then $\pol(t + 1)$ allocates a probability greater than $w^{*}_{i} (t)$ to that context. Conversely, for each over-explored context $i$ $\left( \frac{N^{Z}_i(t)}{t} \geq w^{*}_{i} (t)\right)$, the next policy allocates a probability lower than $w^{*}_{i} (t)$. This ``balancing" property holds for all contexts simultaneously.
Note that the balancing property holds for every point on the segment connecting $\wb^{*}(t)$ to $\pol(t + 1)$. Therefore, our proof remains valid for any algorithm that picks a point on this segment. In our algorithm, we choose the $\pol(t + 1)$ on the boundary of the convex hull to achieve the fastest possible tracking of the optimal weights.

The pseudocode of our algorithm called \textit{Separator Track and Stop (STS)} is presented in Algorithm \ref{algo: sep} which uses G-tracking for sampling with the stopping rule of \eqref{eq: sep stop rule}. The following theorem states that our algorithm achieves the optimal sample complexity.
\begin{restatable}[Separator Upper Bound]{theorem}{sepUpperBound} \label{thm : sep upper} 
       Algorithm \ref{algo: sep} applied to a bandit instance with a separator context and Gaussian reward distribution with unit variance, parameterized by the matrix $\A$ and the vector $\bmu$ attains
            \begin{equation*}
                \limsup_{\delta \rightarrow 0} \frac{\mathbb{E}_{\bmu, \A}[\tau_{\delta}]}{\logdel} \leq T_{S}^*(\bmu, \A),
            \end{equation*}
            where $T_{S}^*(\bmu, \A)$ is defined in Equation \eqref{eq: sep LB1}.
\end{restatable}

\section{Sub-Optimality of Ignoring Post-Action Context} \label{sec: non-optimality}

In this section, we explain why algorithms that ignore the post-action context are suboptimal in the separator setting. The same discussion for the non-separator setting is deferred to Appendix \ref{apx: non-optimal-non-sep}

\subsection{Separator Context} \label{apx: non-optimal-sep}

    In the non-separator setting, the sub-optimality of ignoring $Z$ stems solely from the increased variance in the reward distributions (i.e., the increase in the sub-Gaussian parameter). In contrast, the separator setting involves more substantial structural differences.
    The key distinction is that, in the separator setting, the action and reward are conditionally independent given the post-action context. As a result, any observation with context $Z = j$ is informative about $\mu_j$, regardless of which action produced it. This property implies that algorithms that ignore $Z$ can suffer arbitrarily large sub-optimality in the separator setting, even when the range of reward means is small. In the following, we describe two potential sources of sub-optimality in this case:

    \textbf{1.} In the separator setting with known context probability matrix $\A$, the learner only needs to estimate the means of $k$ different unit-variance Gaussian variables (i.e., the entries of $\bmu$) with high confidence to identify the best arm. In cases where $n \gg k$, ignoring $Z$ forces the learner to estimate the means of $n$ different random variables (sub-Gaussian with variance greater than $1$) with high confidence, which can significantly increase the required sample complexity.

    \textbf{2.} The correlation among the arms in the separator setting implies that, even when the mean values lie within a small range and $n = k$, algorithms that ignore $Z$ may still be highly sub-optimal. Consider the following example with $k = n = 3$ and the parameters: 
    \begin{align*}
    \bmu = \begin{bmatrix}
        0 \\
        0.5 \\
        1
    \end{bmatrix}, \quad ~
        \A = \begin{bmatrix}
            \epsilon & 2 \epsilon & \frac13 \\
            2 \epsilon & \epsilon & \frac13 \\
            1 - 3 \epsilon & 1 - 3 \epsilon & \frac13
        \end{bmatrix},
    \end{align*}
    where $\epsilon$ is a small positive number. In this example, the first arm is optimal, and the gap between arms 1 and 2 is $\Delta_2 = \A_1^{\top} \bmu - \A_2^{\top} \bmu = \frac{\epsilon}{2}$. According to known lower bounds in the BAI literature, any algorithm solving this instance with confidence $\delta$ must take more than $\frac{4}{\epsilon^2} \logDel$ samples \citep{kaufmann2016complexity}. 
    However, by using the post-action context, repeatedly pulling the third arm (which has a large sub-optimality gap and is thus rarely played by algorithms ignoring $Z$) can stop after just $\mathcal{O}(\logDel)$ samples. This follows from a standard concentration argument, since the components of $\bmu$ are separated by a constant gap, $\mathcal{O}(\ln (\frac{1}{\delta}))$ samples are enough to recover their ordering with high probability. This example illustrates that the cost of ignoring $Z$ in the separator setting can be substantial.


\subsection{Reduction to Linear Bandit} \label{sec: non-optimal-linear}

    A more sophisticated approach to solve the separator setting that ignores $Z$ but takes advantage of knowing the context probability matrix $\A$ is to reduce the problem to a linear bandit problem.
    
    In this reduction, each arm $i$ is mapped to $\A_{i} \in \mathbb{R}^{k}$. Then the reward from pulling action $i$ can be written as
    $$
        Y_i = \sum_{j = 1}^{k} \A_{j, i} \bmu_{j} + R + \varepsilon = \langle \A_{i}, \bmu \rangle  + R + \varepsilon,
    $$
    where $\varepsilon$ is a standard Gaussian, and $R$ is a discrete variable over $\{\bmu_j - \A_{i}^{\top} \bmu \mid j \in [k] \}$ such that $\forall j \in [k]: \pr (R  = \bmu_j - \A_{i}^{\top} \bmu) = \A_{j, i}.$
    If we assume the entries of $\bmu$ are bounded from both sides, then similar to the non-separator case, we conclude that $R + \varepsilon$ is a sub-Gaussian noise but with a variance greater than $1$ (which means that the sub-Gaussian parameter is bigger than $1$). Note that a similar reduction also works for the non-separator setting by corresponding a vector in $\mathbb{R}^{n \times k}$ to each action $X = i$, where positions from $ik+1$ to $ik + k$ are equal to $\A_i$ and others are zero. 
    
    In the following analysis, we disregard the increase in variance due to the reduction to linear bandit, as discussed above, and assume $\sigma = 1$. Even with this simplification, we show that by incorporating the post-action context, the sample complexity lower bound is always less than (or equal to) that of linear bandits on the same instance. This implies that the theoretical guarantees for linear bandit do not yield tight bounds in our setting, as we discuss below.

    There is no sample complexity bound for BAI in linear bandits with sub-Gaussian rewards. However, if we approximate the sample complexity bound for the Gaussian case using the result of \cite{linearLB-soare2015sequential}, we obtain
    \begin{align*}
        &T^*_1(\boldsymbol{\mu}, A)^{-1} \approx \sup_{\boldsymbol{w} \in \Delta^{n-1}} \min_{i \neq i^*} \frac{\Delta_i^2}{2 (A_{i^*} - A_i)^\top \left( \sum_j w_j A_j A_j^\top \right)^{-1} (A_{i^*} - A_i)},
    \end{align*}

    while for BAI with post-action context based on Equation \eqref{eq: sep LB1}, we have
    \[
    T^*_2(\boldsymbol{\mu}, A)^{-1} = \sup_{\boldsymbol{w} \in \Delta^{n-1}} \min_{i \neq i^*} \frac{\Delta_i^2}{2 (A_{i^*} - A_i)^\top D^{-1} (A_{i^*} - A_i)},
    \]
    where $D = \sum_j w_j \text{diag}(A_j)$. To show that $T^*_1 \geq T^*_2$, it suffices to prove that
    \[
    D - \sum_j w_j A_j A_j^\top = \sum_j w_j \left( \text{diag}(A_j) - A_j A_j^\top \right)
    \]
    is positive semi-definite. This follows from the Cauchy--Schwarz inequality
    \begin{align*}
        &\forall v \in \mathbb{R}^k: \quad v^\top \text{diag}(A_j) v = \sum_i v_i^2 A_{i,j} = 
        \left( \sum_i v_i^2 A_{i,j} \right) \left( \sum_i A_{i,j} \right) \geq (v^\top A_j)^2 = v^\top A_j A_j^\top v.
    \end{align*}
    Therefore, the linear bandit reduction necessarily increases the sample complexity.
    
    To illustrate the difference numerically, consider the small instance
    \[
    \boldsymbol{\mu} = \begin{bmatrix} 0.2 \\ 0.5 \\ 0.3 \end{bmatrix}, \quad
    A = \begin{bmatrix}
    0.9 & 0.2 & 0.5 & 0.1 \\
    0.05 & 0.4 & 0.4 & 0.8 \\
    0.05 & 0.4 & 0.1 & 0.1
    \end{bmatrix}.
    \]
    In this case, we compute $T^*_2(\boldsymbol{\mu}, A) \approx 165$, while the corresponding linear bandit lower bound is $T^*_1(\boldsymbol{\mu}, A) \approx 1054$, showing a substantial gap. 

\begin{figure}[t!]
        \centering
        \begin{subfigure}[t]{0.45\textwidth}
            \centering
            \includegraphics[width = \textwidth]{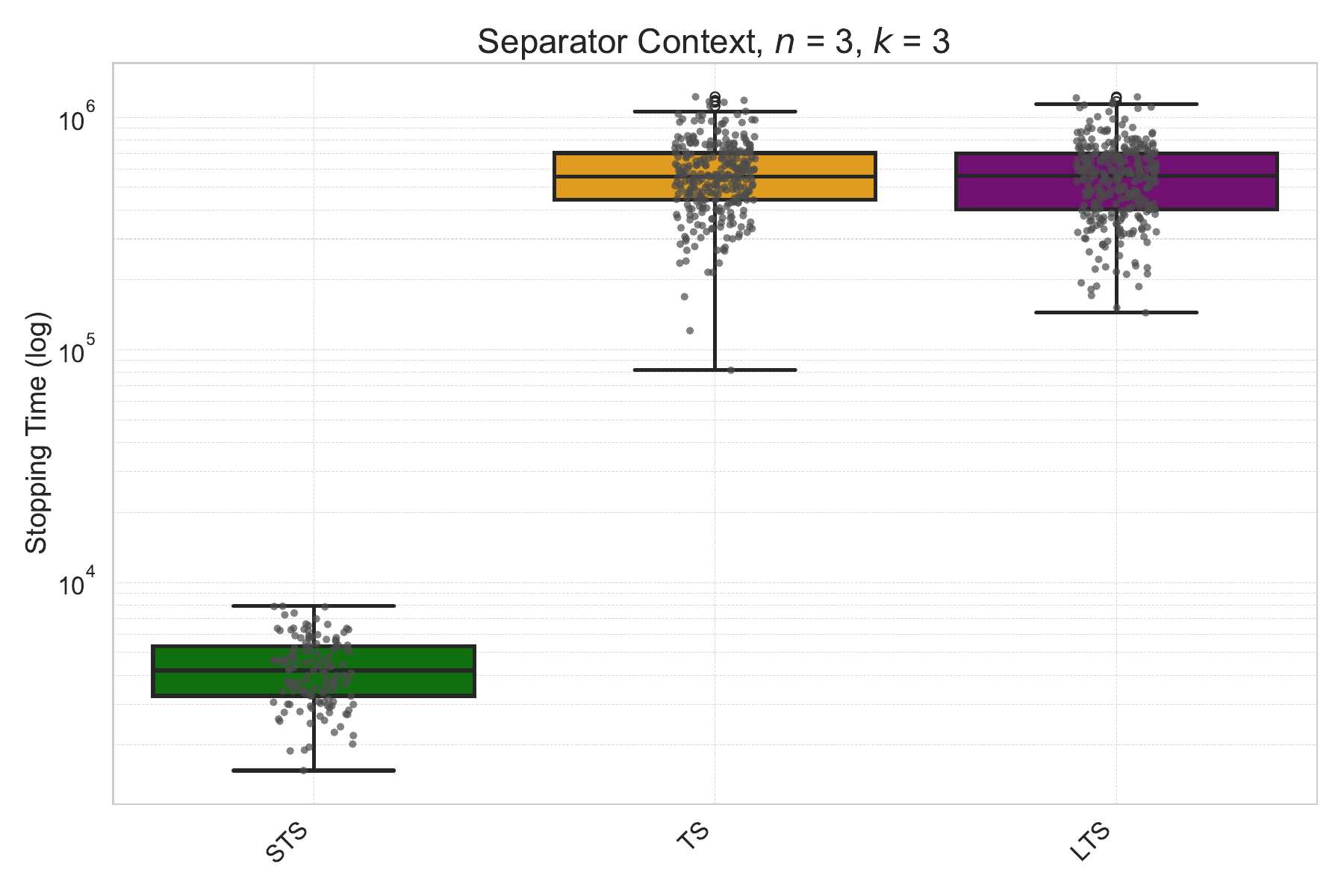}
            \caption{Comparison of stopping times over 150 runs.}
            \label{fig: sep-average-T}
        \end{subfigure}
        \hfill
        \begin{subfigure}[t]{0.45\textwidth}
            \centering
            \includegraphics[width = \textwidth]{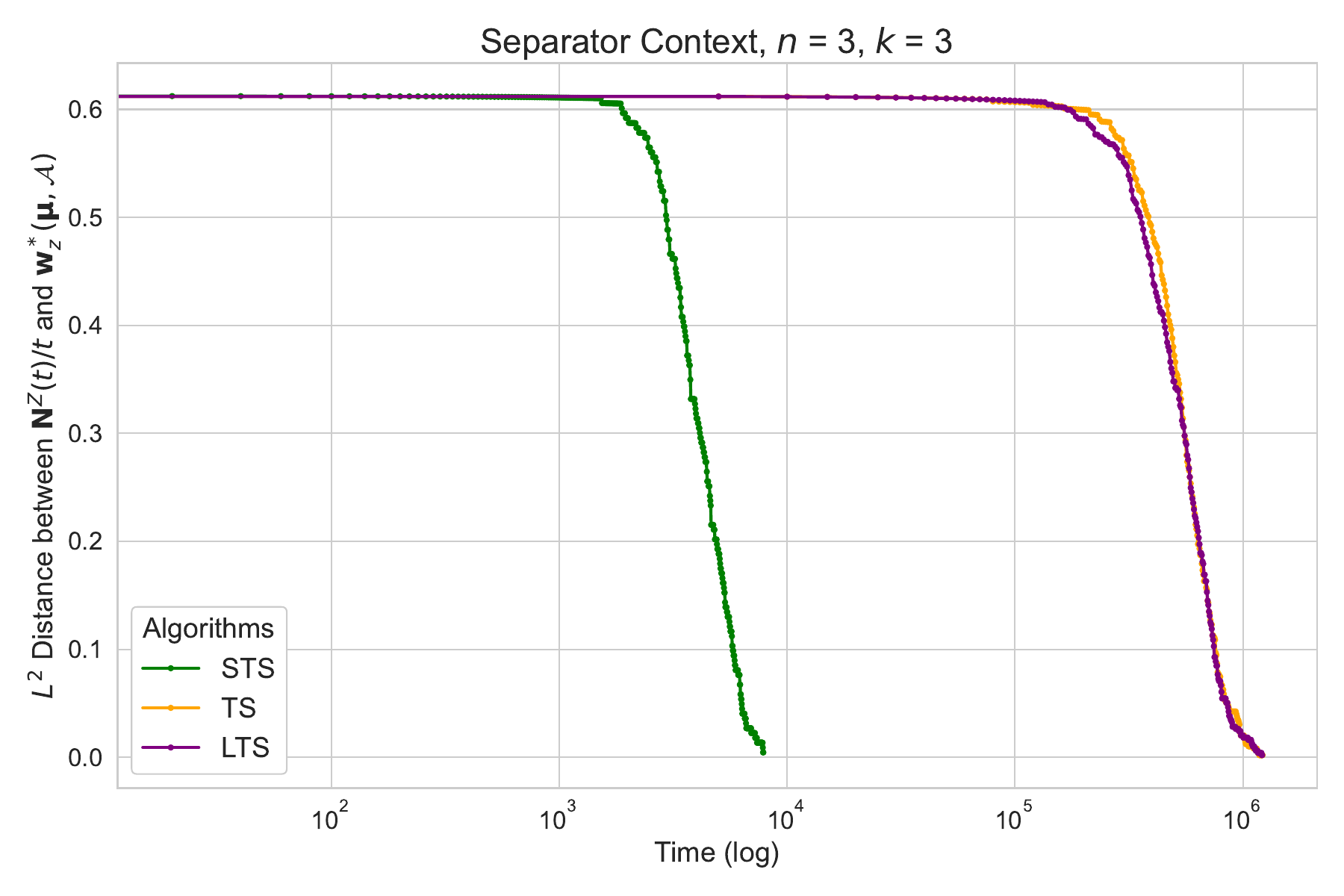}
            \caption{Average $L^2$ distance between the vectors $\frac{\Nb^Z(t)}{t}$ and $\wzstar{\bmu}$ during the learning process.}
            \label{fig: sep-w-distance}
        \end{subfigure}
        \caption{The results of different algorithms for an instance in Equation \eqref{eq: sep-instance}.}
        \label{fig: sep-main-fig}
\end{figure}

\section{Experiments} \label{sec: experiment}
This section presents a representative experiment for the separator setting, comparing our proposed algorithm with two benchmark algorithms. We use the Python CVXPY library \citep{cvx1-diamond2016cvxpy, cvx2-agrawal2018rewriting} to solve the required optimization problems in all experiments. Additional experiments on both synthetic and real-world data, together with a more comprehensive discussion, are provided in Appendix~\ref{apd: experiment}. The implementation is available at \url{https://github.com/ban-epfl/BAI-with-Post-Action-Context}.

\label{sub: sep-exp} We consider an instance with $n = 3$ arms and $k=3$ context values, specified by the following parameters
\begin{align} \label{eq: sep-instance}
     \delta = 0.01, \quad ~ 
     \bmu = \begin{bmatrix}
        1.0 \\
        0.1 \\
        0.3
    \end{bmatrix}, \quad ~
    \A = \begin{bmatrix}
        0.9 & 0.9 & 0.1 \\
        0.09 & 0.01 & 0.45 \\
        0.01 & 0.09 & 0.45
    \end{bmatrix}.
\end{align}
This instance is interesting because the first and second arms have very close average rewards, making it difficult to distinguish the best arm with high probability. To do so, one must observe a substantial number of samples from contexts $2$ and $3$, which have low probabilities of occurring when pulling arms $1$ and $2$. Consequently, an effective strategy involves frequently selecting arm $3$, even though it has a notably lower expected reward than arms $1$ and $2$.

This behavior differs from classic BAI, where suboptimal arms are typically chosen less often. In instances like this, algorithms designed for standard BAI (e.g., track-and-stop) become suboptimal because they do not take full advantage of the post-action context information.

We compare Algorithm \ref{algo: sep} (STS) with the following two baselines:
(i) \emph{Track-and-Stop (TS).} This classic algorithm ignores the context values and applies D-tracking to the actions.
(ii) \emph{Lazy Track-and-Stop (LTS).} It can be shown that the BAI problem with a separator post-action context can be reduced to a BAI problem in linear bandits, while the mean values of the reward distributions are bounded. For more details on this reduction, refer to Section \ref{sec: non-optimal-linear}. Consequently, identifying the best action translates into best-arm identification in a finite-arm linear bandit. We then apply Lazy Track-and-Stop \citep{linear-lazy-jedra2020optimal}, which is an optimal algorithm for BAI in linear bandits.

Figure \ref{fig: sep-average-T} illustrates the average number of arm pulls for each algorithm on the instance in Equation \eqref{eq: sep-instance}, aggregated over $150$ runs for each algorithm. Figure \ref{fig: sep-w-distance} shows the average $L^2$ distance of the vector $\frac{\Nb^{Z}(t)}{t}$ from the optimal frequency of contexts $\wzstar{\bmu}$ over time, which captures the convergence speed of each algorithm.
As the figures demonstrate, ignoring the post-action contexts can lead to significant sub-optimality.


\section{Conclusion}
We introduced a new BAI problem with post-action context in the fixed-confidence setting. We considered two settings for post-action context. For both settings, non-separator and separator context, we derived lower bounds on sample complexity and proposed algorithms that achieve asymptotic optimal sample complexity.

An interesting direction for future work is to consider the setting where context probability matrix $\A$ is unknown. In this case, as discussed in Appendix~\ref{apd: unknown context}, the optimization problem in Equation \eqref{eq: general_lower1} becomes non-convex which renders designing a computationally efficient algorithm challenging.


\bibliographystyle{alpha}
\bibliography{bibliography}

\appendix

\newpage
\begin{center} \label{sec: apd}
    {\Large \textbf{Appendix}}
\end{center}

The appendix is organized as follows. In Appendix \ref{apd: related-work}, we review the literature on related work. In Appendix \ref{sec: non-sep}, we provide the results of non-separator setting. Appendix \ref{apd: general bound} contains the proof of the general lower bound. In Appendix \ref{apd: GLR}, we discuss the GLR test. Appendices \ref{apd: non-sep proofs} and \ref{apd: sep proofs} provide proofs for the non-separator and separator cases, respectively. Appendix \ref{apd: opt-solving} addresses how to solve the optimization problem in Equation \eqref{eq: middle non-sep LB}, and Appendix~\ref{apd: unknown context} discusses the scenario where the context matrix is unknown. Finally, Appendix \ref{apd: experiment} provides additional details about our experiments and presents further numerical results.

\begin{table}[H]
    \centering
    
    \label{table: notations}
    \caption{Table of notations.}
    \small
    \begin{tabular}{c|l} 
        \toprule
        Notation & Description \\
        \hline
        $X$ & Action \\
        $Z$ & Post-action context \\
        $Y$ & Reward variables \\
        $n$ & Size of action set \\
        $k$ & Size of post-action context \\
        $[n]$ & $\{1, 2, \dots, n\}$ \\
        $\Delta^{n-1}$ & $(n-1)$-dimensional standard simplex = $\{w \in \mathbb{R}^n \mid w_i \geq 0 \wedge \sum_{i = 1}^{n} w_i = 1\}$ \\
        $d(P, Q)$ & Kullback–Leibler (KL) divergence between two probability measures $P$ and $Q$ \\
        $d_{B}(p, 1-p)$ & KL between two Bernoulli  with parameters $p$ and $1 - p$ \\
        $\ch(\{\A_1, \A_2, \dots, \A_n\})$ & Convex hull of \(n\) vectors \(\A_i \in \mathbb{R}^d\) = $ \left\{ \sum_{i = 1 }^{n} \lambda_i \A_i \mid \lambda_i \geq 0 \ \wedge \sum_{i = 1}^{n} \lambda_i = 1 \right\}$ \\
        $\A = [\A_1 \vert \A_2 \vert \ldots \vert \A_n]$ & Context probability matrix, where $\A_{i,j} = \pr(Z = i \mid X = j)$ \\
        $\bmu = [\bmu_1 \vert \bmu_2 \vert \ldots \vert \bmu_n]$ & Mean value matrix, where $\mu_{i,j} = \mathbb{E}(Y | X=i, Z=j)$ \\
        $P^{\bmu}_{i}, P^{\A, \bmu}_{i}$ & Joint distribution of $(Z, Y)$ after pulling arm $X = i$ in environment $\A, \bmu$ \\
        $\istarmu, i^*(\bmu, \A)$ & Best arm for an instance parameterized by $\bmu$ and $\A$ \\
        $\I, \I(\A)$ &  Set of $\bmu$ matrices that imply a unique best arm for a given $\A$ \\
        $\Is, \Is(\A)$ & Set of $\bmu$ vectors that imply a unique best arm for a given $\A$ in separator setting  \\  
        $\Delta_i$  &   Sub-optimality gap of arm $i$ \\
        $\delhit{i}$ &  Estimated sub-optimality gap of arm $i$ in round $t$ \\
        $\muht$ &       Matrix containing the empirical estimates of all entries of $\bmu$ up to round $t$ \\
        Alt$(\bmu, \A)$ & Set of alternative parameter matrices $\bmu' \in \I$ such that $\istarmu \neq i^*(\bmu')$ \\
        $\amin$ & $\min_{i,j} \A_{i,j}$   \\    
        $\numaction$ &    Total number of times arm $i$ has been played up to round $t$ \\
        $\numcontext$ &    Total number of times context $j$ has been observed up to round $t$ \\
        $\numac$ &   Total number of times arm-context $i, j$ has been observed up to round $t$ \\
        $\lamht$ &      GLR statistic at around $t$ \\
        $\mathbf{w}^*(\bmu, \A)$ &  The optimal weights (proportions) for actions in non-separator context \\
        $\wzstar{\bmu}$          &  The set of optimal weights (proportions) for contexts \\
        \bottomrule
    \end{tabular}
    
\end{table}

\section{Further Discussion on Related Work} \label{apd: related-work}

\paragraph{Best Arm Identification.} 
The literature on pure exploration in multi-armed bandit problems is extensive \citep{bandit-book1-lattimore2020bandit, pure2-thesis-stephens2023pure}. Here, we provide a brief overview of the most closely related works. The standard best-arm identification problem is primarily studied in two settings: the fixed-confidence setting, where the probability of error is predetermined \citep{track-stop-garivier2016optimal, SR-audibert2010best, kaufmann2020contributions, confidence-jamieson2014best, lb-tsitsiklis-mannor2004sample}, and the fixed-budget setting, where the time horizon is fixed \citep{budget-confidence-gabillon2012best, SH-karnin2013almost, lb-budget-carpentier2016tight}.

\cite{track-stop-garivier2016optimal} propose the first (asymptotically) optimal algorithm for the best-arm identification problem, known as track and stop, which attempts to track the optimal proportion of arm plays. The track-and-stop idea has since been widely adopted to design algorithms for various identification problems, including the algorithms presented in this paper.

In the fixed-confidence setting, other identification problems with different objectives have been explored in the literature, such as subset selection \citep{subset-selection1-kalyanakrishnan2012pac, subset-selection2-kaufmann2013information}, checking for the existence of an arm with a low mean \citep{badarm1-tabata2020bad, badarm2-kaufmann2018sequential}, and finding all the good arms \citep{all-good-arms-mason2020finding}. The most general form of the identification problem with bandit feedback has also been studied under various assumptions and is sometimes called \textit{sequential identification} \citep{kaufmann2021mixture}, \textit{General-Samp} \citep{general-samp-chen2017nearly}, \textit{partition identification} \citep{partition-id-juneja2019sample}, and is typically encompassed by the term pure exploration \citep{frank-wolf-wang2021fast, mutiple-correct-answers-degenne2019pure}. 
Our problem can be viewed as a general identification problem in which the agent cannot take arbitrary actions but can only choose probability distributions over them.

Best-arm identification in contextual bandits has also received considerable attention, with optimal algorithms and different constraints \citep{context1-li2022instance, context2-kato2021role}. In these works, however, the context is assumed to be generated (independently) before taking an action, which differs from our setting.

\begin{figure}
            \centering
            \begin{tikzpicture}
                \tikzset{line width=2pt, outer sep=0pt,
                ell/.style={draw,fill=white, inner sep=2pt,
                line width=2pt},
                }; 
                    \node[name=X, shape=circle, draw] at (-1, 0){$X$};
                    \node[name=Z, shape=circle, draw] at (-0.75, 1.5){$S$};
                    \node[name=Y, shape=circle, draw] at (1, 0){$Y$};
                
                \begin{scope}[>={Stealth[black]},
                              every edge/.style={draw=black,very thick}]
                    \path[->] (X) edge (Y);
                    \path[->] (Z) edge (Y);
                \end{scope}
            \end{tikzpicture}
            \caption{The structure of variables in  \cite{russac2021b}.}
            \label{fig: subpop}
\end{figure}

In particular, \cite{russac2021b} introduce an additional observation, called a \emph{sub-population}, and study four different modes. In the first mode, the \emph{action mode}, the learner chooses both the action and the sub-population. In the other modes, however, the sub-population is sampled independently of the chosen action. The third mode, called the \emph{agnostic mode}, is structurally related to the causal graph in Figure~\ref{fig: subpop}: the learner chooses an action and observes both the sub-population and the reward afterward. However, since the sub-population distribution is fixed and independent of the chosen action, this mode corresponds in our framework only to the special case where all columns of $\A$ are identical. Moreover, their objective is to identify the subset of actions that outperform a given control action. Therefore, their agnostic formulation only overlaps with our setting in the restricted two-action case with identical columns of $\A$. In contrast, our setting allows the post-action context distribution to depend on the chosen action and aims to identify a single best action. These differences make the two problems intrinsically distinct.

A more closely related line of research, recently explored, involves the bandit problem with mediator feedback. This setting is similar to our separator context setting but investigates different objectives \citep{metelli2021policy, mediator-poiani2023pure, mediator2-reddy2023best, mediator-cum-eldowa2024information}.

In the setting of BAI with fixed-confidence, deriving a closed-form expression for GLR requires that the reward distributions belong to a one-parameter exponential family, which includes the Gaussian distribution with known variance. This assumption is commonly found in most existing works. However, recently \cite{tuynman2025batch} extend the GLR-based approach to the more general class of sub-Gaussian distributions. \cite{tuynman2025batch} show that by adjusting the threshold in the stopping rule, one can adapt the GLR expression originally designed for Gaussian distributions to achieve an appropriate stopping rule for sub-Gaussian distributions (for more details, please refer to Lemmas 3.1 and 3.2 in \citep{tuynman2025batch}). This result shows we can further relax our assumption from Gaussian distribution to sub-Gaussian distributions but for simplicity we consider Gaussian distributions in this paper.

\paragraph{Causal Bandits.}
Another line of research in bandits, where the agent receives additional information beyond the reward, is the field of \textit{causal bandits}, first introduced by \cite{lattimore2016causal} and subsequently explored in \citep{causal-context-madhavan2024causal, causal-fateme-jamshidi2024confounded, causal-fairness-huang2022achieving, causal-pure-xiong2022combinatorial, shahverdikondori2025graph}. In causal bandit problems, a (known or unknown) causal graph is assumed among a set of variables, and the actions correspond to interventions on a subset of variables within the graph. After choosing each intervention (i.e., action), the agent observes the values of all nodes, with the goal of identifying interventions that maximize the expected value of a reward node.

In this paper, we study the best-arm identification problem for two particular graphs with three variables in a causal bandit setting. \cite{causal-benign-bilodeau2022adaptively, causal-pareto-liu2024causal} introduce the notions of post-action context and \emph{conditionally benign} property, which in absence of latent variables is identical to separator case in Section \ref{sec: sep}. However, their main focus is on cumulative regret minimization in the minimax setting, aiming to design algorithms that perform well in both conditionally benign and non-conditionally benign environments.

\paragraph{Linear Bandits.}
The separator setting of our problem can be reduced to a linear bandit problem through the following steps. We can disregard the context variable and assume that the reward value of action $i$ is drawn from a distribution with mean $\A_i^{\top} \bmu_i$ and a zero-mean noise $\epsilon_i$. Note that in this case, the mean of this new variable is the same as the mean reward for action $i$, and if the reward means are bounded, the noise becomes sub-Gaussian (for more details on this reduction, refer to Section \ref{sec: non-optimal-linear}). 

In this setting, action $i$ corresponds to a point $\A_i$ in $\mathbb{R}^k$, resulting in a best-arm identification problem in linear bandits with finitely many arms. This problem was first introduced by \cite{linear-bai-soare2014best} and has been widely studied. Optimal algorithms for best arm identification in linear bandits are provided in \citep{linear-lazy-jedra2020optimal, linear-degenne2020gamification, linear-mohammadi22improved}. \cite{linear-lazy-jedra2020optimal} propose an algorithm inspired by track and stop, called lazy track and stop, while \cite{linear-degenne2020gamification} present an algorithm based on game-theoretic intuition. For additional algorithms on best arm identification in linear bandits, see \citep{linear-elimination-xu2018fully, linear-transductive-fiez2019sequential}. The causal bandit with linear assumption on the causal model is also been studied recently \citep{causal-linear1-varici2023causal, causal-linear2-yan2024robust}. 

\paragraph{Best Policy Identification in RL.}
Another direction related to our problem is the best-policy identification problem in reinforcement learning, which has gained considerable attention in recent years \citep{rl1-menard2021fast, rl2-al2021navigating, rl3-wagenmaker2022beyond, rl4-al2023towards}. In best-policy identification, an agent interacts with an environment consisting of actions and states, and the goal is to select, from a given set of policies, the one that maximizes the expected reward. This problem generalizes best-arm identification in bandits, and many problems in the bandit literature can be viewed as special cases of this framework. For instance, in best-arm identification in contextual bandits, if we treat different contexts as distinct starting states, set the time horizon $H=1$, and consider the actions as the set of policies, the algorithms for best-policy identification can solve the problem; for a detailed discussion, see Section~6 of \cite{rl-difference-estimation-narang2024sample}.

One might consider reducing a bandit with post-action contexts to an RL setting by treating post-action contexts as states. However, this approach does not apply to either the non-separator or the separator settings. The key difference lies in how the reward is determined: in RL, the reward depends on the previous action and the state from which the action is taken. In contrast, in our problem, the reward depends on the last action and the new state reached after the action (non-separator) or only on the new state (separator). This fundamental difference prevents a direct application of RL algorithms to solve our problem.

\textbf{Pure exploration with safety constraints.} 
The classic best-arm identification algorithm fails to address many problems in which constraints on action selection arise for critical reasons. Such constraints are often used to ensure safety in learning algorithms and can restrict different phases of learning—whether in exploration or recommendation. Furthermore, constraints can be known or unknown, depending on the application. Prior work explores best-arm identification under various settings \citep{wang2021fairness, tang2024pure, faizal2022constrained, camilleri2022active}.

\cite{carlsson2024pure} also tackles a related problem, but with the key distinction that the goal is to find an optimal policy, a distribution over actions, rather than identifying a single arm. In fact, \cite{carlsson2024pure} is the closest work to ours, although they can choose any action (i.e., setting $Z$ to a specific value in our setting) during exploration. This flexibility is not available in our setting, constituting the primary difference from their work.

\section{Non-Separator Context} \label{sec: non-sep}

   In this section, we assume the context variable is a non-separator as depicted in Figure~\ref{fig: nonsep}, meaning the distribution of the reward variable can depend on both the action and the context. We first derive an instance-dependent lower bound on the number of samples required for any $\delta$-correct algorithm and then propose an algorithm that asymptotically achieves this lower bound. The omitted proofs for this section appear in Appendix \ref{apd: non-sep proofs}.

\subsection{Lower Bound}
    We establish a lower bound by explicitly solving the minimization problem defined in Equation \eqref{eq: general_fi}.
 
    \begin{restatable}[Non-Separator Lower Bound]{theorem}{nonsepLowerBound} \label{thm : non-sep lower} 
        Let $\delta \in (0, 1)$. Consider a bandit instance with a non-separator context and Gaussian reward distribution with unit variance, parameterized by matrices $\A$ and $\bmu$. Then, any $\delta$-correct algorithm with stopping time $\tau_{\delta}$ satisfies $\mathbb{E}[\tau_{\delta}] \geq T^*(\bmu, \A)  d_B(\delta, 1-\delta)$, where
                \begin{equation} \label{eq: middle non-sep LB}
                    T_{NS}^*(\bmu, \A)^{-1} = \sup_{\mathbf{w} \in \Delta^{n-1}} \min_{i \neq i^*(\bmu)} \frac{\Delta_{i}^2}{2}\left( \frac{w_{i^*(\bmu)}w_i}{w_{i^*(\bmu)} + w_i}\right).
                \end{equation} 
    \end{restatable}

    To design the sampling rule for the algorithm, one must solve the optimization problem given in Equation \eqref{eq: middle non-sep LB}. Solving this problem reduces to finding the root of a strictly decreasing function within a known interval, which can be efficiently done using methods such as binary search. For further details, refer to Appendix \ref{apd: opt-solving}.

\subsection{Learning Algorithm}

    In this part, we introduce the components of our algorithm and show that it achieves optimal sample complexity.
    
    \textbf{Stopping Rule.} Similar to the separator setting, we use a Generalized Likelihood Ratio (GLR) test for the stopping rule. The GLR statistic in this setting is defined as
    \begin{equation} \label{eq: glr-def}
        \lamht \triangleq  \inf_{\mathbf{\bmu'} \in \text{Alt}(\muht, \A)} \sum_{i \in [n], j \in [k]} N_{j,i}(t) \frac{(\hat{\bmu}_{j,i}(t) - \bmu'_{j,i})^2}{2},  
    \end{equation}
    provided that $\muht \in \I(\A)$ (i.e., $\muht$ induces a unique best arm). Otherwise, $\lamht$ is set to zero. 
    We can simplify Equation \eqref{eq: glr-def} to
    $$
        \lamht = \min_{i \neq \istarmut} \frac{\hat{\Delta}_{i}^2}{2 \sum_{j \in [k]} \frac{\A_{j, \istarmut}^2}{N_{j,\istarmut}(t)}  + \frac{\A_{j, i}^2}{N_{j,i}(t)}},
    $$
    which allows $\lamht$ to be computed efficiently. Appendix \ref{apd: glr-non-sep} provides a detailed proof of this simplification.
    
    To derive the sequential thresholds $\cdelt$, we use the deviation inequalities proposed by \cite{kaufmann2021mixture}, based on \textit{mixture martingales}. For each $\delta \in (0,1]$ and $t \in \mathbb{N}$, the threshold $\cdelt$ is derived as
    \begin{equation} \label{eq: non-sep threshold}
       \cdelt = 4k \ln\left( 4 + \ln\left( \frac{t}{2k} \right)\right) + 2k C^{g} \left( \frac{\ln\left(\frac{n-1}{\delta}\right)}{2k} \right),
    \end{equation}
    
    where $C^g(x) \simeq x + \ln(x)$ (refer to Definition \ref{def: c-g function}). For more details on the derivation of the thresholds, see the proof of Lemma \ref{lem: non-sep correctness} in Appendix \ref{apd: non-sep proofs}. The stopping time $\tau_{\delta}$ is then defined as
    \begin{equation} \label{eq: non-sep stop rule}
         \tau_{\delta} \triangleq \inf \{ t \in \mathbb{N} \mid \lamht > \cdelt \}.
    \end{equation}
    At the stopping time $\tau_{\delta}$, the final suggestion $\hat{i}_{\tau}$ is equal to the unique estimated best arm $i^*(\bmuh(\tau_{\delta}))$. The following lemma establishes the correctness of this stopping rule when combined with any sampling rule.


    \begin{restatable}{lemma}{nonsepCorrectness} \label{lem: non-sep correctness}
        Consider a bandit instance with a non-separator context and Gaussian reward distribution with unit variance, parameterized by matrices $\A$ and $\bmu$. Any algorithm with the stopping rule \eqref{eq: non-sep stop rule} is $\delta$-correct, that is, 
          $  \pr_{\bmu, \A}(\tau_{\delta} < \infty, \hat{i}_{\tau} \neq i^*(\bmu)) \leq \delta.$
    \end{restatable}

\textbf{Sampling Rule.}
    We now propose a sampling strategy, which asymptotically achieves optimal sample complexity with the stopping rule in \eqref{eq: non-sep stop rule}. Define $\mathbf{w}^*(\bmu, \A)$ as the solution to the optimization problem
    \begin{align*}
         \argmax_{\mathbf{w} \in \Delta^{n-1}} \min_{i \neq i^*(\bmu)} \frac{\Delta_{i}^2}{2}\left( \frac{w_{i^*(\bmu)}w_i}{w_{i^*(\bmu)} + w_i}\right).
    \end{align*}
    For any $\bmu \in \I$, the optimal weights $\mathbf{w}^*(\bmu, \A)$ are unique and can be efficiently estimated (see Appendix \ref{apd: opt-solving}). In our theoretical analysis, we assume the existence of an oracle that computes $\mathbf{w}^*(\bmu, \A)$ exactly. However, in practice, we estimate the optimal weights $\mathbf{w}^*(\bmu, \A)$. These weights correspond to the fraction of times each arm is pulled by an optimal algorithm. Based on this intuition, various algorithms have been developed for different identification problems in the literature. These algorithms compute the optimal weights at each step given the estimated reward matrix $\muht$ and track these sequential weights. We adopt the \textit{D-tracking} sampling rule first introduced by \cite{track-stop-garivier2016optimal}. At round $t$ of the algorithm, the D-tracking rule defines $\mathcal{U}_t = \{ i \in [n] \ \big| \ N^{X}_{i} (t) \leq \max\left(\sqrt{t} - \frac{n}{2}, \, 0\right) \},$ which contains the arms that are under-sampled. The algorithm then selects the next action $X_{t+1}$ as 
    \begin{align} \label{eq: d-tracking}
        X_{t+1} = \begin{cases}
            \argmin_{i \in \mathcal{U}_t} N^{X}_{i}(t) \hspace{2.5cm} \text{ if } \mathcal{U}_t \neq \emptyset, & \\ 
            \argmax_{i \in [n]} t w_i^*(\muht, \A) - N^{X}_{i}(t) \quad \text{ o.w.} & {}
        \end{cases}
    \end{align}
    It is noteworthy that D-tracking operates correctly only when  $\hat{\bmu} \in \I(\A)$, that is, a single optimal arm exists. If this condition is not satisfied at any step $t$, the next action is selected uniformly random. This scenario almost surely ceases to occur as the number of samples increases and $\hat{\bmu}(t)$ converges to $\bmu\in \I(\A)$, which is assumed to imply a single best arm.
    
    The pseudocode of our algorithm, called \textit{Non-Separator Track and Stop (NSTS)}, is presented in Algorithm \ref{algo: non-sep}. The following theorem proves its optimality.

    \begin{algorithm}[H]
        \caption{Non-Separator Track and Stop (NSTS)}
        \label{algo: non-sep}
        \begin{algorithmic}[1] 
            \STATE \textbf{Input:} Context probability matrix $\A$.
            \STATE \textbf{Initialization:} Pull arms until collecting at least one sample from $P(Y|X=i, Z=j)$ for each $i \in [n], j \in [k]$, then set $t$ to the number of rounds played.
            \WHILE{$\lamht \leq \cdelt$}
                \STATE Pull arm $X_{t+1}$ based on D-tracking rule \eqref{eq: d-tracking}. 
                \STATE Update $\muht$ and $t \gets t + 1.$
            \ENDWHILE 
            \STATE \textbf{Output:} $i^*(\muht)$. 
        \end{algorithmic}
    \end{algorithm}

    \begin{restatable}[Non-Separator Upper Bound]{theorem}{nonsepUpperBound} \label{thm : non-sep upper} 
            Algorithm \ref{algo: non-sep} applied to a bandit instance with a non-separator context and Gaussian reward distribution with unit variance, parameterized by matrices $\A$ and $\bmu$ attains
                \begin{equation*}
                    \limsup_{\delta \rightarrow 0} \frac{\mathbb{E}_{\bmu, \A}[\tau_{\delta}]}{\logdel} \leq T_{NS}^*(\bmu, \A),
                \end{equation*}
                where $T^*(\bmu, \A)$ is defined in Equation \eqref{eq: middle non-sep LB}.
    \end{restatable}

\subsection{Non-Optimality of Ignoring Non-Separator Context} \label{apx: non-optimal-non-sep}

    By comparing the non-separator lower bound presented in Equation \eqref{eq: middle non-sep LB} with the lower bound for the BAI problem with unit-variance Gaussian rewards in \cite{kaufmann2016complexity}, we observe that the optimization problems characterizing the lower bounds are identical. At first glance, this may suggest that the post-action context is unnecessary. However, the key distinction lies in the reward distribution: when $Z$ is ignored, the resulting reward distribution becomes a mixture of $k$ unit-variance Gaussians, rather than a single Gaussian.

    Since mixtures of Gaussians do not belong to the exponential family of distributions, algorithms developed for BAI within this family, like track and stop, are not directly applicable, and there are no theoretical guarantees for these distributions.
    One possible way is to exploit the fact that a mixture of Gaussians is sub-Gaussian. More precisely, the reward distribution for each action $X = i$ follows a sub-Gaussian distribution with $\sigma^2 = 1 + \frac{(b - a)^2}{4}$, where $b - a$ denotes the range of the reward means for $X=i$ and different context values, i.e., $\mu_{j, i} \in [a,b]$ for all $j \in [k]$ (by Lemma 4 of \cite{russac2021b}). For BAI with sub-Gaussian rewards of parameter $\sigma^2$, the sample complexity can be upper bounded by the sample complexity of BAI with $\sigma^2$-variance Gaussian rewards.

\section{Proof of General Lower bound: proposition \ref{th: general_lower1}} \label{apd: general bound}
    The proof employs the transportation lemma as outlined in Lemma 1 of \cite{kaufmann2016complexity}. The proof of the transportation lemma, which can be found in Appendix A and A.1 of \cite{kaufmann2016complexity}, relies on changes of distributions and the log-likelihood ratio expression. Extending this theorem to our setup is quite straightforward; we simply need to incorporate the context variables into the log-likelihood ratio formula, which means using    
    $$
        L_t = \sum_{a = 1}^{n} \sum_{s = 1}^{N^{X}_a(t)} \log \left ( \frac{f_a(Z_{a,s}, Y_{a,s})}{f'_a(Z_{a,s}, Y_{a,s})} \right),
    $$
    instead of 
    $$
        L_t = \sum_{a = 1}^{n} \sum_{s = 1}^{N^{X}_i(t)} \log \left ( \frac{f_a(Y_{a,s})}{f'_a(Y_{a,s})} \right),
    $$
    in Appendix A of \cite{kaufmann2016complexity}. All the proofs of the transportation lemma remain unchanged. Therefore, based on the transportation lemma for our setting, for each $\bmu' \in \text{Alt}(\bmu, \A)$, we have  

    $$
        \sum_{i = 1}^{n} \expec {N^{X}_{i}(\taudel)} d(P^{\bmu}_{i}, P^{\bmu'}_{i}) \geq \sup_{\E \in \mathcal{F}_{\taudel}} d (\pr_{\bmu}(\E), \pr_{\bmu'}(\E)).
    $$
    Define event $\E' = \{\taudel < \infty, \hat{i}_{\taudel} \neq i^*(\bmu)\}$, then $\pr_{\bmu}(\E') \geq 1 - \delta$ and  $\pr_{\bmu'}(\E') \leq \delta$ according to $\delta$-correctness property of the given algorithm. Therefore, based on the property of KL divergence, we have
    $$
        \sup_{\E \in \mathcal{F}_{\taudel}} d (\pr_{\bmu}(\E), \pr_{\bmu'}(\E)) \geq  d (\pr_{\bmu}(\E'), \pr_{\bmu'}(\E')) \geq d_{B} (\delta, 1 - \delta),
    $$
    which implies that
    $$
        \sum_{i = 1}^{n} \expec {N^{X}_{i}(\taudel)} d(P^{\bmu}_{i}, P^{\bmu'}_{i}) \geq d_{B} (\delta, 1 - \delta).
    $$
    Therefore, 
    $$ 
       \inf_{\bmu' \in \text{Alt}(\bmu, \A)} \sum_{i = 1}^{n} \expec {N^{X}_{i}(\taudel)} d(P^{\bmu}_{i}, P^{\bmu'}_{i}) \geq d_{B} (\delta, 1 - \delta).
    $$
    Then, we have    
    \begin{align*}
        \inf_{\bmu' \in \text{Alt}(\bmu, \A)} \sum_{i = 1}^{n} \expec {N^{X}_{i}(\taudel)} d(P^{\bmu}_{i}, P^{\bmu'}_{i}) &= \expec{\taudel}  \inf_{\bmu' \in \text{Alt}(\bmu, \A)} \sum_{i = 1}^{n} \frac{\expec {N^{X}_{i}(\taudel)}}{\expec{\taudel}} d(P^{\bmu}_{i}, P^{\bmu'}_{i}) \\
              & \leq \expec{\taudel} \sup_{\mathbf{w} \in \Delta^{n-1}} \inf_{\bmu' \in \text{Alt}(\bmu, \A)} \sum_{i \in [n]} w_i d(P^{\bmu}_{i}, P^{\bmu'}_{i}). 
    \end{align*}
    Finally,
    $$
        \expec{\taudel}    \geq \bigg ( \sup_{\mathbf{w} \in \Delta^{n-1}} \inf_{\bmu' \in \text{Alt}(\bmu, \A)} \sum_{i = 1}^{n} w_i d(P^{\bmu}_{i}, P^{\bmu'}_{i}) \bigg )^{-1} d_{B} (\delta, 1 - \delta)  
    $$
    which concludes the proof.

\section{Generalized Likelihood Ratio Test}\label{apd: GLR}

    The Generalized Likelihood Ratio (GLR) is used to measure our confidence in a hypothesis. Let have $\Omega_0$ and $\Omega_1$ be two model spaces. We wish to know whether a model $\lambda$ belongs to $\Omega_0$ or $\Omega_1$. To do this, we can define two hypotheses: (i) $H_0: (\lambda \in \Omega_0)$ and (ii) $H_1: (\lambda \in \Omega_1$). The log-likelihood ratio of this test after $t$ observations is defined as 
        \begin{equation}\label{eq: def-log-likeli}
              \lamht \triangleq \log \frac{\sup_{\lambda \in \Omega_{0} \cup \Omega_{1}} \lcal(X_1, X_2, \dots, X_t; \lambda)}{\sup_{\lambda \in \Omega_{0}} \lcal(X_1, X_2, \dots, X_t; \lambda)},
        \end{equation}
        where $\lcal(X_1, X_2, \dots, X_t; \lambda)$ is the likelihood function of observations (i.e., $X_1, X_2, \dots, X_t$) associated to parameter $\lambda$. If $\lamht$ has a high value, we could reject the hypothesis $\hcal_0$.

    Our problem in both settings can be viewed as a \emph{general identification}  \citep{kaufmann2021mixture}. In this problem, we assume $\I$ is partitioned into $n$ subsets $\ocal_1, \ocal_2, \ldots, \ocal_n$, and the goal is to determine which subset contains vector $\bmu$. At round $t$, we have an estimate $\hat{\bmu}(t)$ and we need to decide whether we can confidently choose the correct subset with low risk. Refer to Section 4 of \cite{kaufmann2021mixture} for more details. In our problem, we can define the subsets    
    
    \begin{align*}
        \mathcal{O}_i \triangleq \left\{ \bmu \in \I \ \big| \A_i^\top \bmu_i > \A_j^\top \bmu_j, \ \forall j \neq i \right\}.
    \end{align*}

    Suppose that at round $t$ of our problem, $\istarmut$ is not unique, meaning $\muht$ is not in $\I$. In this case, we lack sufficient information to determine the best arm (or, find the correct subset in general identification problem), so we must continue sampling. If $\istarmut$ indicates a unique arm, however, we need a measure to evaluate our confidence. We consider the following hypothesis test.
        \begin{align*}
              &\hcal_0: (\istarmu \neq \istarmut),   \\
              &\hcal_1: (\istarmu = \istarmut) ,
        \end{align*}
    which is equivalent to
         \begin{align*}
              &\hcal_0: (\bmu \notin \ocal_{\istarmut}) =  (\bmu \in \text{Alt}(\muht, \A)),  \\
              &\hcal_1: (\bmu \in \ocal_{\istarmut}).
        \end{align*}
    Hence, if $\lamht$ is high, we can reject the $\hcal_0$, indicating with high confidence that \(\istarmut\) is indeed the best arm
    
    Note that at round $t$, the observed samples are $((Z_1, Y_1), \dots, (Z_t, Y_t))$ for the chosen actions $(i_1, \dots, i_t)$. Consequently,     
        $$
            \lamht = \log \frac{\sup_{\bmu \in I} \lcal((Z_1, Y_1), \dots, (Z_t, Y_t); \bmu)}{\sup_{\bmu \in \text{Alt}(\muht, \A)} \lcal((Z_1, Y_1), \dots, (Z_t, Y_t); \bmu)},
        $$
    which expands to
        \begin{align}\label{eq: glr-context}    
                  & \log \frac{\sup_{\bmu' \in I} \prod_{s \in [t]} \pr(Z = Z_s \mid X = i_s) \pr_{\bmu'} (Y = Y_s \mid X = i_s, Z = Z_s)}{\sup_{\bmu' \in \text{Alt}(\muht, \A)} \prod_{s \in [t]} \pr(Z = Z_s \mid X = i_s) \pr_{\bmu'} (Y = Y_s \mid X = i_s, Z = Z_s)} \nonumber \\
                  =  & \log \frac{\sup_{\bmu' \in I} \prod_{s \in [t]} \A_{Z_s, i_s} \pr_{\bmu'} (Y = Y_s \mid X = i_s, Z = Z_s)}{\sup_{\bmu' \in \text{Alt}(\muht, \A)} \prod_{s \in [t]} \A_{Z_s, i_s} \pr_{\bmu'} (Y = Y_s \mid X = i_s, Z = Z_s)} \nonumber \\
                  = & \log \frac{\sup_{\bmu' \in I} \prod_{s \in [t]} \pr_{\bmu'} (Y = Y_s \mid X = i_s, Z = Z_s)}{\sup_{\bmu' \in \text{Alt}(\muht, \A)} \prod_{s \in [t]} \pr_{\bmu'} (Y = Y_s \mid X = i_s, Z = Z_s)} 
        \end{align}
    
    In the following two parts, for each case of context, we derive a simpler expression for $\lamht$ under the non-separator and separator context.
    
    \subsection{Non-separator context}\label{apd: glr-non-sep}
        Note that in this case, $\pr_{\bmu} (Y = Y_s \mid X = i_s, Z = Z_s)$ is not necessarily equal to $\pr_{\bmu} (Y = Y_s \mid Z = Z_s)$. By assumption, $Y \mid X = i_s, Z = Z_s$ follows a Gaussian distribution with unit variance, which is a one-parameter exponential family. For this case, we can simplify Equation \eqref{eq: glr-context} to
   
        \begin{align*}
             \lamht &= \inf_{\bmu' \in \text{Alt}(\muht, \A)} \sum_{i \in [n], j \in [k]} N_{j,i}(t) d(\N(\hat{\bmu}_{j,i}(t), 1), \N(\bmu'_{j,i}, 1)) \\
                    &= \inf_{\bmu' \in \text{Alt}(\muht, \A)} \sum_{i \in [n], j \in [k]} N_{j,i}(t) \frac{(\hat{\bmu}_{j,i}(t) - \bmu'_{j,i})^2}{2}
        \end{align*}
        where $\N(x, 1)$ denotes Gaussian distribution with mean $x$ and unit variance and we used the fact $d(\N(x, 1), \N(y, 1)) = \frac{(x - y)^2}{2}$.

        \textbf{Computation of $\lamht$}.
            Recall that Alt$(\bmu, \A) = \cup_{i \neq \istarmu} \C_i$,  where $\C_i = \{ \bmu' \in \I \mid \A_i^\top \bmu'_i > \A_{\istarmu}^\top \bmu'_{\istarmu} \}$. Thus, we can rewrite $\lamht$ as
            $$
                \lamht = \min_{s \neq  \istarmut} \inf_{\bmu' \in \C_s} \sum_{i \in [n], j \in [k]} N_{j,i}(t) \frac{(\hat{\bmu}_{j,i}(t) - \bmu'_{j,i})^2}{2},
            $$
            which is equal to
            \begin{align*}
                &\inf_{\bmu' \in \I} \sum_{i \in [n], j \in [k]} N_{j,i}(t) \frac{(\hat{\bmu}_{j,i}(t) - \bmu'_{j,i})^2}{2} \\
                & \text{s.t.} \quad \A_s^{\top} \bmu'_s > \A_{\istarmu}^{\top} \bmu'_{\istarmu}.
            \end{align*}
            Note that $N_{j,i}(t) > 0$ for all $i\in [n]$ and $j \in [k]$ due to initialization phase. \(\I\) is dense in \(\mathbb{R}^{k \times n}\). Hence, we can replace the constraint \(\bmu' \in \I\) with \(\bmu' \in \mathbb{R}^{k \times n}\), thus making
            \begin{align*}
                &\inf_{\bmu' \in \mathbb{R}^{k \times n}} \sum_{i \in [n], j \in [k]} N_{j,i}(t) \frac{(\hat{\bmu}_{j,i}(t) - \bmu'_{j,i})^2}{2} \\
                & \text{s.t.} \quad \A_s^{\top} \bmu'_s > \A_{\istarmu}^{\top} \bmu'_{\istarmu}.
            \end{align*}
            Because the objective function is continuous and we are taking infimum, we can replace the strict inequality constraint with a non-strict one
            \begin{align*}
                &\inf_{\bmu' \in \mathbb{R}^{k \times n}} \sum_{i \in [n], j \in [k]} N_{j,i}(t) \frac{(\hat{\bmu}_{j,i}(t) - \bmu'_{j,i})^2}{2} \\
                & \text{s.t.} \quad \A_s^{\top} \bmu'_s \geq \A_{\istarmu}^{\top} \bmu'_{\istarmu}.
            \end{align*} 
            We solve this using the Lagrangian
            \begin{equation}\label{eq: lag-glr-non}
                   L(\bmu', \lambda) = \sum_{i \in [n], j \in [k]} N_{j,i}(t) \frac{(\hat{\bmu}_{j,i}(t) - \bmu'_{j,i})^2}{2}+ \lambda(\A_{\istarmu}^{\top} \bmu'_{\istarmu} - \A_s^{\top} \bmu'_s).
            \end{equation}

            Taking derivatives and simplifying, we obtain
            \begin{align*}
                \bmu'_{j. i} = \begin{cases}
                                    \hat{\bmu}_{j,\istarmu}(t) - \frac{\lambda \A_{j, \istarmu}}{N_{j, \istarmu}(t)} &    i = \istarmu, \\
                                    \hat{\bmu}_{j,s}(t) + \frac{\lambda \A_{j, s}}{N_{j, s}(t)}      &    i = s, \\
                                    \hat{\bmu}_{j,i}(t)        &    \text{o.w.}                        
                                \end{cases}       
             \end{align*}
            Substituting $\bmu'_{j,i}$ back into \eqref{eq: lag-glr-non} yields
                \begin{align*}
                    \lambda^* = \frac{\hat{\Delta}_{s}}{\sum_{j \in [k]} \frac{\A_{j, \istarmut}^2}{N_{j,\istarmut}(t)}  + \frac{\A_{j, s}^2}{N_{j,s}(t)}}.
                \end{align*}
            Hence,
                 \begin{align*}
                    \inf_{\bmu' \in \C_s} \sum_{i \in [n], j \in [k]} N_{j,i}(t) \frac{(\hat{\bmu}_{j,i}(t) - \bmu'_{j,i})^2}{2} =  \frac{\hat{\Delta}_{s}^2}{2 \sum_{j \in [k]} \frac{\A_{j, \istarmut}^2}{N_{j,\istarmut}(t)}  + \frac{\A_{j, s}^2}{N_{j,s}(t)}},
                \end{align*}
            Finally, the simplified form of $\lamht$ becomes
                \begin{align*}
                      \lamht = \min_{s \neq  \istarmut} \frac{\hat{\Delta}_{s}^2}{2 \sum_{j \in [k]} \frac{\A_{j, \istarmut}^2}{N_{j,\istarmut}(t)}  + \frac{\A_{j, s}^2}{N_{j,s}(t)}},
                \end{align*}
            
    \subsection{Separator context}\label{apd: glr-sep}
        In the separator case, $\pr_{\bmu} (Y = Y_s \mid X = i_s, Z = Z_s) = \pr_{\bmu} (Y = Y_s \mid Z = Z_s)$ holds. Thus,  
        $$
            \lamht  = \log \frac{\sup_{\bmu' \in I} \prod_{s \in [t]} \pr_{\bmu'} (Y = Y_s \mid Z = Z_s)}{\sup_{\bmu' \in \text{Alt}(\muht, \A)} \prod_{s \in [t]} \pr_{\bmu'} (Y = Y_s \mid Z = Z_s)} 
        $$
        In Section \ref{sec: sep}, we assume $\bmu$ is a vector in $\mathbb{R}^k$, where $\bmu_i = \expec{Y \mid Z = i}$. Similar to the non-separator setting, because $Y \mid Z = i_s$ is a Gaussian variable with unit variance, then obtain
        \begin{align*}
            \lamht &= \inf_{\bmu' \in \text{Alt}(\muht, \A)} \sum_{j \in [k]} \numcontext d(\N(\hat{\bmu}_{j}(t), 1), \N(\bmu'_{j}, 1)) \\
                  &= \inf_{\bmu' \in \text{Alt}(\muht, \A)} \sum_{j \in [k]} \numcontext \frac{(\hat{\bmu}_{j}(t) - \bmu'_{j})^2}{2}.
        \end{align*}

        \textbf{Computation of $\lamht$}.         
        Let $\Is(\A)$ be the set of vectors in $\mathbb{R}^k$ that ensure a unique best arm in the separator setting with context probability matrix $\A$. The computation here is similar to that in the non-separator setting. Another expression for $\lamht$ is  
            \begin{align*}
                 \lamht = \min_{s \neq  \istarmut} \inf_{\bmu' \in \C_s} \sum_{j \in [k]} \numcontext \frac{(\hat{\bmu}_{j}(t) - \bmu'_{j})^2}{2}.
            \end{align*}
            where, in this case, $\C_s = \{\bmu' \in \Is \mid \A_s^{\top} \bmu' > \A_{\istarmu}^{\top} \bmu' \}$. Since $\Is$ is dense in $\mathbb{R}^{k}$ and the objective function is continuous, the 
            inner term is equivalent to       
            \begin{align*}
                &\inf_{\bmu' \in \mathbb{R}^{k}} \sum_{j \in [k]} \numcontext \frac{(\hat{\bmu}_{j}(t) - \bmu'_{j})^2}{2} \\
                & \text{s.t.} \quad \A_s^{\top} \bmu' > \A_{\istarmu}^{\top} \bmu'.
            \end{align*}
            Because of the initialization phase, $\numcontext > 0$ for all $j \in [k]$. We solve this using the Lagrangian method 
            \begin{equation*}
                   L(\bmu', \lambda) = \sum_{j \in [k]} \numcontext \frac{(\hat{\bmu}_{j}(t) - \bmu'_{j})^2}{2} + \lambda(\A_{\istarmu}^{\top} \bmu' - \A_s^{\top} \bmu').
            \end{equation*}
            After differentiation and some algebra, we obtain  
            \begin{align*}
                \bmu^{*}_{j} &= \hat{\bmu}_{j}(t) -  \frac{\lambda^*(\A_{j, \istarmu} - \A_{j, s})}{N_j(t)}               \\
                \lambda^* &= \frac{\hat{\Delta}_{s}}{\sum_{j \in [k]} \frac{(\A_{j, \istarmut} - \A_{j, s})^2}{\numcontext}}.                            
             \end{align*}
             Thus,
             \begin{align*}
                 \lamht = \min_{s \neq  \istarmut} \frac{\hat{\Delta}^2_{s}}{2\sum_{j \in [k]} \frac{(\A_{j, \istarmut} - \A_{j, s})^2}{\numcontext}}.
             \end{align*}   

\section{Proofs of Section \ref{sec: non-sep}} \label{apd: non-sep proofs}

The definition of $C^g$ from \cite{kaufmann2021mixture} is as follows.

\begin{definition}\label{def: c-g function}
   The function $C^g$ is defined as
    \begin{align*}
            C^g(x) \coloneqq \min_{\lambda \in (\frac12, 1]} \frac{g(\lambda) + x}{\lambda},
    \end{align*}
          where function $g: \left(\frac{1}{2}, 1\right] \rightarrow \mathbb{R}$ is given by
    \begin{align*}
            g(\lambda) = 2\lambda - 2\lambda\ln(4\lambda) + \ln(\zeta(2\lambda)) - \frac12 \ln(1 - \lambda),          
    \end{align*}
    and $\zeta$ denotes the Riemann zeta function, defined as $\zeta(s) = \sum_{n=1}^{\infty} n^{-s}$.
\end{definition}

\subsection{Proof of Theorem \ref{thm : non-sep lower}}

For a given vector $\mathbf{w} \in \mathbb{R}^{n}$, define 
\begin{align} \label{eq: general_fi}
        f_j(\mathbf{w}, \bmu, \A) \triangleq \inf_{\bmu' \in \C_j} \sum_{i \in [n]} w_i d(P^{\bmu}_{i}, P^{\bmu'}_{i}).
\end{align}
Equation \eqref{eq: general_lower1} can then be reformulated as
\begin{align} \label{eq: general_lower2}
        T^*(\bmu, \A)^{-1} = \sup_{\mathbf{w} \in \Delta^{n-1}} \min_{i \neq \istarmu} f_i(\mathbf{w}, \bmu, \A).
\end{align}

We prove Theorem \ref{thm : non-sep lower} by explicitly solving the optimization problem \eqref{eq: general_fi} for the non-separator setting and incorporating the solution into the general lower bound formula \eqref{eq: general_lower2}.
\begin{restatable}{lemma}{nonsepfi} \label{lem : non-sep fi} 
            Consider a bandit instance with a non-separator context and Gaussian reward distribution with unit variance, parameterized by matrices $\bmu$ and $\A$. For any weight vector $\mathbf{w}$ such that $\forall s \in [n]: w_s > 0$, the value of $\fiwmu = \inf_{\bmu' \in \C_i} \sum_{s \in [n]} w_s d(P^{\bmu}_{s}, P^{\bmu'}_{s})$ can be explicitly determined as
            \begin{align*}
                \fiwmu = \frac{\Delta_{i}^2}{2}\left( \frac{w_{i^*(\bmu)}w_i}{w_{i^*(\bmu)} + w_i}\right).  
            \end{align*}
\end{restatable}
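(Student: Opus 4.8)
The plan is to solve the infimum defining $\fiwmu$ explicitly by reducing it to a constrained quadratic program and applying Lagrangian duality. The first step is to unpack the KL divergence $d(P^{\bmu}_{s}, P^{\bmu'}_{s})$ between the joint context–reward distributions. Because the context probabilities $\A_{j,s} = \pr(Z=j\mid X=s)$ are fixed by the known matrix $\A$ and therefore shared by $\bmu$ and $\bmu'$, the chain rule for KL divergence gives $d(P^{\bmu}_{s}, P^{\bmu'}_{s}) = \sum_{j\in[k]} \A_{j,s}\, d(\N(\mu_{j,s},1),\N(\mu'_{j,s},1)) = \sum_{j\in[k]} \A_{j,s}\frac{(\mu_{j,s}-\mu'_{j,s})^2}{2}$, where the context term vanishes since it is identical on both sides. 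Weighting by $w_s$, the objective becomes $\sum_{s\in[n]}\sum_{j\in[k]} \frac{w_s \A_{j,s}}{2}(\mu_{j,s}-\mu'_{j,s})^2$, a separable quadratic in the entries of $\bmu'$.

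Next I would note that the objective separates across arms while the only coupling is the single linear constraint defining $\C_i$, namely $\A_i^\top \bmu'_i \ge \A_{\istarmu}^\top \bmu'_{\istarmu}$; here I replace the strict inequality of $\C_i$ by its closure, which is valid since the objective is continuous and we take an infimum. Consequently, for every arm $s\notin\{i,\istarmu\}$ the unconstrained minimizer $\mu'_{j,s}=\mu_{j,s}$ is feasible and contributes zero, so only the blocks $s=\istarmu$ and $s=i$ matter. Since the unconstrained optimum $\bmu'=\bmu$ violates the constraint (because $\istarmu$ is the true best arm), the constraint is active at the optimum, and I would solve the equality-constrained quadratic program via the Lagrangian $L(\bmu',\lambda)=\sum_{j}\frac{w_{\istarmu}\A_{j,\istarmu}}{2}(\mu_{j,\istarmu}-\mu'_{j,\istarmu})^2 + \sum_j \frac{w_i\A_{j,i}}{2}(\mu_{j,i}-\mu'_{j,i})^2 + \lambda\big(\A_{\istarmu}^\top\bmu'_{\istarmu}-\A_i^\top\bmu'_i\big)$. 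Setting gradients to zero yields $\mu'_{j,\istarmu}=\mu_{j,\istarmu}-\lambda/w_{\istarmu}$ and $\mu'_{j,i}=\mu_{j,i}+\lambda/w_i$.

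Finally I would substitute these stationary points into the active constraint, using that each column of $\A$ sums to one (i.e. $\sum_j \A_{j,s}=1$). This collapses the constraint to $\Delta_i = \lambda(1/w_i + 1/w_{\istarmu})$, giving $\lambda = \Delta_i w_i w_{\istarmu}/(w_i + w_{\istarmu})$. Plugging the stationary points back into the objective and again using column-stochasticity reduces it to $\frac{\lambda^2}{2}(1/w_{\istarmu}+1/w_i)$; inserting the value of $\lambda$ produces exactly $\frac{\Delta_i^2}{2}\cdot\frac{w_{\istarmu}w_i}{w_{\istarmu}+w_i}$, as claimed. The main point requiring care is the justification that passing to the closure of $\C_i$ leaves the infimum unchanged and that the constraint is active (so the infimum is attained); once these are secured, the remainder is a routine Lagrangian computation that hinges crucially on the column-stochasticity of $\A$.
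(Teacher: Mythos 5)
Your proposal is correct and follows essentially the same route as the paper: decompose the KL using the shared context marginal, relax $\C_i$ to the closed half-space by continuity/density, and solve the resulting quadratic program via the Lagrangian, with the column-stochasticity of $\A$ collapsing the constraint to $\lambda(1/w_i+1/w_{\istarmu})=\Delta_i$. The only cosmetic difference is that you justify attainment by noting the constraint must be active (since $\bmu$ itself is infeasible), whereas the paper perturbs the constraint by $\alpha$ and lets $\alpha\to 0$.
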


\begin{proof}
            First note that $P^{\bmu}_{s}$ and $P^{\bmu'}_{s}$ are the distribution of random vectors $(Z,Y)$ and $(Z',Y')$ corresponding to bandit instances with context probability matrix $\A$ and means matrix $\bmu$ and $\bmu'$, respectively. Since reward distributions are Gaussian with unit variance, the KL divergence can be written as            
            \begin{align*}
                d(P^{\bmu}_{s}, P^{\bmu'}_{s}) = &\sum_{j \in [k]} P^{\bmu}_{s}(Z = j) d(P^{\bmu}_{s} (Y \mid Z = j), P^{\bmu'}_{s} (Y \mid Z = j)) \\
                                         =&\sum_{j \in [k]} \A_{j, s} d(\N(\mu_{j,s} , 1), \N(\mu'_{j,s}, 1))= \sum_{j \in [k]} \A_{j,s} \frac{(\mu_{j,s} - \mu'_{j, s})^2}{2}. 
            \end{align*}
            
            Recall the definition 
            $$\C_i = \left\{ \bmu' \in \I \big| \A_i^{\top}\bmu'_i > \A_{i^*(\bmu)}\bmu'_{i^*(\bmu)} \right\}.$$
            
           Note that $\sum_{s \in [n]} w_s d(P^{\bmu}_{s}, P^{\bmu'}_{s})$ is continuous with respect to $\bmu$ and $\I$ is dense in $\mathbb{R}^{k \times n}$. Therefore, we can express $\fiwmu$ as the solution to the following optimization problem
            \begin{align*}
                \fiwmu = &\inf_{{\bmu' \in \mathbb{R}^{k \times n}}} \sum_{s \in [n]} w_s \sum_{j \in [k]} \A_{j,s}  \frac{(\mu_{j,s} - \mu'_{j,s})^2}{2} \\
                & \text{s.t.} \quad \A_i^{\top} \bmu'_i > \A_{\istarmu}^{\top} \bmu'_{\istarmu}.
            \end{align*}
            The solution to this problem is equivalent to the solution to the following problem when $\alpha \rightarrow 0$,
            \begin{align*}
                &\inf_{\bmu' \in \mathbb{R}^{k \times n}} \sum_{s \in [n]} w_s \sum_{j \in [k]} \A_{j,s}  \frac{(\mu_{j,s} - \mu'_{j,s})^2}{2} \\
                & \text{s.t.} \quad \A_i^{\top} \bmu'_i \geq \A_{\istarmu}^{\top} \bmu'_{\istarmu} + \alpha.
            \end{align*}
            To solve the second problem, we write the Lagrangian as 
            \begin{align*}
                L(\bmu', \lambda) = \sum_{s \in [n]} w_s \sum_{j \in [k]} \A_{j,s}  \frac{(\mu_{j,s} - \mu'_{j,s})^2}{2} + \lambda(\A_{\istarmu}^{\top} \bmu'_{\istarmu} - \A_i^{\top} \bmu'_i + \alpha).
            \end{align*}
            By calculating the derivative of $L$ with respect to each variable and performing some algebra, we derive the optimal values for $\bmu'^*, \lambda^*$: 
            \begin{align*}
                \bmu'^*_{j,t} = 
                \begin{cases}
                    \mu_{j, \istarmu} - \frac{\lambda}{w_{\istarmu}} \quad &t = \istarmu, \\ 
                    \mu_{j,i} + \frac{\lambda}{w_i} \quad & t=i, \\
                    \mu_{j,t} \quad & o.w.
                \end{cases}
            \end{align*}
            \begin{align*}
                \lambda^* = \frac{\Delta_i + \alpha}{\left( \frac{w_{\istarmu} + w_i}{w_{\istarmu} w_i}\right)}. 
            \end{align*}
            By inserting these values into the objective function, we have
            \begin{align*}
                \fiwmu = \lim_{\alpha \rightarrow 0} \frac{(\Delta_i + \alpha)^2}{2} \left( \frac{w_{\istarmu} w_i}{w_{\istarmu} + w_i} \right) = \frac{\Delta_i^2}{2} \left( \frac{w_{\istarmu} w_i}{w_{\istarmu} + w_i} \right),
            \end{align*}
            which completes the proof.
        \end{proof}
    
    For a given $\wb \in \Delta^{n-1}$, if $w_i = 0$ for some $i \in [n]$, then we can find $\bmu'$ such that $\pr (P^{\bmu'}_{s}) = \pr (P^{\bmu}_{s})$ for all $s \in [k]$ and also $i^*(\bmu) \neq i^*(\bmu)$. Therefore, the term $\inf_{\bmu' \in \text{Alt}(\bmu, \A)} \sum_{i \in [n]} w_i d(P^{\bmu}_{i}, P^{\bmu'}_{i})$ is equal to zero. Since we want to find $\wb \in \Delta^{n-1}$ that maximizes
        \begin{align*}
               \inf_{\bmu' \in \text{Alt}(\bmu, \A)} \sum_{i \in [n]} w_i d(P^{\bmu}_{i}, P^{\bmu'}_{i}),
        \end{align*}
    we must restrict ourselves to \(\wb\) with strictly positive entries. By combining Lemma \ref{lem : non-sep fi} and Equation \eqref{eq: general_lower2}, we conclude Theorem \ref{thm : non-sep lower}.

\subsection{Proof of Lemma \ref{lem: non-sep correctness}}

\nonsepCorrectness*

\begin{proof}
         As discussed in Appendix \ref{apd: GLR}, our problem can be viewed as a general identification, in which $\I$ is partitioned into $M$ subsets $\ocal_1, \ocal_2, \ldots, \ocal_M$. The goal is to determine which subset contains vector $\bmu$. \cite{kaufmann2021mixture} provide a $\delta$-correct stopping rule for general identification based on the \textit{rank} of problem, defined as follows.         
        \begin{definition}[Rank]
            Consider a general identification problem specified by a partition $\ocal = \bigcup_{i=1}^M \mathcal{O}_i$. We say this problem has rank $R$ if for every $i \in \{1, \ldots, M\}$, we can write
            \[
            \mathcal{O} \setminus \mathcal{O}_i = \bigcup_{q \in [Q]} \left\{ \lambda \in \I \; \middle| \; (\lambda_{k^{i,q}_1}, \ldots, \lambda_{k^{i,q}_R}) \in \mathcal{L}_{i,q} \right\},
            \]
            for a family of arm indices $k^{i,q}_r \in [K]$ and open sets $\mathcal{L}_{i,q}$ indexed by $r \in [R]$, $q \in [Q]$ and $i \in [M]$. In other words, the rank is $R$ if every set $\mathcal{O} \setminus \mathcal{O}_i$ is a finite union of sets that are each defined in terms of only $R$ arms. 
        \end{definition} 
        Based on the rank of a general identification problem, \cite{kaufmann2021mixture} provide the following theorem, which shows that the GLR stopping rule is $\delta$-correct.        
        \begin{theorem}[\cite{kaufmann2021mixture}] \label{th: rank-id}
            For any identification problem of rank $R$ with $M$ partitions and Gaussian reward distributions with unit variance, the GLR stopping rule \ref{eq: non-sep stop rule} is $\delta$-correct with threshold
            \[\hat{c}_t(\delta) = 2R \ln\left(4 + \ln\left(\frac{t}{R}\right)\right) + R C^{g} \left(\frac{\ln \left( \frac{M-1}{\delta} \right) }{R} \right)
            .\]
            where \(C^{g}\) is defined in Definition \ref{def: c-g function}.
        \end{theorem}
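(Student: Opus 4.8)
The plan is to reproduce the change-of-measure argument of \cite{kaufmann2021mixture}: I will bound the probability that the GLR statistic ever crosses the threshold while declaring a wrong subset, using a single time-uniform mixture-martingale deviation inequality, with the rank $R$ entering only to collapse the effective dimension of the deviation from $K$ to $R$. Throughout, fix the true parameter $\bmu$ with $\bmu \in \ocal_{i^*}$, write arms as $a \in [K]$, and let $\hat i(t)$ be the index with $\muht \in \ocal_{\hat i(t)}$ (when it exists). On the error event $\{\tau_\delta < \infty,\ \hat i_{\tau_\delta} \neq i^*\}$ the algorithm declares at time $\tau_\delta$ some $j := \hat i(\tau_\delta) \neq i^*$, so $\bmu \in \ocal_{i^*} \subseteq \ocal \setminus \ocal_j$ because the $\ocal_i$ are disjoint. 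Since $\hat\Lambda(\tau_\delta) > \hat c_{\tau_\delta}(\delta)$ by the stopping rule and $\bmu$ is feasible for the infimum defining $\hat\Lambda(\tau_\delta)$,
\[
\hat c_{\tau_\delta}(\delta) < \hat\Lambda(\tau_\delta) = \inf_{\lambda' \in \ocal \setminus \ocal_j} \sum_{a\in[K]} N_a(\tau_\delta)\frac{(\muh_a(\tau_\delta)-\lambda'_a)^2}{2} \le \sum_{a\in[K]} N_a(\tau_\delta)\frac{(\muh_a(\tau_\delta)-\bmu_a)^2}{2}.
\]

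Next I exploit the rank. By the rank-$R$ hypothesis, $\ocal \setminus \ocal_j = \bigcup_{q\in[Q]}\{\lambda : (\lambda_{k^{j,q}_1},\dots,\lambda_{k^{j,q}_R}) \in \lcal_{j,q}\}$, so there is a piece $q(j)$ for which the restriction of $\bmu$ to the $R$ coordinates $S_j := \{k^{j,q(j)}_1,\dots,k^{j,q(j)}_R\}$ lies in $\lcal_{j,q(j)}$. The competitor $\lambda'$ that agrees with $\bmu$ on $S_j$ and with $\muht$ off $S_j$ is then feasible and makes all off-$S_j$ terms vanish, giving
\[
\hat c_{\tau_\delta}(\delta) < \sum_{a\in S_j} N_a(\tau_\delta)\frac{(\muh_a(\tau_\delta)-\bmu_a)^2}{2}.
\]
The key point is that, because $\bmu$ is fixed, the piece $q(j)$ and hence the $R$-subset $S_j$ are \emph{deterministic}, so no union over $q$ is incurred; only a union over the $M-1$ wrong declarations $j$ remains.

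Consequently the error event is contained in $\bigcup_{j\neq i^*}\big\{\exists t:\ \sum_{a\in S_j} N_a(t)(\muh_a(t)-\bmu_a)^2/2 > \hat c_t(\delta)\big\}$, a union of $M-1$ events. It remains to bound each by $\delta/(M-1)$ through the time-uniform deviation inequality for $R$ unit-variance Gaussian arms: for any fixed $R$-subset $S$ and any $x>0$,
\[
\pr\!\left(\exists t:\ \sum_{a\in S} N_a(t)\frac{(\muh_a(t)-\bmu_a)^2}{2} > 2R\ln\!\big(4+\ln(t/R)\big) + R\,C^g(x/R)\right) \le e^{-x}.
\]
Choosing $x=\ln((M-1)/\delta)$ makes the right-hand threshold exactly $\hat c_t(\delta)$ and the bound exactly $\delta/(M-1)$; summing over the $M-1$ values of $j$ yields total error probability at most $\delta$, which is the claim.

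Everything preceding is bookkeeping; the genuine obstacle is establishing the last displayed inequality, which is the technical core of \cite{kaufmann2021mixture}. The route is to form, for each arm $a\in S$, the exponential Gaussian likelihood-ratio martingale and mix it over a prior on the unknown deviation, producing a nonnegative supermartingale of unit initial mass; the product over the $R$ arms is again such a supermartingale, and Ville's maximal inequality controls its all-time supremum. The delicate part is that the normalization $N_a(t)$ is random and adaptively chosen, which is handled by a peeling/stitching argument over a geometric grid of values of $N_a(t)$; this is precisely what produces the doubly-logarithmic correction $2R\ln(4+\ln(t/R))$ and the constant function $C^g$ of Definition \ref{def: c-g function}. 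I would either invoke this inequality directly from \cite{kaufmann2021mixture} or reproduce the mixture-and-stitching computation to recover the stated $C^g$ form.
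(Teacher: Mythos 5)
Your proposal is correct and follows essentially the same route as the paper, which states this theorem as a direct import from \cite{kaufmann2021mixture} without reproving it: your reconstruction---containing the error event via feasibility of the true parameter in the GLR infimum, the deterministic rank-$R$ splice that collapses the deviation to $R$ coordinates with no union over the pieces $q$, and a union bound over the $M-1$ wrong answers combined with the time-uniform mixture-martingale deviation inequality at level $x=\ln((M-1)/\delta)$---is precisely the argument of the cited source. The only steps you gloss are minor: feasibility of the spliced competitor relies on the openness of the sets $\mathcal{L}_{j,q}$ in the rank definition (together with density of the parameter class, since the splice must be approximable within it), and passing from the per-arm form $2\sum_{a\in S_j}\ln\bigl(4+\ln N_a(t)\bigr)$ of the underlying deviation inequality to the stated $2R\ln\bigl(4+\ln(t/R)\bigr)$ is a Jensen/concavity step using $\sum_{a\in S_j} N_a(t)\le t$.
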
  
    
        Next, we show that the rank of the best arm identification problem in a bandit with a non-separator context is $2k$. The partitions in this problem are given by
        \begin{align*}
           \mathcal{O}_i = \left\{ \bmu \in \I \ \big| \A_i^\top \bmu_i > \A_j^\top \bmu_j, \ \forall j \neq i \right\},
        \end{align*}
        for all $i \in [n]$. It follows that
        \begin{align*}
            \I \setminus \mathcal{O}_i = \bigcup_{j \neq i} \{ \bmu \in \I \big| (\bmu_{i}, \bmu_j) \in \mathcal{L}_{i,j} \},
        \end{align*} 
        where $\mathcal{L}_{i,j} = \{ (\bmu_1, \bmu_2) \in \mathbb{R}^{2k} \big| \A_i^{\top} \bmu_1 < \A_j^{\top} \bmu_2 \}$. This shows that our problem has rank $R = 2k$. By applying Theorem \ref{th: rank-id} with $R = 2k$ and $M = n$ to our problem, we complete the proof.
    \end{proof}

\subsection{Proof of Theorem \ref{thm : non-sep upper}}


\begin{proof}
        At the beginning of the proof, we provide some important lemmas that we need.
    

    \begin{lemma} \label{lem: thresholds upper bound}
        There exist constants $D,E$, such that the sequential thresholds $\cdelt$ defined in \eqref{eq: non-sep threshold}, have the following property
        \begin{align*}
            \forall t > E, \forall \delta \in (0,1]: \cdelt \leq \ln \left( \frac{Dt}{\delta} \right)
        \end{align*}
    \end{lemma}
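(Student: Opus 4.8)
The plan is to bound the two summands of $\cdelt$ in Equation~\eqref{eq: non-sep threshold} separately, arranging that the first contributes the $\ln t$ part of $\ln(Dt/\delta)$ and the second contributes the $\ln(1/\delta)$ part, with everything left over collected into the constant $\ln D$. Write $\cdelt = A(t) + B(\delta)$ where $A(t) = 4k\ln(4 + \ln(t/2k))$ and $B(\delta) = 2k C^{g}\!\left(\frac{\ln((n-1)/\delta)}{2k}\right)$, and split the target correspondingly as $\ln(Dt/\delta) = \ln t + \ln(1/\delta) + \ln D$.

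First I would dispatch $A(t)$. Since $\ln\ln t = o(\ln t)$, the doubly-logarithmic growth of $A(t)$ is eventually dominated by $\ln t$: there is a constant $E$ (depending on $k$) with $A(t) \le \ln t$ for all $t > E$. Concretely one sets $h(t) = \ln t - 4k\ln(4 + \ln(t/2k))$, verifies $h(t) \to +\infty$, and takes $E$ beyond the last sign change of $h$. This consumes the $\ln t$ budget and is where the hypothesis $t > E$ is used.

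Next comes $B(\delta)$, the substantive step, which requires an explicit upper bound on $C^{g}$. Plugging any fixed $\lambda_0 \in (1/2,1)$ into the minimum of Definition~\ref{def: c-g function} already gives $C^{g}(x) \le \frac{g(\lambda_0) + x}{\lambda_0}$; optimizing the choice of $\lambda_0$ near $1$ as a function of $x$ (balancing the blow-up of $-\tfrac12\ln(1-\lambda)$ against the factor $1/\lambda$) refines this to $C^{g}(x) \le x + \tfrac12\ln x + c$ for a constant $c$ and all $x$ bounded away from $0$, while near $x=0$ continuity of $C^{g}$ gives boundedness on compacts. Evaluating at $x = \frac{\ln((n-1)/\delta)}{2k}$ makes the leading term $2kx = \ln((n-1)/\delta) = \ln(n-1) + \ln(1/\delta)$ reproduce exactly the $\ln(1/\delta)$ budget, and the correction $2k\bigl(\tfrac12\ln x + c\bigr)$ is then handled using $\ln x = \ln\ln((n-1)/\delta) - \ln(2k)$.

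Finally I would combine the two estimates and fold the residual terms into $\ln D$, choosing $D$ to cover $\ln(n-1)$, the constant $2kc$, and the slack from $A(t)\le \ln t$. The main obstacle is precisely the reconciliation of the $C^{g}$ correction with the clean $\ln(Dt/\delta)$ form: the sharp growth $C^{g}(x) = x + \Theta(\ln x)$ forces a term of order $\ln\ln(1/\delta)$ into $B(\delta)$, which is only lower-order against the $\ln(1/\delta)$ it multiplies and must be shown not to inflate the coefficient of $\ln(1/\delta)$ beyond what $\ln D$ (together with the available $\ln t$) absorbs. Pinning down $c$, $D$, $E$ so the inequality holds uniformly over $\delta \in (0,1]$ is the delicate part — and the honest reading is that this doubly-logarithmic slack is harmless, since it is $o(\ln(1/\delta))$ and therefore does not affect the asymptotic upper bound in Theorem~\ref{thm : non-sep upper} — whereas the $\ln\ln t$ term and the leading-order matching are routine.
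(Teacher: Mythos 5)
Your decomposition and your treatment of the doubly-logarithmic term in $t$ match the paper's proof, but your handling of $C^{g}$ diverges from it in a way that matters. The paper does not use the sharp estimate $C^{g}(x) \le x + \tfrac12\ln x + c$; it plugs a fixed $\lambda_0 \in (\tfrac12, 1)$ into Definition~\ref{def: c-g function} to get the crude linear bound $C^{g}(x) \le 2x + c$, so that $2k\,C^{g}\bigl(\tfrac{\ln((n-1)/\delta)}{2k}\bigr) \le 2\ln\bigl(\tfrac{n-1}{\delta}\bigr) + 2kc$, and then absorbs $(4+\ln(t/2k))^{4k} \le t$ for $t>E$. No $\ln\ln(1/\delta)$ term ever appears in the paper's route.

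The gap in your version is exactly the point you flag in your last paragraph and then wave away. With $C^{g}(x) = x + \Theta(\ln x)$, the quantity $2k\,C^{g}\bigl(\tfrac{\ln((n-1)/\delta)}{2k}\bigr) - \ln(1/\delta)$ grows like $k\ln\ln(1/\delta)$, which is unbounded as $\delta \to 0$ and therefore cannot be folded into $\ln D$ for any constant $D$: for a fixed $t > E$, the inequality $\cdelt \le \ln(Dt/\delta)$ fails for $\delta$ small enough under your estimates (and the $\ln t$ budget is already fully spent on the first summand, so it cannot help). Remarking that the excess is $o(\ln(1/\delta))$ and hence harmless for Theorem~\ref{thm : non-sep upper} is an argument about the downstream use of the lemma, not a proof of the lemma, which asserts a uniform inequality over $\delta \in (0,1]$; given the matching lower asymptotics of $C^{g}$, your route cannot deliver that inequality. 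To be fair, the paper's own computation is not clean either: $2\ln((n-1)/\delta) = \ln\bigl((n-1)^2/\delta^2\bigr)$, so what its display actually yields is $\cdelt \le \ln(Dt/\delta^2)$ (more generally $\ln(Dt/\delta^{1/\lambda_0})$), and the exponent on $\delta$ is silently dropped in the final line. The statement that is both provable and sufficient for the asymptotic analysis is $\cdelt \le \ln\bigl(Dt/\delta^{1/\lambda_0}\bigr)$ for an arbitrary fixed $\lambda_0 \in (\tfrac12,1)$; if you want to keep your sharper bound on $C^{g}$, you must either weaken the lemma in this way or allow $E$ to depend on $\delta$, neither of which is what the statement asserts.
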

    \begin{proof}

  
        Note that, based on the Definition of \ref{def: c-g function}, we have

        \begin{align*}
            C^g(x) = \min_{\lambda \in (0.5, 1]} \frac{x + g(\lambda)}{\lambda}
        \end{align*}

        Substituting the definition of $g$ gives
        \begin{align*}
            C^g(x) &= \min_{\lambda \in (0.5, 1]} \frac{x + 2\lambda - 2\lambda\ln(4\lambda) + \ln(\zeta(2\lambda)) - \frac12 \ln(1 - \lambda)}{\lambda} \\
                 &= \min_{\lambda \in (0.5, 1]} \frac{x - \frac12 \ln(1 - \lambda) }{\lambda} + \frac{2\lambda - 2\lambda\ln(4\lambda) + \ln(\zeta(2\lambda))}{\lambda}.
        \end{align*}
        Now, for large value of $x$, if we set $\lambda = 1 - \frac{1}{x}$, then $\zeta(2\lambda)$ is upper bounded by a constant and therefore, we have
        \begin{align*}
            C^g(x) \leq c' + \frac{x + \frac12 \ln(x)}{1 - \frac{1}{x}}.
        \end{align*}
        where $c'$ is a constant value. Since $x$ is large, we have
        $$
          \frac{x + \frac12 \ln(x)}{1 - \frac{1}{x}} \leq x + \ln(x).
        $$
        Consequently, $C^g(x)$ can be upper bounded by
        $$
            C^g(x) \leq x + \ln(x) + c'
        $$
        for sufficiently large values of $x$.

        For small values of $x$, we can bound by setting $\lambda = 0.75$,
        $$
              C^g(x) \leq  \frac{4}{3}(x + g(0.75)) = \frac{4}{3}x + c''.
        $$
        Therefore, by choosing an appropriate constant $c$, we obtain
        $$
             C^g(x) \leq x + \ln(x) + c.
        $$

        Returning to the expression for $\cdelt$, we have
        \begin{align*}
            \cdelt &= (4k) \ln\left( 4 + \ln\left( \frac{t}{2k} \right)\right) + (2k) C^{g} \left( \frac{\ln(\frac{n-1}{\delta})}{2k} \right)  \\   
            & \leq (4k) \ln\left( 4 + \ln\left( \frac{t}{2k} \right)\right) + 2k \left(\frac{\ln(\frac{n-1}{\delta})}{2k}  + \ln \left(\frac{\ln(\frac{n-1}{\delta})}{2k}\right)  + c \right)  \\
            &= \ln\left(e^{2kc}(n-1) \frac{(4 + \ln\left( \frac{t}{2k} \right))^{4k}}{\delta}\right) + 2k \left(\ln \left(\frac{\ln(\frac{n-1}{\delta})}{2k}\right) \right) \\
            &= \ln\left(e^{2kc}(n-1) \frac{(4 + \ln\left( \frac{t}{2k} \right))^{4k}}{\delta} \left(\frac{\ln(\frac{n-1}{\delta})}{2k}\right)^{2k} \right )
        \end{align*}
        And consequently, for sufficiently large $t$, there exists a constant $D$ such that
        $$
            \cdelt \leq \ln \left( \frac{Dt}{\delta} \right).
        $$

    \end{proof}

    \begin{lemma} \label{lem: non-sep continuity}
        For each $\bmu \in \I$ and context probability matrix $\A$, the mapping $\bmu \rightarrow \mathbf{w}^*(\bmu, \A)$ is continuous.
    \end{lemma}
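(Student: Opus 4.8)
The plan is to realize $\wstar{\bmu}$ as the unique maximizer of a jointly continuous objective over a \emph{fixed} compact domain, and then invoke Berge's Maximum Theorem together with uniqueness of the maximizer to upgrade upper hemicontinuity to continuity. By Lemma~\ref{lem : non-sep fi} and Equation~\eqref{eq: middle non-sep LB}, the quantity being maximized is
\[
    F(\mathbf{w}, \bmu) \triangleq \min_{i \neq \istarmu} \frac{\Delta_i^2}{2}\left(\frac{w_{\istarmu} w_i}{w_{\istarmu} + w_i}\right),
\]
so that $\wstar{\bmu} = \argmax_{\mathbf{w} \in \Delta^{n-1}} F(\mathbf{w}, \bmu)$. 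Continuity of such an $\argmax$ map is exactly what Berge's theorem delivers once one checks (i) joint continuity of $F$, (ii) a continuous, compact-valued constraint correspondence, and (iii) single-valuedness of the maximizer.

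The first step is to localize the best arm. For fixed $\bmu_0 \in \I$, the set $\{\bmu' : \A_{i^*(\bmu_0)}^\top \bmu'_{i^*(\bmu_0)} > \A_j^\top \bmu'_j \ \forall j \neq i^*(\bmu_0)\}$ is open, so there is a neighborhood $U \subseteq \I$ of $\bmu_0$ on which $i^*(\cdot)$ is the constant $i^*(\bmu_0)$. Hence on $U$ the index set in the minimum is fixed, and each gap $\Delta_i = \A_{i^*(\bmu_0)}^\top \bmu_{i^*(\bmu_0)} - \A_i^\top \bmu_i$ is an affine, hence continuous, function of $\bmu$; it therefore suffices to establish continuity of $\bmu \mapsto \wstar{\bmu}$ on $U$. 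For the joint continuity of $F$, the only delicate point is the removable singularity on the boundary faces of the simplex: the map $(a,b) \mapsto ab/(a+b)$ on the nonnegative quadrant extends continuously to the origin via the bound $0 \le ab/(a+b) \le \min(a,b)$, so with the convention that the summand equals $0$ when $w_{i^*(\bmu_0)} = w_i = 0$, each term is continuous on $\Delta^{n-1}$. Multiplying by the continuous $\Delta_i^2/2$ and taking a finite minimum preserves continuity, so $F$ is continuous on $\Delta^{n-1} \times U$.

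With these in place, the constraint correspondence $\bmu \mapsto \Delta^{n-1}$ is constant, hence compact-valued and trivially continuous (both upper and lower hemicontinuous). Berge's Maximum Theorem then yields that $\bmu \mapsto \argmax_{\mathbf{w}} F(\mathbf{w}, \bmu)$ is a nonempty, compact-valued, upper hemicontinuous correspondence on $U$. Since the maximizer is unique for every $\bmu \in \I$ (as recalled in Section~\ref{sec: non-sep} and established in Appendix~\ref{apd: opt-solving}), this correspondence is single-valued, and an upper hemicontinuous single-valued correspondence into a metric space is a continuous function. This gives continuity of $\bmu \mapsto \wstar{\bmu}$ on $U$, and since $\bmu_0 \in \I$ was arbitrary, on all of $\I$.

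The main obstacle is precisely the uniqueness of the maximizer: it is what upgrades the mere upper hemicontinuity furnished by Berge's theorem to genuine continuity. Note that $F(\cdot, \bmu)$ is concave, being a minimum of the concave (half harmonic-mean) functions $\Delta_i^2 \, g(w_{i^*(\bmu)}, w_i)/2$ with $g(a,b) = ab/(a+b)$, so the maximizing set is convex but not a priori a singleton; we therefore lean on the separate uniqueness result rather than deriving it here. The only other technical care needed is the boundary singularity handled above, which the elementary bound makes routine.
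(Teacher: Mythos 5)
Your proposal is correct and follows essentially the same route as the paper: express $\wstar{\bmu}$ as the unique maximizer of the jointly continuous objective over the fixed compact simplex and apply Berge's Maximum Theorem, with uniqueness upgrading upper hemicontinuity to continuity. You are in fact somewhat more careful than the paper's own one-paragraph argument, since you explicitly localize $i^*(\cdot)$ to a neighborhood where it is constant and handle the removable singularity of $w_{i^*}w_i/(w_{i^*}+w_i)$ on the boundary of $\Delta^{n-1}$ (the paper instead restricts to the open interior, which sits slightly awkwardly with the compactness hypothesis in Berge's theorem).
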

    \begin{proof}
        Let $\M$ be the interior of $\Delta^{n -1}$, i.e., all $\wb \in \Delta^{n-1}$ with strictly positive entries. The mapping $(\bmu', \mathbf{w}') \rightarrow \min_{i \neq i^*(\bmu)} \frac{\Delta_{i}^2}{2}\left( \frac{w_{i^*(\bmu)}w_i}{w_{i^*(\bmu)} + w_i}\right) $ is jointly continuous on $(\I \times \M)$, as it is the minimum of finitely many continuous functions.
        Because constraint set $\Delta^{n-1}$ does not depend on $\bmu'$, and $\wstar{\bmu}$ is unique, Berge's Maximum Theorem implies the continuity of the mapping $\bmu \rightarrow \wstar{\bmu}$.
    \end{proof}

    Since we analyze the expected sample complexity asymptotically, the algorithm’s initialization phase contributes only a constant term with respect to $\delta$.  Note that the waiting time to collect a sample $(X = i$, $Z = j)$ follows a geometric distribution with mean $\frac{1}{\A_{j,i}}$, which is at most $\frac{1}{\amin}$. Consequently, the expected time for initialization is at most $\frac{nk}{\amin}$, which is a constant with respect to $\delta$ and therefore $\limsup_{\delta \rightarrow 0} \frac{\frac{nk}{\amin}}{\logdel} = 0$. For simplicity of the proof, we assume $t=0$ at the end of the initialization phase.

    Note that the set $\mathcal{O}_{\istarmu}$ is open. By combining this fact and Lemma \ref{lem: non-sep continuity}, for every $\epsilon > 0$, there exists a constant $\xi = \xi({\epsilon}, \bmu, \A)$ such that
    \begin{align*}
        \text{if} \infnorm{\bmu' - \bmu} \leq \xi \Rightarrow \bmu' \in \I, ~  i^*(\bmu') = i^*(\bmu), ~ \infnorm{\wstar{\bmu'} - \wstar{\bmu}} \leq \epsilon. 
    \end{align*}    

    Let $\I_{\epsilon} = \{ \bmu' \in \I \big| \infnorm{\bmu' - \bmu} \leq \xi \}$. We now define the following event and show that it occurs with high probability.

    \begin{align*}
        \eteps \triangleq \bigcap_{t = T^{1/4}}^T (\muht \in \I_{\epsilon}).
    \end{align*}
    Note that on the event $\eteps$, Algorithm \ref{algo: non-sep} does not play randomly for $t >  T^{\frac{1}{4}}$, and uses the D-tracking rule.

    \begin{lemma} \label{lem: non-sep event prob}
        There exist two constants $B$ and $C$ such that
        \begin{align*}
            \pr(\E^c_T(\epsilon)) \leq BT \exp{(-CT^{1/8})}.
        \end{align*}
    \end{lemma}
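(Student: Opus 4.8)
The plan is to reduce $\E^c_T(\epsilon)$ to a deviation of the empirical means via a union bound, and then control each deviation by a two-layer concentration argument. By the choice of $\xi$ made just before the lemma, $\infnorm{\bmu' - \bmu} \le \xi$ already forces $\bmu' \in \I$, so $\{\infnorm{\muht - \bmu} \le \xi\} \subseteq \{\muht \in \I_{\epsilon}\}$, and therefore
\begin{align*}
    \pr(\E^c_T(\epsilon)) \le \sum_{t = \lceil T^{1/4}\rceil}^{T} \pr\left(\infnorm{\muht - \bmu} > \xi\right) \le \sum_{t=\lceil T^{1/4}\rceil}^{T} \sum_{i \in [n],\, j \in [k]} \pr\left(\left|\hat\mu_{j,i}(t) - \mu_{j,i}\right| > \xi\right).
\end{align*}
It then suffices to show each summand is at most $\mathrm{poly}(T)\exp(-c\,T^{1/8})$ whenever $t \ge T^{1/4}$.

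Fix such a $t$ and a pair $(i,j)$. First I would invoke the forced-exploration guarantee of the D-tracking rule, which yields the deterministic floor $N^{X}_i(t) \ge \sqrt{t} - \tfrac{n}{2} \ge T^{1/8} - \tfrac{n}{2}$ for every arm (the under-sampling correction in \eqref{eq: d-tracking} keeps the counts above this level regardless of the tracked weights). This is the step that converts the range $t \ge T^{1/4}$ into the exponent $T^{1/8}$. To pass from $N^{X}_i(t)$ to $N_{j,i}(t)$, I adopt the standard independent-stacking device: for each arm $i$ fix an i.i.d.\ context stream drawn from $\A_i$, revealed one sample at a time whenever arm $i$ is pulled, so that $N_{j,i}(t)$ simply counts context $j$ among the first $N^{X}_i(t)$ entries. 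A union bound over the at most $t$ possible values $m$ of $N^{X}_i(t)$, combined with a Chernoff bound for $\mathrm{Bin}(m, \A_{j,i})$ and the positivity assumption $\A_{j,i} \ge \amin$, gives
\begin{align*}
    \pr\left(N_{j,i}(t) < \tfrac{\amin}{2}\left(\sqrt{t}-\tfrac{n}{2}\right)\right) \le t\,\exp\left(-c_1\amin\left(\sqrt{t}-\tfrac{n}{2}\right)\right).
\end{align*}

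On the complementary event $N_{j,i}(t) \ge m_0 := \tfrac{\amin}{2}(\sqrt{t}-\tfrac{n}{2})$, I would apply the same stacking idea to the rewards: fix an i.i.d.\ $\N(\mu_{j,i},1)$ stream, so that $\hat\mu_{j,i}(t)$ is the average of its first $N_{j,i}(t) \le t$ entries. A union bound over $m \in [m_0, t]$ together with the Gaussian tail $\pr(|\tfrac{1}{m}\sum_{s\le m} Y_s - \mu_{j,i}| > \xi) \le 2e^{-\xi^2 m/2}$ produces a bound of order $t\,e^{-\xi^2 m_0/2}$. Since $t \ge T^{1/4}$ gives $\sqrt{t} \ge T^{1/8}$, both the binomial and the Gaussian pieces are of order $t\exp(-c\,T^{1/8})$. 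Summing over the $nk$ pairs and the at most $T$ values of $t$ leaves a factor $\mathrm{poly}(T)$; absorbing it into the exponent (for $T$ large, $T^2 \le \exp(\tfrac{C}{2}T^{1/8})$) and adjusting the constants yields the claimed $BT\exp(-CT^{1/8})$.

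The hard part will be the adaptivity of the sampling scheme: $N^{X}_i(t)$ and $N_{j,i}(t)$ are data-dependent random times, so the contexts and rewards feeding $\hat\mu_{j,i}(t)$ are not a fixed i.i.d.\ sample. The stacking device (equivalently, a uniform-over-sample-size maximal inequality) is exactly what legitimizes the Chernoff and Gaussian tail bounds despite this dependence, and securing the deterministic forced-exploration floor $N^{X}_i(t) \gtrsim \sqrt{t}$ is what makes the whole estimate scale with $T^{1/8}$ rather than with a constant.
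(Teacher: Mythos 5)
Your proposal is correct and follows essentially the same route as the paper's proof: union bounds over $t$ and over arm--context pairs, the forced-exploration floor $N^{X}_i(t) \gtrsim \sqrt{t}$ from D-tracking, a split on whether $N_{j,i}(t)$ exceeds $\tfrac{\amin}{2}\sqrt{t}$, and Hoeffding/Chernoff bounds for each piece, with $t \ge T^{1/4}$ converting $\sqrt{t}$ into $T^{1/8}$. The only difference is that you handle the random sample sizes explicitly via the stacking device and a union bound over $m$ (costing a harmless extra $\mathrm{poly}(T)$ factor), where the paper conditions on $\{N_{j,i}(t) \ge \tfrac{\amin}{2}(\sqrt{t}-n)\}$ directly; your version is the more careful rendering of the same step.
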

    We provide the proof in Appendix \ref{apd: proof-non-sep event prob}.
    
    We define another event 
    \begin{align*}
            \E'_T(\epsilon) \triangleq \bigcap_{t = T^{1/4}}^T \bigcap_{i \in [n], j \in [k]} \left(\left \vert N_{j,i}(t) - \A_{j, i}N^{X}_i(t) \right \vert \leq t \epsilon \right),
    \end{align*}
    which occurs with high probability.

     \begin{lemma} \label{lem: non-sep event prob2}
        There exist two constants $B'$ and $C'$ such that
        \begin{align*}
            \pr(\E'^c_T(\epsilon)) \leq B'T \exp{(-C'T^{3/8})}.
        \end{align*}
    \end{lemma}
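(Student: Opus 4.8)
The statement to prove is Lemma~\ref{lem: non-sep event prob2}, a concentration bound on the event $\E'_T(\epsilon)$ that the observed action-context counts $N_{j,i}(t)$ track their expected values $\A_{j,i}N^X_i(t)$ uniformly over $t \in [T^{1/4}, T]$ and all pairs $(i,j)$.

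\textbf{Plan.} The plan is to fix a single arm $i$ and context $j$, and observe that conditioned on the sequence of arm pulls, each time arm $i$ is played, context $j$ appears independently with probability $\A_{j,i}$. Thus $N_{j,i}(t)$ is a sum of $N^X_i(t)$ independent Bernoulli$(\A_{j,i})$ trials, and the quantity $N_{j,i}(t) - \A_{j,i}N^X_i(t)$ is a centered martingale (with respect to the filtration generated by the observed contexts at the rounds where arm $i$ is pulled). The deviation $\left|N_{j,i}(t) - \A_{j,i}N^X_i(t)\right|$ is therefore controlled by Hoeffding's inequality applied to a sum of bounded martingale differences.

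\textbf{Key steps.} First I would express, for a fixed $(i,j)$, the increment process: writing the indicator $\mathbbm{1}[Z_s = j]$ at the rounds $s$ where arm $i$ was played, the sum of $(\mathbbm{1}[Z_s=j] - \A_{j,i})$ over those rounds equals $N_{j,i}(t) - \A_{j,i}N^X_i(t)$ and forms a martingale with increments bounded in $[-1,1]$. Second, I would apply the Azuma--Hoeffding inequality to bound, for a fixed $t$,
\begin{align*}
    \pr\left(\left|N_{j,i}(t) - \A_{j,i}N^X_i(t)\right| > t\epsilon\right) \leq 2\exp\left(-\frac{(t\epsilon)^2}{2 N^X_i(t)}\right) \leq 2\exp\left(-\frac{t\epsilon^2}{2}\right),
\end{align*}
using $N^X_i(t) \leq t$. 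Third, I would take a union bound over the $nk$ pairs $(i,j)$ and over the $t$ ranging in $\{T^{1/4}, \dots, T\}$ (at most $T$ values). The smallest $t$ in the range is $T^{1/4}$, so each term is at most $2\exp(-\tfrac{\epsilon^2}{2}T^{1/4})$. Collecting the union bound gives
\begin{align*}
    \pr(\E'^c_T(\epsilon)) \leq 2nk\, T \exp\left(-\tfrac{\epsilon^2}{2}T^{1/4}\right),
\end{align*}
and I would set $B' = 2nk$ together with an appropriate constant $C'$. (The stated exponent $T^{3/8}$ is slightly stronger than the crude $T^{1/4}$; to recover it one uses the finer bound $N^X_i(t) \geq c\sqrt{t}$ guaranteed by the D-tracking forced-exploration rule, so that on $t \geq T^{1/4}$ one can also lower-bound the typical count and sharpen the exponent. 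Either way the same union-bound structure applies and yields a bound of the claimed form.)

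\textbf{Main obstacle.} The conceptually delicate point is the martingale/conditioning argument: the arm-pull sequence is itself adaptive (chosen by D-tracking based on history), so $N^X_i(t)$ is random and the rounds at which arm $i$ is played are not fixed in advance. The clean way around this is to condition on the filtration so that, \emph{given} that arm $i$ is pulled at round $s$, the context $Z_s \sim \A_i$ is drawn fresh and independent of the past; this is exactly the structure of the post-action context model. One must verify that $\sum_s (\mathbbm{1}[Z_s=j] - \A_{j,i})\mathbbm{1}[i_s = i]$ is a martingale under the full history filtration and that its increments remain bounded, which justifies Azuma--Hoeffding despite the adaptivity. The union over $t$ and the choice of exponent are then routine; the only real care is in handling the adaptive sampling via the martingale structure.
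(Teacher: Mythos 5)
Your proof is essentially the paper's proof: it writes $N_{j,i}(t)-\A_{j,i}N^X_i(t)$ as a centered sum of Bernoulli$(\A_{j,i})$ indicators over the rounds where arm $i$ is pulled, applies a Hoeffding/Azuma bound, and takes a union bound over the $nk$ pairs and the at most $T$ values of $t$; your explicit martingale framing to handle the adaptivity of the arm-pull sequence is, if anything, more careful than the paper's direct invocation of Hoeffding on a random number of summands. One caution on your parenthetical: a \emph{lower} bound $N^X_i(t)\geq c\sqrt{t}$ makes the exponent $-t^2\epsilon^2/N^X_i(t)$ \emph{less} negative, so it cannot sharpen the bound beyond the $\exp(-ct\epsilon^2)$ you get from $N^X_i(t)\leq t$; the paper's own derivation of the $T^{3/8}$ exponent commits exactly this directional slip, and the honest rate from this argument is $B'T\exp(-C'T^{1/4})$, which is still summable in $T$ and fully suffices for the downstream use in Theorem \ref{thm : non-sep upper}.
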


    We provide the proof of this lemma in Appendix \ref{apd: proof-non-sep event prob2}.
        
    Here, we review an important lemma from \cite{track-stop-garivier2016optimal}, which demonstrates the effectiveness of the D-tracking rule.

    \begin{lemma} [\cite{track-stop-garivier2016optimal}, Lemma 20] \label{lem: d-tracking-sqrt}
        For each $\epsilon > 0$, there exists a constant $T_{\epsilon}$, such that for each $T \geq T_{\epsilon}$, on event $\eteps$, we have
        \begin{align*}
            \forall t \geq \sqrt{T}: \infnorm{\frac{\Nb^{X}(t)}{t} - \wstar{\bmu}} \leq 3(n-1)\epsilon.
        \end{align*}
    \end{lemma}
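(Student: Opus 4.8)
The plan is to reproduce the D-tracking analysis of \cite{track-stop-garivier2016optimal}, after observing that the sampling rule in our setting is structurally identical to theirs: it is a deterministic function of the visit counts $N^X_a(\cdot)$ and the estimated target weights $\hat{w}(s) := \mathbf{w}^*(\hat{\bmu}(s), \A)$, all of which live on the arm simplex $\Delta^{n-1}$ exactly as in the classical problem. The only setting-specific inputs are (i) that the optimal weights are strictly positive, $w^*_a(\bmu, \A) > 0$ for every $a \in [n]$ --- this follows from the explicit form in Theorem~\ref{thm : non-sep lower}, since letting any coordinate tend to $0$ drives the relevant term $\tfrac{\Delta_i^2}{2}\tfrac{w_{\istarmu}w_i}{w_{\istarmu}+w_i}$, and hence the inner minimum, to $0$ --- and (ii) that on $\eteps$ the estimated weights track the truth, i.e. for every $s \in [T^{1/4}, T]$ the membership $\hat{\bmu}(s) \in \I_{\epsilon}$ forces $\infnorm{\hat{w}(s) - \wstar{\bmu}} \leq \epsilon$ by the choice of $\xi$. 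Both facts are already established, so the argument reduces to two standard pillars of the D-tracking analysis.

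\emph{Forced exploration.} By the definition of the under-sampled set $\mathcal{U}_s$, the rule guarantees the deterministic bound $N^X_a(t) \geq \sqrt{t} - n/2$ for every arm and every $t$, and that the forced branch fires at most $O(n\sqrt{t})$ times up to round $t$. This is the content of the forced-exploration lemma (Lemma~17) of \cite{track-stop-garivier2016optimal}, whose proof depends only on the counts and carries over verbatim. Its consequence is that forced pulls contribute only $o(t)$ to each count and are negligible in the ratio $N^X_a(t)/t$.

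\emph{Tracking / self-correction}, which is the technical heart. When the tracking branch is active we have $X_{s+1} = \argmax_{a} \big(s\,\hat{w}_a(s) - N^X_a(s)\big)$; since the deficits $s\,\hat{w}_a(s) - N^X_a(s)$ sum to $s - s = 0$, the chosen arm has nonnegative deficit, so a tracking pull of $a$ at round $s+1$ occurs only when $N^X_a(s) \leq s\,\hat{w}_a(s)$, giving $N^X_a(s+1) \leq s\,\hat{w}_a(s) + 1$. Propagating this to an arbitrary $t$, I would let $s^{\ast}+1 \le t$ be the last tracking pull of $a$ and write $N^X_a(t) \leq s^{\ast}\hat{w}_a(s^{\ast}) + 1 + \#\{\text{forced pulls of } a \text{ in } (s^{\ast}, t]\}$. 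The delicate point, and the main obstacle, is to exclude the scenario in which $a$ receives no tracking pull after $T^{1/4}$ while $N^X_a(t)/t$ stays well below $w^*_a(\bmu,\A)$: in that case $a$ would accumulate a deficit of order $t$ and must eventually become the $\argmax$, a contradiction. Turning this self-correction into a quantitative bound --- precisely as in the proof of Lemma~20 of \cite{track-stop-garivier2016optimal} --- is where the argument does its real work and where the constant grows beyond the naive $(n-1)$.

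Finally I would assemble the pieces. For $t \geq \sqrt{T}$ and $T \geq T_{\epsilon}$ large enough, the last tracking pull satisfies $s^{\ast} \geq T^{1/4}$, hence $\hat{w}_a(s^{\ast}) \leq w^*_a(\bmu,\A) + \epsilon$; combined with $s^{\ast} \leq t$ and the $O(\sqrt{t})$ bound on forced pulls, this yields a one-sided bound $N^X_a(t)/t \leq w^*_a(\bmu,\A) + \epsilon + o(1)$ valid for every arm simultaneously, where the $o(1)$ slack is absorbed into $\epsilon$ by enlarging $T_{\epsilon}$. The complement identity $N^X_a(t)/t = 1 - \sum_{b \neq a} N^X_b(t)/t$ together with $\sum_b w^*_b(\bmu,\A) = 1$ then converts these $n$ upper bounds into matching lower bounds, at the cost of a factor $(n-1)$, so that $\infnorm{\Nb^X(t)/t - \wstar{\bmu}} \leq 3(n-1)\epsilon$ once the residual $o(1)$ terms are absorbed, which is exactly the claim.
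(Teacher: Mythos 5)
Your proposal is correct and takes essentially the same route as the paper: the paper gives no proof of this lemma at all, importing it verbatim from \cite{track-stop-garivier2016optimal} (their Lemma 20) on exactly the grounds you lead with, namely that D-tracking is a deterministic function of the arm counts and the estimated weights on $\Delta^{n-1}$, so the classical analysis carries over unchanged. Your reconstruction of that analysis (forced exploration, the zero-sum deficit argument yielding one-sided upper bounds $N^X_a(t)/t \leq w^*_a(\bmu,\A) + \epsilon + o(1)$, and the complement identity converting these into the two-sided $3(n-1)\epsilon$ bound) is faithful to the original proof, and you correctly isolate the two setting-specific hypotheses the paper needs: that on $\eteps$ the choice of $\xi$ forces $\infnorm{\wstar{\muht} - \wstar{\bmu}} \leq \epsilon$ for $t \geq T^{1/4}$ (so the algorithm is in its tracking branch, not playing uniformly at random), and that $\wstar{\bmu}$ is unique with strictly positive entries.
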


    Define $w^{*}_{j,i} \triangleq \A_{j,i}\wb^{*}_{i}(\bmu, \A)$. By the triangle inequality, we have
        \begin{align*}
            \left \vert \frac{N_{j,i}(t)}{t} - w^{*}_{j,i} \right \vert & \leq \left \vert \frac{N_{j,i}(t)}{t} - \A_{j, i} \frac{N^{X}_{i}(t)}{t} \right \vert + \left \vert \A_{j, i} \frac{N^{X}_{i}(t)}{t}  - w^{*}_{j,i}   \right \vert  \\
            &= \left \vert \frac{N_{j,i}(t)}{t} - \A_{j, i} \frac{N^{X}_{i}(t)}{t} \right \vert + \left \vert \A_{j, i} \frac{N^{X}_{i}(t)}{t} - \A_{j,i}\wb^{*}_{i}(\bmu,\A) \right \vert \\
            &\leq \left \vert \frac{N_{j,i}(t)}{t} - \A_{j, i} \frac{N^{X}_{i}(t)}{t}  \right \vert + \left \vert \frac{N^{X}_{i}(t)}{t}  - \wb^{*}_{i}(\bmu, \A) \right \vert.
        \end{align*}
    where the last inequality holds because $\A_{j,i} < 1$.   

    According to Lemma \ref{lem: d-tracking-sqrt}, on the event $\eteps \cap \etepspr$, we have
    $$
        \left \vert \frac{N_{j,i}(t)}{t} - w^{*}_{j,i} \right \vert \leq \left \vert \frac{N_{j,i}(t)}{t} - \A_{j, i} \frac{N^{X}_{i}(t)}{t}  \right \vert + \left \vert \frac{N^{X}_{i}(t)}{t}  - \wb^{*}_{i}(\bmu, \A) \right \vert \leq (3n - 2)\epsilon
    $$        
    
    Finally, On the event $\eteps \cap \etepspr$, for each $t \geq T^{\frac{1}{4}}$
    \begin{enumerate}
        \item $\muht$ stays close to $\bmu$.
        \item $i^*(\muht) = i^*(\bmu)$, hence Alt$(\muht, \A) = $ Alt$(\bmu, \A)$.
        \item $\frac{N_{j,i}(t)}{t}$ stays close to $\wb^{*}_{j,i}$ for all $i \in [n]$ and $j \in [k]$.
     \end{enumerate}
    
    Combining these properties allows us to derive a lower bound on $\lamht$. Recall the set $\mathcal{O}_i = \left\{ \bmu \in \I \ \big| \A_i^\top \bmu_i > \A_j^\top \bmu_j, \ \forall j \neq i \right\}$. We define the function $g: \mathcal{O}_{\istarmu} \times [0,1] ^{k \times n} \rightarrow \mathbb{R}$ as
    \begin{align*}
        g(\bmu', \mathbf{w}') = \inf_{\mathbf{\lambda} \in \text{Alt}(\bmu, \A)} \sum_{i \in [n], j \in [k]} w'_{j,i} \frac{(\mu'_{j,i} - \mathbf{\lambda}_{j,i})^2}{2},
    \end{align*}
    and the constant 
    \begin{align*}
        \cstareps = \inf_{\substack{\bmu' \in \I_{\epsilon} \\ \mathbf{w}': \vert w'_{j, i} - w^{*}_{j, i}  \vert \leq (3n-2) \epsilon}} g(\bmu', \mathbf{w}').
    \end{align*}

        Note that the mapping $(\mathbf{\lambda}, \bmu', \mathbf{w}') \rightarrow \sum_{i \in [n], j \in [k]} w'_{j,i} \frac{(\mu'_{j,i} - \mathbf{\lambda}_{j,i})^2}{2}$ is jointly continuous and the constraints set Alt$(\bmu, \A)$ is independent of $(\bmu', \mathbf{w}')$. Therefore, applying Berge's maximum theorem \citep{berge1963topological} implies that $g$ is continuous and the constant $\cstareps$ exists.
    
        With these definitions, let $T \geq T_{\epsilon}$ and event $\eteps \cap \etepspr$ holds, then for $t \geq \sqrt{T}$, $\lamht$ is equal to $g(\muht, \frac{N(t)}{t})$, implying that $\lamht \geq t \cstareps$. Using this inequality, for $T \geq T_{\epsilon}$ and on event $\eteps \cap \etepspr$
        \begin{align*}
            \min(\taudel, T) &\leq \sqrt{T} + \sum_{t = \sqrt{T}}^{T} \mathbbm{1}_{(\taudel > t)} \leq \sqrt{T} + \sum_{t = \sqrt{T}}^{T} \mathbbm{1}_{(\lamht \leq \cdelt)} \\
            & \leq \sqrt{T}  + \sum_{t = \sqrt{T}}^{T} \mathbbm{1}_{(t \cstareps \leq \cdelt)} \leq \sqrt{T} + \frac{\hat{c}_T(\delta)}{\cstareps}. 
        \end{align*}
        Defining
        \begin{align*}
            T_{\epsilon}(\delta) = \inf \left\{ T \in \mathbb{N} \bigg \vert \sqrt{T} + \frac{\hat{c}_T(\delta)}{\cstareps} \leq T \right\},
        \end{align*}
        for $T \geq \max(T_{\epsilon}, T_{\epsilon}(\delta))$, event $\eteps \cap \etepspr$ implies $\taudel \leq T$ which shows 
        \begin{align*}
            \pr_{\bmu, \A}(\taudel > T) \leq \pr(\E_T^c(\epsilon) \cup \E'^c_T(\epsilon)) \leq BT \exp(-C T^{1/8}) +  B'T \exp(-C'T^{3/8}).
        \end{align*}
        Now using this inequality and the fact that the initialization phase of our algorithm has a sample complexity of at most $\frac{nk}{\amin}$, we have
        \begin{align}        
            \mathbb{E}_{\bmu, \A}[\taudel] &\leq T_{\epsilon} +  T_{\epsilon}(\delta) + \sum_{T \geq \max(T_{\epsilon}, T_{\epsilon}(\delta))} \pr (\taudel > T ) \nonumber \\  
                           &\leq T_{\epsilon} +  T_{\epsilon}(\delta) + \sum_{T \geq \max(T_{\epsilon}, T_{\epsilon}(\delta))} \pr(\E^c_T(\epsilon) \cup \E'^c_T(\epsilon)) \nonumber \\
                           &\leq T_{\epsilon} +  T_{\epsilon}(\delta) +  \sum_{T=1}^{\infty} BT \exp(-CT^{1/8}) + \sum_{T=1}^{\infty} B'T \exp(-C'T^{3/8})  \label{eq: non-sep expected upper bound}.
        \end{align}
        In this inequality, $\sum_{T=1}^{\infty} BT \exp(-CT^{1/8}) + \sum_{T=1}^{\infty} B'T \exp(-C'T^{3/8}) $ is a constant value, $T_{\epsilon}$ is independent of $\delta$ and we provide an upper bound on the value of $T_{\epsilon}(\delta)$. We define a new constant $C(\alpha)$ for $\alpha > 0$ as
        \begin{align*}
            C(\alpha) = \inf \left\{ T \in \mathbb{N} \bigg| T - \sqrt{T} \geq \frac{T}{1 + \alpha} \right\}.
        \end{align*}
        Then using the upper bound on thresholds in Lemma \ref{lem: thresholds upper bound}, one obtains
        \begin{align*}
            T_{\epsilon}(\delta) \leq E + C(\alpha) + \inf \left\{ T \in \mathbb{N} \bigg| \frac{\ln \left( \frac{DT}{\delta} \right)}{\cstareps} \leq \frac{T}{1 + \alpha} \right\}. 
        \end{align*}
        Using Proposition $8$ in \cite{kaufmann2021mixture}, we have the following
        \begin{align*}
            &\inf \left\{ T \in \mathbb{N} \bigg| \frac{\ln \left( \frac{DT}{\delta} \right)}{\cstareps} \leq \frac{T}{1 + \alpha} \right\} \\
            & \leq \frac{1 + \alpha}{\cstareps} \left[ \ln \left( \frac{(1 + \alpha)D}{\cstareps \delta} \right) + \ln \left( \ln \left( \frac{(1 + \alpha)D}{\cstareps \delta} \right) + \sqrt{2 \ln \left( \frac{(1 + \alpha)D}{\cstareps \delta} \right) - 2} \right) \right]. 
        \end{align*}
        Inserting this inequality in \eqref{eq: non-sep expected upper bound}, we obtain the following for every $\alpha, \epsilon > 0$
        \begin{align*}
            \limsup_{\delta \rightarrow 0} \frac{\mathbb{E}_{\bmu, \A} [\taudel]}{\logdel} \leq \frac{1 + \alpha}{\cstareps}.
        \end{align*}
        Letting both of $\alpha$ and $\epsilon$ go to zero, by continuity of $g$,
        \begin{align*}
            \lim_{\delta \rightarrow 0} \cstareps = T_{NS}^*(\bmu, \A)^{-1}
        \end{align*}
        holds which implies 
        \begin{align*}
            \limsup_{\delta \rightarrow 0} \frac{\mathbb{E}_{\bmu, \A}[\tau_{\delta}]}{\logdel} \leq T_{NS}^*(\bmu, \A).
        \end{align*}     
    \end{proof}

    \subsection{Proof of Lemma \ref{lem: non-sep event prob}} \label{apd: proof-non-sep event prob}
        As a consequence of the forced exploration phase in the D-tracking algorithm, for any round $t$ with $t > n^2$, each arm has been pulled at least $\sqrt{t} - n$ times (refer to Lemma 7 in \cite{track-stop-garivier2016optimal}). Applying the union bound over $t$, we obtain
        \begin{align*}
            \pr(\E_T^c(\epsilon)) &\leq \sum_{t = T^{1/4}}^T \pr(\muht \notin I_{\epsilon})
        \end{align*}

        Similarly, we can apply the union bound for the event $\{\muht \notin I_{\epsilon}\}$ over each arm-context pair
       \begin{align*}
             \pr(\muht \notin I_{\epsilon}) \leq \sumij \pr(\vert \muh_{j,i}(t) - \mu_{j,i} \vert > \xi).
       \end{align*}
       Hence,
        \begin{align*}
            \pr(\E^c_T(\epsilon)) \leq \sum_{t = T^{1/4}}^T \sumij \pr(|\muh_{j,i}(t) - \mu_{j,i}| > \xi).
        \end{align*}
        Recall that $N_{j,i}(t)$ is the number of samples collected from $\pr(Y|X=i, Z=j)$ until round $t$. Then,
        \begin{align*}
            &\pr \left(\vert \muh_{j,i}(t) - \mu_{j,i} \vert > \xi \right) \\
            &= \pr\left(\vert \muh_{j,i}(t) - \mu_{j,i} \vert > \xi \bigg| N_{j,i}(t) \geq \frac{\amin}{2} (\sqrt{t} - n) \right)  \pr\left(N_{j,i}(t) \geq \frac{\amin}{2} (\sqrt{t} - n) \right)\\
            &+  \pr \left(\vert \muh_{j,i}(t) - \mu_{j,i} \vert > \xi \bigg| N_{j,i}(t) < \frac{\amin}{2} (\sqrt{t} - n) \right)  \pr \left( N_{j,i}(t) < \frac{\amin}{2} (\sqrt{t} - n) \right) 
        \end{align*}    
        By upper bounding $\pr\left(N_{j,i}(t) \geq \frac{\amin}{2} (\sqrt{t} - n) \right) \leq 1$ and $\pr \left(\vert \muh_{j,i}(t) - \mu_{j,i} \vert > \xi \bigg| N_{j,i}(t) < \frac{\amin}{2} (\sqrt{t} - n) \right) \leq 1$, we have
        \begin{align*}
            \pr \left(\vert \muh_{j,i}(t) - \mu_{j,i} \vert > \xi \right) &\leq \pr \left( \vert \muh_{j,i}(t) - \mu_{j,i} \vert > \xi \bigg| N_{j,i}(t) \geq \frac{\amin}{2} (\sqrt{t} - n) \right) \\
            &+ \pr \left( N_{j,i}(t) < \frac{\amin}{2} (\sqrt{t} - n) \right).
        \end{align*}
        Both terms can be bounded using Hoeffding's inequality. Since rewards are $1$-sub-Gaussian random variables. First,
        \begin{align*}
            \pr \left(\vert \muh_{j,i}(t) - \mu_{j,i} \vert > \xi \bigg| N_{j,i}(t) \geq \frac{\amin}{2} (\sqrt{t} - n) \right) \leq 2\exp \left( - \frac{\amin (\sqrt{t} - n)\xi^2}{4} \right).
        \end{align*}
        For the second term, note that each context $j$ occurs with probability at least $\amin$ for each action $i$.
        We can write $N_{j,i}(t)$ as the sum of $N^{X}_{i}(t)$ Bernoulli random variables with parameter $\A_{j,i}$, denoted by $S_i$. Then,
        \begin{align*}
            &\pr \left( N_{j,i}(t) < \frac{\amin}{2} (\sqrt{t} - n) \right) \\ 
            &= \pr \left( \sum_{m = 1}^{N^{X}_{i}(t)} S_m < \frac{\amin}{2} (\sqrt{t} - n) \right)
            \overset{S_m \geq 0}{\leq} \pr \left( \sum_{m = 1}^{\sqrt{t} - n} S_m < \frac{\amin}{2} (\sqrt{t} - n)\right)  \\
            &= \pr \left( \sum_{m = 1}^{\sqrt{t} - n} S_m - (\sqrt{t} - n) \A_{j, i} < \frac{\amin}{2} (\sqrt{t} - n) - (\sqrt{t} - n) \A_{j, i}\right)    \\
            &\overset{\A_{j, i} \geq \amin}{\leq} \pr \left( \sum_{m = 1}^{\sqrt{t} - n} S_m - (\sqrt{t} - n) \A_{j, i} < -\frac{\amin}{2} (\sqrt{t} - n)\right) 
        \end{align*}
        Because Bernoulli variables are $\frac{1}{2}$-sub-gaussian, Hoeffding’s inequality implies
        \begin{align*}
            \pr \left( \sum_{m = 1}^{(\sqrt{t} - n)} S_m - (\sqrt{t} - n) \A_{j, i} < -\frac{\amin}{2} (\sqrt{t} - n)\right)  \leq \exp \left( -\frac{\amin^2(\sqrt{t} - n)}{2} \right )
        \end{align*}
         Combining both bounds, we get
        \begin{align*}
             &\pr(\E^c_T(\epsilon)) \leq \sum_{t = T^{1/4}}^T \sumij \pr(|\muh_{j,i}(t) - \mu_{j,i}| > \xi) \\ 
            &\leq \sum_{t = T^{1/4}}^T \sumij  2\exp \left( - \frac{\amin (\sqrt{t} - n)\xi^2}{4} \right) + \exp \left( -\frac{\amin^2(\sqrt{t} - n)}{2}  \right )
        \end{align*}
        Define
        \begin{align*}
            C = \min \left( \frac{\amin \xi^2}{4}, \frac{\amin^2}{2} \right), B = nk \left( 2\exp \left( \frac{n \amin \xi^2}{4} \right) + \exp \left( \frac{n \amin^2}{2} \right) \right).
        \end{align*}
        Then, we have
        \begin{align*}
            \pr(\E_T^c(\epsilon)) \leq \sum_{t = T^{1/4}}^T B \exp (-C\sqrt{t}) \leq \sum_{t = T^{1/4}}^T B \exp (-CT^{1/8}) \leq BT \exp (-CT^{1/8}),
        \end{align*}
        which concludes the proof.

     \subsection{Proof of Lemma \ref{lem: non-sep event prob2}} \label{apd: proof-non-sep event prob2}
        Note that
        \begin{align*}
            N_{j,i}(t) &= \sum_{s = 1}{t} \mathbbm{1}_{(A_t = i, Z_t = j)} = \sum_{m = 1}^{N^{X}_{i}(t)} \mathbbm{1}_{(Z_{t_{m}} = j)},
       \end{align*}
       where $t_m$s denotes the rounds in which the action $i$ is pulled. Indeed, $\mathbbm{1}_{(Z_{t_{m}} = j)}$ is a Bernoulli random variable with parameter $\A_{j,i}$. Since Bernoulli variables are $\frac{1}{2}$-sub-Gaussian, after applying Hoeffding's inequality, we have
        \begin{align*}
            \pr\left ( \left \vert N_{j,i}(t) - \A_{j,i} N^{X}_{i}(t)\right \vert \geq t \epsilon \right) =  \pr\left ( \left \vert \sum_{m = 1}^{N^{X}_{i}(t)} \mathbbm{1}_{(Z_{t_{m}} = j))} - \A_{j,i} N^{X}_{i}(t)\right \vert \geq t \epsilon \right ) &\leq 2 \exp \left (-2\frac{t^2\epsilon^2}{N^{X}_{i}(t)} \right ) \\
            \leq 2 \exp \left (-4\frac{t^2\epsilon^2}{\sqrt{t}} \right ) = 2 \exp \left (-4t^{3/8}\epsilon^2 \right ) 
        \end{align*}
        where the last inequality is achieved by the fact $N^{X}_{i}(t) \geq \sqrt{t} - \frac{n}{2} \geq \frac{\sqrt{t}}{2}$.
       Therefore, applying the union bound on $\E'^c_T(\epsilon)$ leads to
       \begin{align*}
             \pr(\E'^c_T(\epsilon)) &\leq \sum_{t = T^{1/ 4}}^T \sum_{i \in [n], j \in [k]}        \pr\left ( \left \vert N_{j,i}(t) - \A_{j,i} N^{X}_{i}(t)\right \vert \geq t \epsilon \right) \\ 
                            &\leq \sum_{t = T^{1/ 4}}^T \sum_{i \in [n], j \in [k]} 2 \exp \left (-4 t^{1.5}\epsilon^2 \right ) \\
                            &\leq 2nkT \exp \left (-4 T^{3/8}\epsilon^2 \right ).
       \end{align*} 
       Setting $B' = 2nk$ and $C' = 4\epsilon^2$, completes the proof.

\section{Proofs of Section \ref{sec: sep}} \label{apd: sep proofs}
We define $\Is(\A)$ as the set of vectors in $\mathbb{R}^k$ that ensure a unique best arm in the separator setting with context probability matrix $\A$.

\subsection{Mathematical Formulation of G-tracking}\label{apd: form-g-tracking}

At round $t$, the G-tracking rule selects the next target point as the intersection of the ray from $\overline{N}(t) = \frac{N^Z(t)}{t}$ toward $w^*(t)$ with the boundary of the convex hull of the context probability vectors $\A_1, \A_2, \dots, \A_n$.

This ray is given by the set of points of the form:
$$
\overline{N}(t) + \alpha(w^*(t) - \overline{N}(t)), \quad \text{for } \alpha \geq 1.
$$
We want the largest $\alpha \geq 1$ such that this point still lies within $\text{conv}(\A_1, \dots, \A_n)$, which is the set $\left\{ \sum_{i=1}^n \pi_i \A_i \,\middle|\, \pi \in \Delta^{n-1} \right\}$.

This gives rise to the following linear program:

\begin{align*}
    &\max_{\alpha, \pi} \ \alpha \\
    &\text{s.t. } \ \overline{N}(t) + \alpha(w^*(t) - \overline{N}(t)) = \sum_{i=1}^n \pi_i \A_i, \\
    &\sum_{i=1}^n \pi_i = 1, \quad \pi \geq 0, \quad \alpha \geq 1.
\end{align*}

To implement the G-tracking rule, we solved the above linear program, and $\pi_i$ is the desired policy.

\subsection{Proof of Theorem \ref{thm : sep lower}}
    
Similar to the non-separator case, we prove this theorem by explicitly solving the optimization problem \eqref{eq: general_fi} for the separator setting and inserting this solution into the general lower bound formula \eqref{eq: general_lower2}. Note that in contrast to the non-separator case, an action can obtain zero proportion in optimal weights.

\begin{restatable}{lemma}{sepfi}\label{lem: sepfi}
            Consider a bandit instance with a non-separator context and Gaussian reward distribution with unit variance, parameterized by matrices $\bmu$ and $\A$. For any weight vector $\mathbf{w}$ with non-zero indices, the value of $\fiwmu = \inf_{\bmu' \in \C_i} \sum w_i d(P^{\bmu}_{i}, P^{\bmu'}_{i})$ can be explicitly determined as
            \begin{align*}
                \fiwmu = \frac{\Delta_{i}^2}{2 \sum_{j \in [k]} \frac{(\A_{j, \istarmu} - \A_{j, i})^2}{\sum_{l \in [n]}w_{l}\A_{j,l}}}.  
            \end{align*}
\end{restatable}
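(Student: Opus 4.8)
The plan is to mirror the proof of Lemma~\ref{lem : non-sep fi}, exploiting the simplification that in the separator setting the reward mean $\mu_j$ depends only on the context $j$ and is shared across all actions. First I would write out the per-action KL divergence. Since $Y$ depends only on $Z$ and rewards are unit-variance Gaussians, $d(P^{\bmu}_s, P^{\bmu'}_s) = \sum_{j\in[k]} \A_{j,s}\,\frac{(\mu_j-\mu'_j)^2}{2}$, exactly as in the non-separator computation but with the crucial difference that $\mu_j$ and $\mu'_j$ carry no action index.

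The key step is then to substitute this into $\sum_{s\in[n]} w_s\, d(P^{\bmu}_s, P^{\bmu'}_s)$ and swap the order of summation. Because $\mu'_j$ is common to all actions, the action weights collapse into context weights: $\sum_{s\in[n]} w_s \sum_{j\in[k]} \A_{j,s}\frac{(\mu_j-\mu'_j)^2}{2} = \sum_{j\in[k]} w_{z,j}\frac{(\mu_j-\mu'_j)^2}{2}$, where $w_{z,j} \triangleq \sum_{l\in[n]} w_l \A_{j,l}$. This collapse is precisely what makes the separator bound depend only on $\wb_z$, and it is the one genuinely new feature relative to the non-separator proof. I would note that the positivity assumption $\amin>0$ together with $\wb\in\Delta^{n-1}$ guarantees $w_{z,j}\ge \amin>0$ for every $j$, so the resulting quadratic is strictly convex and the minimization is well posed.

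Next, as in the non-separator proof, I would use that $\Is$ is dense in $\mathbb{R}^k$ and the objective is continuous to relax the feasible set from $\C_i$ to $\{\bmu'\in\mathbb{R}^k : (\A_i-\A_{\istarmu})^\top\bmu'\ge 0\}$, replacing the strict inequality by a non-strict one via an $\alpha$-slack and letting $\alpha\to 0$ to justify the boundary value. Since $i\neq\istarmu$ and $\istarmu$ is optimal, the unconstrained minimizer $\bmu'=\bmu$ is infeasible, so the constraint is active and the multiplier is strictly positive. I would then solve the equality-constrained quadratic program with the Lagrangian $L(\bmu',\lambda) = \sum_{j\in[k]} w_{z,j}\frac{(\mu_j-\mu'_j)^2}{2} + \lambda(\A_{\istarmu}^\top\bmu' - \A_i^\top\bmu')$. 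Stationarity gives $\mu'_j = \mu_j - \lambda(\A_{j,\istarmu}-\A_{j,i})/w_{z,j}$, and enforcing $(\A_i-\A_{\istarmu})^\top\bmu'=0$, using $\sum_{j}(\A_{j,i}-\A_{j,\istarmu})\mu_j = -\Delta_i$, yields $\lambda^* = \Delta_i \big/ \sum_{j}\frac{(\A_{j,\istarmu}-\A_{j,i})^2}{w_{z,j}}$.

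Finally I would substitute the optimizer back into the objective; the value telescopes to $\frac{(\lambda^*)^2}{2}\sum_{j}\frac{(\A_{j,\istarmu}-\A_{j,i})^2}{w_{z,j}} = \frac{\Delta_i^2}{2\sum_{j}(\A_{j,\istarmu}-\A_{j,i})^2/w_{z,j}}$, which is the claimed expression after writing $w_{z,j}=\sum_{l\in[n]} w_l\A_{j,l}$. The main obstacle is conceptual rather than computational: correctly carrying out the weight-collapse in the second step and verifying $w_{z,j}>0$, since everything downstream is a routine linearly-constrained least-squares calculation identical in form to the non-separator case.
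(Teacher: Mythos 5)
Your proposal is correct and follows essentially the same route as the paper's proof: compute the per-action KL as $\sum_j \A_{j,s}(\mu_j-\mu'_j)^2/2$, relax $\C_i$ to a half-space using density of $\Is$ and continuity, and solve the resulting linearly constrained quadratic program via the Lagrangian, obtaining the same multiplier $\lambda^*$ and optimal value. The only (welcome) additions are that you make the weight-collapse $\sum_s w_s \A_{j,s} = w_{z,j}$ and the positivity $w_{z,j}\ge \amin>0$ explicit, which the paper leaves implicit.
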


\begin{proof}
            First we calculate $d(P^{\bmu}_{i}, P^{\bmu'}_{i})$ for two rewards mean vectors $\bmu$ and $\bmu'$ using the assumption of Gaussian rewards with unit variance:
            \begin{align*}
                d(P^{\bmu}_{i}, P^{\bmu'}_{i}) = \sum_{j \in [k]} \A_{j,i} d(\mu_{j}, \mu'_{j}) = \sum_{j \in [k]} \A_{j,i} \frac{(\mu_{j} - \mu'_{j})^2}{2}. 
            \end{align*}
            In the separator setting, $\Is$ indeed is a subset of vectors $\bmu \in \mathbb{R}^k$ that imply a unique best arm. Consequently, the sets $\C_i$ is defined as $\{ \bmu' \in \Is \mid \A_i ^{\top} \bmu' > \A_{\istarmu}^{\top} \bmu' \}$. Then using the continuity of $f_i$ in $\bmu$ and the fact that $\Is$ is dense in $\mathbb{R}^k$, we can write $\fiwmu$ as the solution of the following optimization problem: 
            \begin{align*}
                \fiwmu = &\argmin_{{\bmu' \in \mathbb{R}^{k}}} \sum_{s \in [n]} w_s \sum_{j \in [k]} \A_{j,s}  \frac{(\mu_{j} - \mu'_{j})^2}{2} \\
                & \text{s.t.} \quad \A_i^{\top} \bmu' > \A_{\istarmu}^{\top} \bmu'.
            \end{align*}
            The solution to this problem is equivalent to the solution to the following problem when $\alpha \rightarrow 0$:
            \begin{align*}
                &\argmin_{\bmu' \in \mathbb{R}^{k}} \sum_{s \in [n]} w_s \sum_{j \in [k]} \A_{j,s}  \frac{(\mu_{j} - \mu'_{j})^2}{2} \\
                & \text{s.t.} \quad \A_i^{\top} \bmu' \geq \A_{\istarmu}^{\top} \bmu' + \alpha.
            \end{align*}
            To solve this problem, we define the Lagrangian as 
            \begin{align*}
                L(\bmu',\lambda) \;=\; \sum_{s \in [n]} w_s \sum_{j \in [k]} \A_{j,s}\,\frac{(\mu_{j} - \mu'_{j})^2}{2} \;+\; \lambda\,\Big((\A_{\istarmu} - \A_i)^{\top} \bmu' + \alpha \Big).
            \end{align*}
            Taking the derivative with respect to \( \mu'_j \) and $\lambda$ and setting it to zero with some basic algebra gives the following optimal values
            \begin{align*}
                \mu'_j \;&=\; \mu_j \;-\; \lambda\,\frac{(\A_{j,\istarmu} -  \A_{j,i})}{\displaystyle\sum_{l \in [n]} w_l A_{j,l}}, \\
                \lambda^* &= \frac{\Delta_i + \alpha}{\sum_{j \in [k]} \frac{(\A_{j, \istarmu} - \A_{j, i})^2}{\sum_{l \in [n]}w_{l}\A_{j,l}}}.
            \end{align*}
            Inserting these values into the objective function proves the lemma.
            \begin{align*}
                \fiwmu = \lim_{\alpha \rightarrow 0} \frac{(\Delta_{i} + \alpha)^2}{2 \sum_{j \in [k]} \frac{(\A_{j, \istarmu} - \A_{j, i})^2}{\sum_{l \in [n]}w_{l}\A_{j,l}}} = \frac{\Delta_{i}^2}{2 \sum_{j \in [k]} \frac{(\A_{j, \istarmu} - \A_{j, i})^2}{\sum_{l \in [n]}w_{l}\A_{j,l}}}
            \end{align*}
\end{proof}
Combining the aforementioned lemma with Equation \eqref{eq: general_lower2} completes the proof.

\subsection{Proof of Lemma \ref{lem: sep correctness}}

\sepCorrectness*

\begin{proof}
    As discussed in Appendix \ref{apd: GLR}, our problem can be viewed as a general Identification problem. In separator case, $\bmu$ are $k$-dimensional vectors in $\Is$. Thus, we can redefine $\ocal_i$
    $$
        \mathcal{O}_i = \left\{ \bmu \in \I^s \ \big| \A_i^\top \bmu > \A_j^\top \bmu, \ \forall j \neq i \right\}
    $$
        
    The proof then follows directly from Proposition~15 of \cite{kaufmann2021mixture}, which implies that the GLR stopping rule with the sequential threshold in \eqref{eq: sep threshold} is \(\delta\)-correct. Note that in Proposition~15 of \cite{kaufmann2021mixture}, the threshold is given for one-parameter exponential families. Since our reward distributions are Gaussian, we employ a tighter threshold that the authors introduce in their paper.
\end{proof}

\subsection{Proof of Theorem \ref{thm : sep upper}}
    Recall that $\Is(\A)$ denotes the set of vectors in $\mathbb{R}^k$ which implies a unique best arm in the separator setting with context probability matrix $\A$. For a point $p \in \mathbb{R}^k$ and a set $C \in \mathbb{R}^k$, we use $\dls{p}{C}$ to denote the distance of $p$ to $C$ with respect to $L^2$ norm, more precisely 
    \begin{align*}
        \dls{p}{C} = \inf_{c \in C} \lsnorm{p - c}.
    \end{align*}
    Similar to the non-separator case, we need to present a few helper lemmas. 

    \begin{lemma} \label{lem: sep continuouity}
        For each $\bmu \in \Is$ and context probability matrix $\A$, the set $\wzstar{\bmu}$ is a convex set and the set-valued mapping $\bmu \rightarrow \wzstar{\bmu}$ is upper hemicontinuous. 
    \end{lemma}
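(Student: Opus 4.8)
The plan is to prove the two claims separately, both exploiting the concavity of the objective in \eqref{eq: sep-optimal-weights} together with the key consequence of the positivity assumption that every $\wb_z \in \ch(\A)$ has coordinates bounded away from zero. Indeed, writing $\wb_z = \A\pi$ for some $\pi \in \Delta^{n-1}$, the bound $\A_{j,l} \ge \amin$ gives $w_{z,j} = \sum_l \pi_l \A_{j,l} \ge \amin > 0$ for all $j \in [k]$, so the denominators appearing in \eqref{eq: sep-optimal-weights} never blow up on $\ch(\A)$.

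For fixed $\bmu \in \Is$ set $i^* = \istarmu$ and $c_{ij} = (\A_{j,i^*} - \A_{j,i})^2 \ge 0$, so the $i$-th term of the objective is $f_i(\wb_z) = \tfrac{\Delta_i^2}{2}\big(\sum_{j\in[k]} c_{ij}/w_{z,j}\big)^{-1}$. On the positive orthant the map $\wb_z \mapsto \big(\sum_j c_{ij}/w_{z,j}\big)^{-1}$ is concave; this is the standard \emph{``parallel resistance is concave''} fact, obtained by viewing it as the reciprocal of a positive convex function homogeneous of degree $-1$ (or directly from its Hessian). Hence each $f_i$ is concave and continuous on $\ch(\A)$, and the objective $F(\wb_z) = \min_{i \neq i^*} f_i(\wb_z)$ is concave as a pointwise minimum of finitely many concave functions. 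Since $\ch(\A)$ is convex and compact, the maximizer set $\wzstar{\bmu}$ is nonempty, compact, and convex: if $\wb, \wb'$ both attain the optimal value $v$, concavity forces every convex combination to attain a value $\ge v$, hence exactly $v$. This proves the convexity claim.

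For upper hemicontinuity I would invoke Berge's Maximum Theorem \cite{berge1963topological}, as in the non-separator argument (Lemma \ref{lem: non-sep continuity}), but with care because the index set of the inner minimum depends on $\bmu$ through $i^*(\bmu)$. Fix $\bmu_0 \in \Is$. Since $\ocal_{i^*(\bmu_0)}$ is open and contains $\bmu_0$, there is a neighborhood $U \subseteq \Is$ on which $i^*(\bmu) \equiv i^*(\bmu_0)$ is constant. On $U \times \ch(\A)$ the objective $(\bmu, \wb_z) \mapsto \min_{i \neq i^*(\bmu_0)} \tfrac{\Delta_i(\bmu)^2}{2 \sum_j c_{ij}/w_{z,j}}$ is jointly continuous: it is a finite minimum of jointly continuous maps, $\Delta_i(\bmu) = \A_{i^*}^\top \bmu - \A_i^\top \bmu$ is continuous, and $w_{z,j} \ge \amin$ precludes any singularity. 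The constraint correspondence is the constant compact convex set $\ch(\A)$, which is trivially continuous and compact-valued. Berge's theorem then yields that $\bmu \mapsto \wzstar{\bmu}$ is upper hemicontinuous and compact-valued on $U$. As upper hemicontinuity is a local property and $\bmu_0 \in \Is$ was arbitrary, the mapping is upper hemicontinuous on all of $\Is$.

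The main obstacle is precisely the $\bmu$-dependence of the index set of the minimum, which blocks a single global application of Berge's theorem; this is resolved by the local constancy of $i^*$ on the open cells $\ocal_i$ together with the locality of upper hemicontinuity. A secondary point, and the reason the argument only gives a \emph{set-valued} statement rather than the single-valued continuity of the non-separator case, is that the maximizer need not be unique here, so Berge supplies upper hemicontinuity but not continuity. The positivity assumption $\amin > 0$ is what keeps the objective well-behaved up to the boundary of $\ch(\A)$ and is used in both the concavity and the continuity arguments.
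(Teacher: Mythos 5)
Your proof is correct and follows essentially the same strategy as the paper: concavity of the objective over the convex set $\ch(\A)$ yields convexity of the maximizer set, and a maximum theorem yields upper hemicontinuity. The only differences are in execution --- the paper gets concavity by writing $F$ as an infimum of functions linear in $\wb_z$ and cites Theorem 22 of Degenne and Koolen for the hemicontinuity step, whereas you prove concavity of the explicit closed form via the parallel-resistance fact and apply Berge's theorem after localizing to a neighborhood of $\bmu_0$ on which $i^*(\bmu)$ is constant; this localization cleanly handles the $\bmu$-dependence of the index set of the minimum, a point the paper's appeal to joint continuity on $\ch(\A)\times\Is$ leaves implicit.
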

    \begin{proof}
        Recall that $\wzstar{\bmu}$ is the set of solutions that maximize
        \begin{align*}
          F(\wb, \bmu) \triangleq \inf_{\bmu' \in \text{Alt}(\bmu, \A)} \sum_{j \in [k]} w_j \frac{(\bmu_{j} - \bmu'_{j})^2}{2}
        \end{align*}
        where $\wb \in \ch(\A)$. Since this function is the infimum over linear functions in $\wb$, thus, $F$ is concave with respect to $\wb$. Consequently, if $\wb_1$ and $\wb_2$ are both maximizers of $F$, then any convex combination of them is also a maximizer, implying that $\wzstar{\bmu}$ is a convex set. 
        
        As shown in Lemma \ref{lem: sepfi}, $F(\wb)$ can be written as  
        \begin{align*}
             \min_{i \neq i^*(\bmu)} \frac{\Delta_{i}^2}{2 \sum_{j \in [k]} \frac{(\A_{j, \istarmu} - \A_{j, i})^2}{w_{z,j}}},
        \end{align*}
        which is jointly continuous on $\ch(\A) \times I^{s}$. By applying Theorem 22 in \cite{mutiple-correct-answers-degenne2019pure}, we conclude that the mapping $\bmu \rightarrow \wzstar{\bmu}$ is upper hemicontinuous.
    \end{proof}

    As we analyze the sample complexity asymptotically, the initialization phase of the algorithm contributes only a constant term. This is because, for each context $i$, the expected waiting time to collect a sample with $Z = i$ is at most $\frac{1}{\amin}$, as the probability of observing context $i$ at each round is at least $\amin$. Thus, the expected number of initialization steps is at most $\frac{k}{\amin}$, which is a constant, and $\limsup_{\delta \rightarrow 0} \frac{\frac{k}{\amin}}{\logdel} = 0$. For simplicity, we set $t=0$ at the end of the initialization phase.

    For a non-negative real number $\alpha$, we define the $L^2(\alpha)$-approximation set of $\wzstar{\bmu}$ as 
    \begin{align*}
        \wepsstar{\alpha} = \left\{p \in \Delta^{k-1} \mid \dls{p}{\wzstar{\bmu}} \leq \alpha \right \}.
    \end{align*}
    Based on Lemma \ref{lem: sep continuouity}, the set $\wzstar{\bmu}$ is convex which shows that for each $\alpha$, the set $\wepsstar{\alpha}$ is also convex. \footnote{This follows directly from the definition of convexity, as any convex combination of two points in $\wepsstar{\alpha}$ remains within $\wepsstar{\alpha}$.}

    By Lemma \ref{lem: sep continuouity} and the fact that the set of vectors $\bmu'$ which has the same best arm as $\bmu$ is open, for each real number $\epsilon > 0$, there exist a constant $\xi = \xi(\epsilon, \bmu, \A)$ such that 
    \begin{align*}
        \text{if} \infnorm{\bmu' - \bmu} \leq \xi \Rightarrow \bmu' \in \Is(\A), ~  i^*(\bmu') = i^*(\bmu), \forall \mathbf{w}' \in \wzstar{\bmu'}: \mathbf{w}' \in \wepsstar{\epsilon}. 
    \end{align*}  

    Let $\I_{\epsilon} = \{ \bmu' \in \I(\A) \big| \infnorm{\bmu' - \bmu} \leq \xi \}$. For each $T$, we define two events that are likely to happen as
    \begin{align*}
        &\eteps \triangleq \bigcap_{t = T^{1/4}}^T (\muht \in \I_{\epsilon}), \\ 
        &\etepspr \triangleq \bigcap_{t = T^{1/4}}^T \left( \lsnorm{{\Nb^{Z}(t)} - {\sum_{i \in [t]} \pol(i)}} < t^{\frac34} \right).
    \end{align*}
    Recall that $\Nb^{Z}(t)$ shows the number of samples collected from contexts until round $t$ and $\pol(i)$ shows the policy on the context values that is chosen by G-tracking at round $i$. The following lemma shows these events occur with high probability.

    \begin{lemma} \label{lem: sep event prob}
        The exist two constants $B$ and $C$ such that
        \begin{align*}
            \pr((\eteps \cap \etepspr)^c) \leq BT \exp(-CT^{1/8}).
        \end{align*}
    \end{lemma}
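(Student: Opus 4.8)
The plan is to bound the two bad events separately via a union bound, $\pr((\eteps \cap \etepspr)^c) \leq \pr(\eteps^c) + \pr(\etepspr^c)$, and to show that each term decays like $T\exp(-\Theta(T^{1/8}))$. The argument mirrors that of Lemma \ref{lem: non-sep event prob}, but with one essential structural change: G-tracking has no forced-exploration step analogous to D-tracking, so we cannot invoke a deterministic lower bound on the per-context sample counts. The replacement is the positivity assumption $\amin > 0$: since every policy satisfies $\pol(t) = \A\pi$ for some $\pi \in \Delta^{n-1}$, we have $\pol_j(t) = \sum_i \pi_i \A_{j,i} \geq \amin$ for every context $j$ and every round $t$. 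Hence, conditioned on the history, each context is observed with probability at least $\amin$ regardless of the (adaptive) policy chosen, and the process $M^j_t \triangleq \numcontext - \sum_{i=1}^{t}\pol_j(i)$ is a martingale with increments in $[-1,1]$, since $\expec{\mathbbm{1}(Z_i = j) \mid \mathcal{H}_{i-1}} = \pol_j(i)$.

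For $\etepspr^c$, I would observe that $\lsnorm{\Nb^{Z}(t) - \sum_{i\in[t]}\pol(i)} \geq t^{3/4}$ forces some coordinate to satisfy $|M^j_t| \geq t^{3/4}/\sqrt{k}$. Applying Azuma--Hoeffding to $M^j_t$ gives $\pr(|M^j_t| \geq t^{3/4}/\sqrt{k}) \leq 2\exp(-t^{3/2}/(2kt)) = 2\exp(-\sqrt{t}/(2k))$. A union bound over the $k$ coordinates and over $t \in \{T^{1/4},\dots,T\}$, together with $\sqrt{t} \geq \sqrt{T^{1/4}} = T^{1/8}$, yields $\pr(\etepspr^c) \leq 2kT\exp(-T^{1/8}/(2k))$. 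This is precisely the term producing the $T^{1/8}$ exponent in the statement.

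For $\eteps^c$, I would union-bound over $t$ and over the $k$ contexts, reducing to $\pr(|\hat\mu_j(t) - \mu_j| > \xi)$, and then condition on whether $\numcontext \geq \tfrac{\amin}{2}t$. This splits the probability into (i) a reward-concentration term, bounded by $2\exp(-\amin t\xi^2/4)$ via Hoeffding once at least $\tfrac{\amin}{2}t$ unit-variance (hence $1$-sub-Gaussian) reward samples are available; and (ii) the event $\{\numcontext < \tfrac{\amin}{2}t\}$. For (ii), since $\sum_{i=1}^t \pol_j(i) \geq \amin t$, this event forces $M^j_t < -\tfrac{\amin}{2}t$, and a lower-tail Azuma bound gives $\exp(-\amin^2 t/8)$. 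Both contributions decay like $\exp(-\Theta(t)) \leq \exp(-\Theta(T^{1/4}))$ over the summation range, so they are dominated by the $\etepspr^c$ term.

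Combining the two estimates and absorbing the faster-decaying $T^{1/4}$ exponent into the $T^{1/8}$ one (since $\exp(-cT^{1/4}) \le \exp(-cT^{1/8})$ for $T \ge 1$), I obtain constants $B,C$ with $\pr((\eteps \cap \etepspr)^c) \leq BT\exp(-CT^{1/8})$. The main obstacle is item (ii): establishing that each context is sampled linearly often \emph{without} forced exploration. This is exactly where positivity ($\pol_j(t) \geq \amin$) and the martingale concentration argument — rather than the i.i.d.\ Hoeffding used implicitly in the non-separator forced-exploration analysis — become indispensable.
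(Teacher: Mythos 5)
Your proof is correct and follows essentially the same route as the paper's: the same union-bound decomposition into $\eteps^c$ and $\etepspr^c$, the same conditioning on the context counts, and the same use of positivity ($\pol_j(t)\ge\amin$) in place of forced exploration. The only differences are minor refinements rather than a different approach — you threshold $\numcontext$ at $\tfrac{\amin}{2}t$ where the paper uses $\amin\sqrt{t}$ (both suffice for the stated $T^{1/8}$ rate), and you are in fact more careful than the paper in invoking Azuma--Hoeffding for the adaptively chosen context indicators via the martingale $M^j_t$, rather than treating them as i.i.d.\ Bernoulli variables.
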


    We defer the proof of Lemma \ref{lem: sep event prob} to Appendix \ref{subsec: sep event prob}. For now, we present a lemma that establishes the convergence of the G-tracking rule.

    \begin{lemma} \label{lem: G-tracking}
        For each $\epsilon > 0$, there exist a constant $T_{\epsilon}$ such that for each $T \geq T_{\epsilon}$, on event $\eteps \cap \etepspr$, we have 
        \begin{align*}
            \forall t \geq \sqrt{T} : \dls{\frac{\Nb^{Z}(t)}{t}} {\wzstar{\bmu}} \leq 5 \epsilon. 
        \end{align*}
    \end{lemma}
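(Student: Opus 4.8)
The plan is to run a Lyapunov (squared-distance) argument for the context frequency $\frac{\Nb^{Z}(t)}{t}$ against a convex target region, feeding the geometric ``balancing'' property of G-tracking into the variational inequality for projection onto a convex set. Throughout I work on $\eteps \cap \etepspr$ and set $t_0 = T^{1/4}$. First I would record the consequences of the good events. On $\eteps$, for every $s \in [t_0, T]$ we have $\bmuh(s) \in \I_{\epsilon}$, so by the choice of $\xi$ the weight $\wb^{*}(s) \in \wzstar{\bmuh(s)}$ produced by the rule lies in $G := \wepsstar{\epsilon}$, which is convex (Lemma~\ref{lem: sep continuouity} and the footnote), and G-tracking is well defined there. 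Let $e_{Z_{t+1}}$ denote the indicator vector of the observed context, so $\mathbb{E}[e_{Z_{t+1}}\mid \mathcal{H}_t]=\pol(t+1)$; writing $\eta_{t+1}=e_{Z_{t+1}}-\pol(t+1)$ gives the exact recursion $\frac{\Nb^{Z}(t+1)}{t+1} = \frac{\Nb^{Z}(t)}{t} + \tfrac{1}{t+1}\bigl(\pol(t+1)-\frac{\Nb^{Z}(t)}{t}\bigr) + \tfrac{1}{t+1}\eta_{t+1}$, while $\etepspr$ controls the partial sums $S_m=\sum_{i\le m}\eta_i = \Nb^{Z}(m)-\sum_{i\le m}\pol(i)$ by $\lsnorm{S_m} < m^{3/4}$ for $m \in [t_0,T]$.

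Then I would run the distance recursion onto $G$. Let $D_t = \dls{\frac{\Nb^{Z}(t)}{t}}{G}$ and let $p_t$ be the $L^2$-projection of $\frac{\Nb^{Z}(t)}{t}$ onto $G$. Since $p_t\in G$, expanding $\lsnorm{\frac{\Nb^{Z}(t+1)}{t+1}-p_t}^2$ upper-bounds $D_{t+1}^2$ and produces a drift term $\tfrac{2}{t+1}\langle \frac{\Nb^{Z}(t)}{t}-p_t,\ \pol(t+1)-\frac{\Nb^{Z}(t)}{t}\rangle$, a noise term $\tfrac{2}{t+1}\langle \frac{\Nb^{Z}(t)}{t}-p_t,\ \eta_{t+1}\rangle$, and a quadratic remainder that is $O(1/(t+1)^2)$ because both bracketed increments are bounded simplex vectors. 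The balancing property writes $\pol(t+1)-\frac{\Nb^{Z}(t)}{t}=\tfrac{1}{\lambda_t}\bigl(\wb^{*}(t)-\frac{\Nb^{Z}(t)}{t}\bigr)$ with $\lambda_t\in(0,1]$; combined with the projection variational inequality $\langle \frac{\Nb^{Z}(t)}{t}-p_t,\ \wb^{*}(t)-p_t\rangle \le 0$ (valid since $\wb^{*}(t)\in G$), the drift is at most $-\tfrac{2}{(t+1)\lambda_t}D_t^2 \le -\tfrac{2}{t+1}D_t^2$. This gives the one-step inequality $D_{t+1}^2 \le \bigl(1-\tfrac{2}{t+1}\bigr)D_t^2 + \tfrac{2}{t+1}\langle \frac{\Nb^{Z}(t)}{t}-p_t,\ \eta_{t+1}\rangle + \tfrac{C'}{(t+1)^2}$.

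Next I would control the accumulated noise and solve the recursion. Because only the partial sums $S_m$ are controlled (not the individual $\eta_i$), I would sum the noise contribution by parts. The coefficient vectors $a_s=\tfrac{2}{s+1}\bigl(\frac{\Nb^{Z}(s)}{s}-p_s\bigr)$ are $O(1/s)$, and since $s\mapsto\frac{\Nb^{Z}(s)}{s}$ moves by $O(1/s)$ per round and projection onto $G$ is $1$-Lipschitz, $\lsnorm{a_s-a_{s-1}}=O(1/s^2)$; with $\lsnorm{S_s}<s^{3/4}$ the Abel boundary terms are $O(t^{-1/4})$ and the variation series is $O(\sum_s s^{-5/4})=O(t_0^{-1/4})$. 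Feeding this back through the integrating factor $\prod_{s=t_0}^{t-1}(1-\tfrac{2}{s+1})\asymp t_0^2/t^2$, the (bounded) initial value $D_{t_0}\le\sqrt2$ is damped to $O(t_0^2/t^2)=O(T^{-1/2})$ for $t\ge\sqrt{T}$, the $O(1/(t+1)^2)$ terms contribute $O(1/t)=O(T^{-1/2})$, and the noise contributes $O(T^{-1/16})$. Hence $\sup_{\sqrt{T}\le t\le T} D_t \to 0$ as $T\to\infty$, so there is $T_\epsilon$ with $D_t\le\epsilon$ whenever $T\ge T_\epsilon$ and $t\ge\sqrt{T}$. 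Finally, as every point of $G=\wepsstar{\epsilon}$ is within $\epsilon$ of $\wzstar{\bmu}$, the triangle inequality yields $\dls{\frac{\Nb^{Z}(t)}{t}}{\wzstar{\bmu}} \le D_t + \epsilon \le 2\epsilon \le 5\epsilon$, which is the claim (the $5\epsilon$ leaves room for looser constants).

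I expect the main obstacle to be the noise-control step: since $\etepspr$ bounds only the cumulative deviation $\lsnorm{S_m}<m^{3/4}$ rather than per-round increments, the summation-by-parts bookkeeping — in particular establishing the $O(1/s^2)$ variation of the projection-dependent coefficients $a_s$ and that the resulting series is both summable and vanishing — is the delicate part, together with correctly damping the arbitrary early phase $s<t_0$ (where $\bmuh(s)$ need not lie in $\I_\epsilon$ and neither the membership $\wb^{*}(s)\in G$ nor the contraction is available) through the integrating factor.
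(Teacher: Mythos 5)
Your proposal is correct in its essentials but follows a genuinely different route from the paper. The paper's proof is purely geometric: it tracks the \emph{average policy} $\bar{\pol}(t) = \frac{1}{t}\sum_{i\le t}\pol(i)$, whose increments are deterministic given the history, shows via Thales's theorem and an auxiliary point $\mathbf{w}'(t)$ that its distance $X_t$ to $\wepsstar{2\epsilon}$ contracts multiplicatively ($X_{t+1}\le \frac{t}{t+1}X_t$ while $X_t>\epsilon$, and stays below $2\epsilon$ thereafter), and only at the very end converts to $\frac{\Nb^Z(t)}{t}$ by adding the $t^{-1/4}$ discrepancy guaranteed by $\etepspr$. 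You instead run a Lyapunov recursion on $D_t^2=\dls{\frac{\Nb^Z(t)}{t}}{\wepsstar{\epsilon}}^2$ directly, which forces you to carry the martingale noise $\eta_{t+1}$ inside the recursion and to discharge it by Abel summation against the cumulative bound $\lsnorm{S_m}<m^{3/4}$. Both yield the same $t_0^2/t^2$ damping of the initial error and a vanishing noise term, so the conclusion follows; your use of the projection variational inequality together with the exact collinearity $\pol(t+1)-\frac{\Nb^Z(t)}{t}=\frac{1}{\lambda_t}(\wb^*(t)-\frac{\Nb^Z(t)}{t})$, $\lambda_t\in(0,1]$, is a clean substitute for the paper's Thales argument and generalizes more readily beyond the specific geometry. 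What the paper's route buys is a much lighter noise analysis (a single triangle inequality at the end rather than summation by parts with Lipschitz control of the projection-dependent coefficients), which is precisely the step you correctly identify as the delicate part of your plan; conversely, your route avoids the paper's auxiliary constructions ($\mathbf{w}'(t)$, $C_1$, $C_2$) and its two-phase case analysis. One small point to make rigorous when writing this up: handle the degenerate case $\frac{\Nb^Z(t)}{t}=\wb^*(t)$ (where the ray is undefined) separately, noting that there $D_t=0$ so the one-step bound is vacuous, and verify that the quadratic remainder remains $O(1/(t+1)^2)$ uniformly even when $\lambda_t$ is small, which holds because $\pol(t+1)$ and $\frac{\Nb^Z(t)}{t}$ both lie in the bounded set $\Delta^{k-1}$.
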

    \begin{proof}      
        First, we introduce some geometric notations. For points $A, B, C, D \in \mathbb{R}^k$, let $\overline{AB}$ show the length of the segment connecting $A$ to $B$, $AB$ show the line passing through $A$ and $B$, and $AB \parallel CD$ show that vectors $\overrightarrow{AB}$ and $\overrightarrow{CD}$ are parallel. At each round $t$, let $\overline{\pol}(t) = \frac{\sum_{i \in [t]}\pol(i)}{t}, \overline{\Nb}(t) = \frac{\Nb^{Z}(t)}{t}$, and
        \begin{align*}
            Y_t = \dls{\overline{\Nb}(t)}{\wepsstar{2\epsilon}}, \quad X_t = \dls{\overline{\pol}(t)}{\wepsstar{2\epsilon}}. 
        \end{align*}

        \begin{figure}[H]
            \centering
            \includegraphics[width=0.9\linewidth]{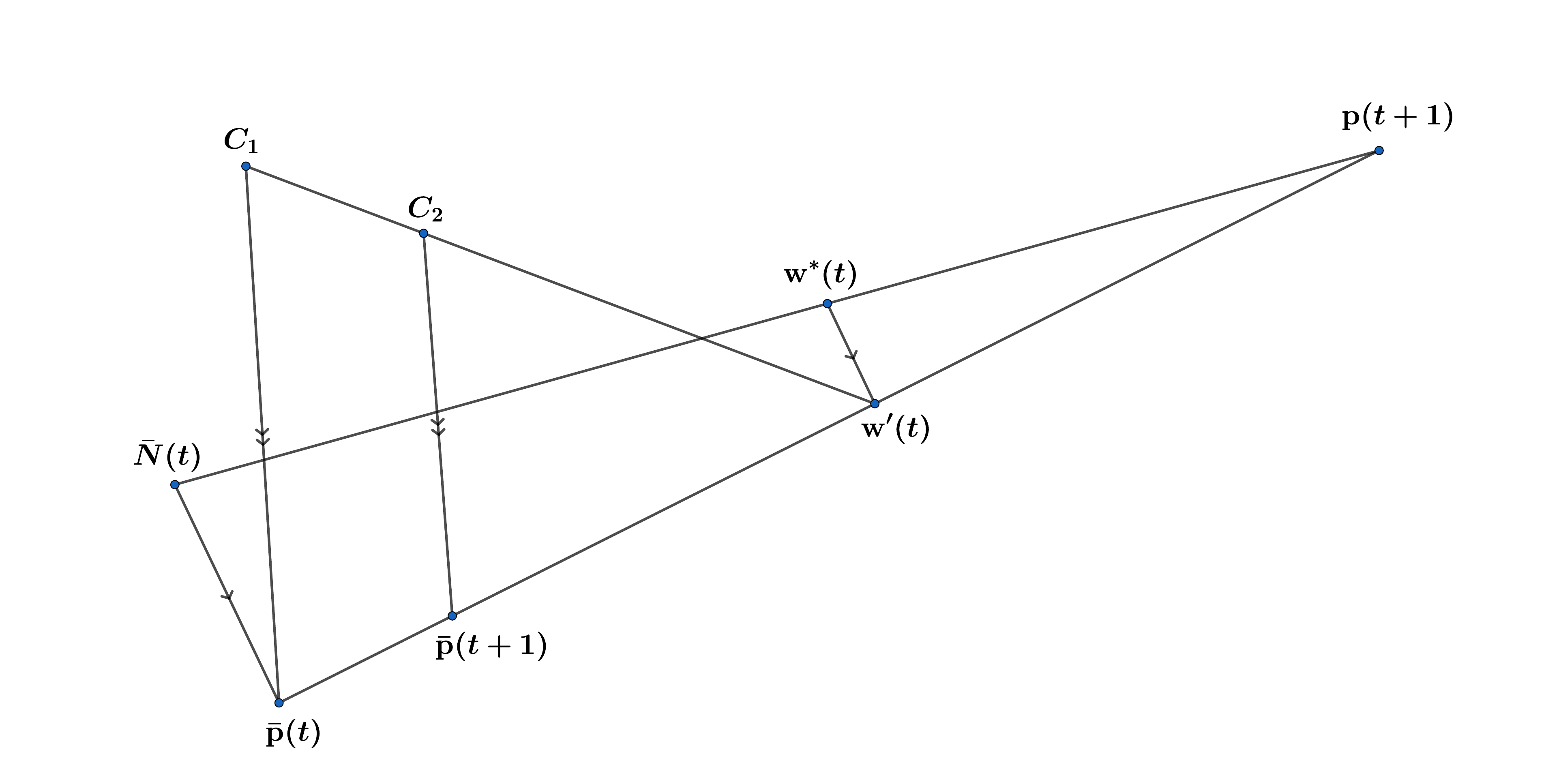}
            \caption{Illustration of the relative positions of points in the proof of Lemma \ref{lem: G-tracking}, where all points lie in $\Delta^{k-1}$.}
            \label{fig: G-lemma}
        \end{figure}
    
        Moreover, let $\mathbf{w}'(t)$ be the unique point on the line $\overline{\pol}(t) \pol(t+1)$ such that $\overline{\Nb}(t)\overline{\pol}(t) \parallel \wb^{*}(t)\mathbf{w}'(t)$. $C_1$ denotes the closest point in $\wepsstar{2 \epsilon}$ to $\overline{\pol}(t)$ (note that $\wepsstar{2 \epsilon}$ is closed) and $C_2$ denotes the unique point on $C_1 \mathbf{w}'(t)$ such that $C_2 \overline{\pol}(t+1) \parallel C_1 \overline{\pol}(t)$. Figure \ref{fig: G-lemma} provides a complete illustration of all these points.
    
        We set $T_{\epsilon} = \epsilon^{-16}$. By this choice of $T_{\epsilon}$, if $T \geq T_{\epsilon}$, for each $t \geq T^{\frac14}$ we have  
        \begin{align} \label{eq: g-lemma-eq3}
            \overline{\overline{\pol}(t) \overline{\pol}(t+1)} \leq \epsilon \quad \text{ and } \quad \overline{\overline{\Nb}(t) \overline{\pol}(t)} \leq \epsilon. 
        \end{align}
        
        The reason is that 
        \begin{align*}
            \overline{\overline{\pol}(t) \overline{\pol}(t+1)} = \frac{1}{t+1} \overline{\overline{\pol}(t) \pol(t+1)} \leq \frac{\sqrt{2}}{t+1} \leq \frac{\sqrt{2}}{T^{1/4}} \leq \epsilon.  
        \end{align*}
        The first inequality holds because the points $\overline{\pol}(t)$ and $\pol(t+1)$ are on $\Delta^{k-1}$ and the distance of any pair of points of the simplex is at most $\sqrt{2}$. For the second property, note that as $\etepspr$ holds, $\overline{\overline{\Nb}(t) \overline{\pol}(t)} < t^{-1/4} < T^{-1/16} \leq \epsilon$.  
    
        Based on the definition, $C_1 \in \wepsstar{2 \epsilon}$ and as $\eteps$ holds, $\wb^{*}(t) \in \wepsstar{\epsilon}$. Using Thales's theorem in triangle $\overline{\Nb}(t) \overline{\pol}(t) \pol(t+1)$ , we obtain $\overline{\wb^{*}(t) \mathbf{w}'(t)} \leq \overline{\overline{\Nb}(t) \overline{\pol}(t)} \leq \epsilon$ which means that $\mathbf{w}'(t) \in \wepsstar{2 \epsilon}$. By convexity of the set $\wepsstar{2 \epsilon}$, we have $C_2 \in \wepsstar{2 \epsilon}$. 
    
        We prove the following two statements for each $t \geq T^{1/4}$: 
        
        \begin{align}\label{eq: g-lemma-eq1}
            \text{if }  X_t > \epsilon \Longrightarrow X_{t+1} \leq \frac{t}{t+1} X_t, 
        \end{align}
        \begin{align} \label{eq: g-lemma-eq2}
            \text{if } X_t \leq \epsilon \Longrightarrow \forall t' > t: X_{t'} \leq 2 \epsilon. 
        \end{align} 
    
        To prove \eqref{eq: g-lemma-eq1}, note that as $\mathbf{w}'(t) \in \wepsstar{2 \epsilon}$, if $X_t > \epsilon$ then $\overline{\overline{\pol}(t) \mathbf{w}'(t)} > \epsilon$. This shows that based on \eqref{eq: g-lemma-eq3}, $\overline{\pol}(t+1)$ lines on the segment between $\overline{\pol}(t) \mathbf{w}'(t)$. Then using the Thales's theorem in triangle $C_1 \overline{\pol}(t) \mathbf{w}'(t)$, we have 
        \begin{align*}
            X_{t+1} &= \dls{\overline{\pol}(t+1)}{\wepsstar{2 \epsilon}} \leq \overline{C_2 \overline{\pol}(t+1)} = \frac{\overline{\mathbf{w}'(t) \overline{\pol}(t+1)}}{\overline{\mathbf{w}'(t) \overline{\pol}(t)}} \overline{C_1 \overline{\pol}(t)} \\
            &\leq \frac{\overline{\pol(t+1) \overline{\pol}(t+1)}}{\overline{\pol(t+1) \overline{\pol}(t)}} \overline{C_1 \overline{\pol}(t)} = \frac{t}{t+1} \; \dls{\overline{\pol}(t)}{\wepsstar{2 \epsilon}} = \frac{t}{t+1} X_t.
        \end{align*}
        For the second inequality, we used the property that for positive numbers $a,b,c$ with $a < b$, we have $\frac{a}{b} < \frac{a+c}{b+c}$.
    
        We prove \eqref{eq: g-lemma-eq2} by contradiction. Assume $X_t \leq \epsilon$ and there exist a $t' > t$ such that $X_{t'} > 2 \epsilon$ and $t'$ is the smallest number with this property. As $X_{t'} > 2 \epsilon$, then \eqref{eq: g-lemma-eq3} implies $X_{t'-1} > \epsilon$. Using \eqref{eq: g-lemma-eq1}, we have $X_{t'} \leq \frac{t'}{t'+1} X_{t'-1}$ which implies $X_{t'-1} > 2 \epsilon$ which is a contradiction and proves \eqref{eq: g-lemma-eq2}. 
    
        Now we introduce the time $\tau$ as 
        \begin{align*}
            \tau \triangleq \inf \left\{ t > T^{1/4} \mid X_t \leq \epsilon \right \}. 
        \end{align*}
    
        Note that as a result of \ref{eq: g-lemma-eq1}, $\tau$ is finite. The aim is to prove $\tau \leq \sqrt{T}$. To show this assume $\tau > \sqrt{T}$. In this case, for all $t \in [\lceil T^{1/4} \rceil , \lfloor \sqrt{T} \rfloor]$, $X_t > \epsilon$ which means that $X_{t+1} \leq \frac{t}{t+1} X_t$. Then 
        \begin{align*}
            X_{\lfloor \sqrt{T} \rfloor} &\leq  X_{\lceil T^{1/4} \rceil} \times \frac{\lceil T^{1/4} \rceil}{\lceil T^{1/4} \rceil + 1} \times \frac{\lceil T^{1/4} \rceil + 1}{\lceil T^{1/4} \rceil + 2} \times \cdots \times \frac{\lfloor \sqrt{T} \rfloor - 1}{\lfloor \sqrt{T} \rfloor} \\
            &= X_{\lceil T^{1/4} \rceil} \frac{\lceil T^{1/4} \rceil}{\lfloor \sqrt{T} \rfloor} \leq \sqrt{2} \frac{\lceil T^{1/4} \rceil}{\lfloor \sqrt{T} \rfloor} \leq \epsilon,
        \end{align*}
        where we used $X_{\lceil T^{1/4} \rceil} \leq \sqrt{2}$ and $T \geq T_{\epsilon}$. The above inequality shows a contradiction which implies $\tau \leq \sqrt{T}$. Then, we have
        \begin{align*}
            \tau \leq \sqrt{T} \Longrightarrow \forall t> \sqrt{T}: X_t \leq 2 \epsilon \xLongrightarrow{\eqref{eq: g-lemma-eq3}}  Y_t \leq 3 \epsilon \Longrightarrow \dls{\frac{\Nb^{Z}(t)}{t}} {\wzstar{\bmu}} \leq 5 \epsilon. 
        \end{align*}
    
    \end{proof}

    On the good event $\eteps \cap \etepspr$, for each $t \geq \sqrt{T}$, these three properties hold: (i) $\muht$ is close to $\bmu$, (ii) $i^*(\muht) = i^*(\bmu)$ which implies Alt$(\muht, \A) = $ Alt$(\bmu, \A)$, and (iii) the points $\Nb^{Z}(t)/t$ to the set of solutions $\wzstar{\bmu}$ stay close.

    Similar to the non-separator setting, by combining these properties we can derive a lower bound on $\lamht$. Recall the set $\mathcal{O}_i = \left\{ \bmu \in \I^s \ \big| \A_i^\top \bmu > \A_j^\top \bmu, \ \forall j \neq i \right\}$. We define the function $g: \mathcal{O}_{\istarmu} \times [0,1] ^ k \rightarrow \mathbb{R}$ as
    \begin{align*}
        g(\bmu', \mathbf{w}') = \inf_{\mathbf{\lambda} \in \text{Alt}(\bmu, \A)} \sum_{j \in [k]} \frac{w'_j}{2}(\mu'_i - \mathbf{\lambda}_i)^2,
    \end{align*}
    and the constant 
    \begin{align*}
        \cstareps = \inf_{\substack{\bmu' \in \I^s_{\epsilon} \\ \mathbf{w}': \dls{\mathbf{w}'} {\wzstar{\bmu}} \leq 5 \epsilon}} g(\bmu', \mathbf{w}').
    \end{align*}

    Note that the mapping $(\mathbf{\lambda}, \bmu', \mathbf{w}') \rightarrow \sum_{j \in [k]} \frac{w'_j}{2}(\mu'_i - \mathbf{\lambda}_i)^2$ is jointly continuous and the constraints set Alt$(\bmu, \A)$ is independent of $(\bmu', \mathbf{w}')$. Then Berge's maximum theorem \citep{berge1963topological} implies that $g$ is continuous.
    
    With these definitions, let $T \geq T_{\epsilon}$ and the event $\eteps \cap \etepspr$ holds. Then, for $t \geq \sqrt{T}$, $\lamht$ is equal to $g(\muht, \frac{\Nb^{Z}(t)}{t})$, and the three mentioned properties imply that $\lamht \geq t \cstareps$. 

    The rest of the proof follows exactly the same as the non-separator setting.

    \subsection{Proof of Lemma \ref{lem: sep event prob}} \label{subsec: sep event prob}
    We first bound both $\pr(\E^c_T(\epsilon))$ and $\pr(\E'^c_T(\epsilon))$ separately, and then we apply the union bound on $\pr((\eteps \cap \etepspr)^c)$. 

    \textbf{Bounding $\E^c_T(\epsilon)$.} Note that according to the union bound, we have
    \begin{align*}
        \pr(\E^c_T(\epsilon)) \leq \sum_{t = T^{1/4}}^T \sum_{j \in [k]} \pr(|\muh_{j}(t) - \mu_{j}| > \xi).
    \end{align*}
    Now, note that we have
    \begin{align*}
        \pr(|\muh_{j}(t) - \mu_{j}| > \xi) &= \pr(\vert \muh_{j}(t) - \mu_{j} \vert > \xi \mid N^{Z}_j(t) < \amin \sqrt{t}) \pr(N^{Z}_j(t) < \amin \sqrt{t})  \\ 
                                           &+  \pr(\vert \muh_{j}(t) - \mu_{j} \vert > \xi \mid N^{Z}_j(t) \geq \amin \sqrt{t} )  \pr(N^{Z}_j(t) \geq \amin \sqrt{t}).         
    \end{align*}
    Since $\pr(|\muh_{j}(t) - \mu_{j}| > \xi \mid N^{Z}_j(t) < \amin \sqrt{t})$ and $\pr(N^{Z}_j(t) \geq \amin \sqrt{t})$ are less than $1$, we have
    $$
        \pr(\vert \muh_{j}(t) - \mu_{j} \vert > \xi) \leq \pr(N^{Z}_j(t) < \amin \sqrt{t})  + \pr(\lvert \muh_{j}(t) - \mu_{j} \rvert > \xi \mid N^{Z}_j(t) \geq \amin \sqrt{t} ) 
    $$

    Note that $N^{Z}_j(t)$ can be written as 
    $$
        N^{Z}_j(t) = \sum_{s = 1}^{t} \mathbbm{1}_{(z_s = j)}.
    $$
    Therefore, if $W_s$ denotes $\mathbbm{1}_{(z_s = j)}$, it is a Bernoulli random variable with parameter greater than $\amin$. By using Hoeffding's inequality, we have
    \begin{align*}
        &\pr \left( N^{Z}_j(t) < \amin \sqrt{t} \right) = \pr \left( \sum_{s = 1}^{t} W_s < \amin \sqrt{t} \right) \\ 
        &\leq \pr \left ( \sum_{s = 1}^{t} W_s - \mathbb{E}[N^{Z}_{j}(t)] < \amin \sqrt{t} - t\amin \right) \\
        & \leq \exp \left ( \frac{-2\amin^2(t - \sqrt{t})^2}{t} \right) = \exp \left (-2\amin^2(\sqrt{t} - 1)^2 \right) \\
        & \leq \exp \left (-\amin^2 t \right)
    \end{align*}
    The last inequality is correct for sufficiently large $t$.

    Similarly, by applying Hoeffding's inequality, we have
    $$
        \pr(\lvert \muh_{j}(t) - \mu_{j} \rvert > \xi \mid N^{Z}_j(t) \geq \amin \sqrt{t} )  \leq 2 \exp \left (-\frac{\xi^2 \amin \sqrt{t}}{2} \right).
    $$
    Therefore,
    $$
          \pr(|\muh_{j}(t) - \mu_{j}| > \xi) \leq  \exp \left (-\amin^2 t \right) +  2 \exp \left (-\frac{\xi^2 \amin \sqrt{t}}{2} \right), 
    $$
    and then,
    $$
      \pr(\E^c_T(\epsilon)) \leq \sum_{t = T^{1/4}}^T \sum_{j \in [k]} \pr(|\muh_{j}(t) - \mu_{j}| > \xi) \leq B_1T \exp (-C_1 T^{1/8}) 
    $$
    where $B_1$ and $C_1$ are constants and the functions of $\xi,~\amin,$ and $k$.

    \textbf{Bounding $\E^{'c}_T(\epsilon)$.} Similar to the previous part, we use the union bound over $t$, then we have
    \begin{align}\label{eq: sep-event-union}
        \pr(\E^{'c}_T(\epsilon)) &\leq \sum_{t = T^{\frac14}}^{T} \pr\left( \lsnorm{{\Nb^{Z}(t)} - {\sum_{i \in [t]} \pol(i)}} > t^{\frac34} \right) \nonumber \\
        &\leq \sum_{t = T^{\frac14}}^{T} \pr\left( \sqrt{k} \infnorm{{\Nb^{Z}(t)} - {\sum_{i \in [t]} \pol(i)}} > t^{\frac34} \right) \\
        & = \sum_{t = T^{\frac14}}^{T} \pr\left( \infnorm{{\Nb^{Z}(t)} - {\sum_{i \in [t]} \pol(i)}} > \frac{t^{\frac34}}{k^{\frac12}} \right),
    \end{align}
    where we also used the inequality $\lsnorm{{\Nb^{Z}(t)} - {\sum_{i \in [t]} \pol(i)}} \leq  \sqrt{k} \infnorm{{\Nb^{Z}(t)} - {\sum_{i \in [t]} \pol(i)}}$.
     
    Again, after applying the union bound over $j \in [k]$, we have
    \begin{align}\label{eq: sep-event-union2}
        \pr\left( \infnorm{{\Nb^{Z}(t)} - {\sum_{i \in [t]} \pol(i)}} > \frac{t^{\frac34}}{k^{\frac12}} \right)  \leq \sum_{j = 1}^{k} \pr\left( \left \lvert N^{Z}_{j}(t) - {\sum_{i \in [t]} p_{j}(i)} \right \rvert > \frac{t^{\frac34}}{k^{\frac12}} \right)         
    \end{align}
    where $p_{j}(i)$ denotes the j-th element of the vector $\pol(i)$. Now, we can write $N^{Z}_{j}(t)$ as the sum of $t$ random variables as follows.
    $$
           N^{Z}_j(t) = \sum_{s = 1}^{t} \mathbbm{1}_{(z_s = j)} = \sum_{i = 1}^{t} W_i
    $$
    Note that according to Algorithm \ref{algo: sep}, $W_i$ is a Bernoulli random variable with parameter $\pol(i)_{i}(s)$. Therefore, we have
    $$
        \pr\left( \left \lvert N^Z_{j}(t) - {\sum_{i \in [t]} p_{j}(i)} \right \rvert > \frac{t^{\frac34}}{k^{\frac12}} \right)  = \pr\left( \left \lvert \sum_{i = 1}^{t} W_i - {\sum_{i \in [t]} p_{j}(i)} \right \rvert > \frac{t^{\frac34}}{k^{\frac12}} \right),
    $$
    Then, by applying Hoeffding's inequality, we obtain
    $$
        \pr\left( \left \lvert \sum_{i = 1}^{t} W_i - \sum_{i \in [t]} p_{j}(i) \right \rvert > \frac{t^{\frac34}}{k^{\frac12}} \right) \leq 2 \exp \left( - \frac{2t^{\frac32}}{kt} \right)  =   2 \exp \left( - \frac{2t^{\frac12}}{k} \right) 
    $$
    Finally, by combining the above inequality and Equations \eqref{eq: sep-event-union} and \eqref{eq: sep-event-union2}, we have
    $$
        \pr(\E^{'c}_T(\epsilon)) \leq \sum_{t = T^{\frac14}}^{T} 2k \exp \left( - \frac{2t^{\frac12}}{k} \right)
                                \leq TB_2 \exp \left( - \frac{2T^{\frac18}}{k} \right)
    $$
    where $B_2 = 2k$ and $C_2 = \frac{2}{k}$.

    All in all, by combining the derived upper bounds for $\pr (\E^{c}_T(\epsilon))$ and $\pr(\E^{'c}_T(\epsilon))$, we have
    $$
        \pr((\eteps \cap \etepspr)^c) \leq \pr (\E^{c}_T(\epsilon)) + \pr(\E^{'c}_T(\epsilon)) \leq BT \exp (-CT^{\frac18}),
    $$
    where $B = 2\max (B_1, B_2)$ and $C = \min (C_1, C_2)$.

\section{Discussion on Solving the Optimization Problem of Lower bound} \label{apd: opt-solving}

The optimization problem \eqref{eq: middle non-sep LB} can be solved efficiently and with arbitrary precision. Notably, the same optimization problem arises in the classic BAI problem with Gaussian rewards \citep{bai-gaussian-barrier2022non}. Although the definitions of $\Delta_i$ differ between these two settings—and in this paper, these gaps depend on the context probability matrix—the optimization problem itself remains similar, allowing the use of similar techniques to solve it. 
    \begin{restatable}[\cite{bai-gaussian-barrier2022non}]{lemma}{nonsepw} \label{lem: non-sep w1}
        Consider the  equation
        \begin{equation} \label{eq: non-sep w1}
            \sum_{\substack{i \in [n] \\ i \neq \istarmu}} \frac{1}{(w \Delta_i^2 - 1)^2} = 1 , 
        \end{equation}
         where $\Delta_i > 0$. This equation has a unique solution $w^*$ in the interval $\left[\frac{2}{\Delta_{min}^2}, \frac{1 + \sqrt{n-1}}{\Delta_{min}^2}\right]$. Define $\ub = \left(\frac{w^*}{w^*\Delta_1^2 -1}, \frac{w^*}{w^*\Delta_2^2 -1}, \ldots , w^*, \dots, \frac{w^*}{w^*\Delta_n^2 -1}\right)$, where $\istarmu$-th element is $w^*$. The normalized vector $\frac{\ub}{\norm*{\ub}_{1}} \in \Delta^{n-1}$ is the unique solution of Equation \eqref{eq: middle non-sep LB}.
    \end{restatable}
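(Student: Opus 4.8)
The plan is to reduce the $(n-1)$-dimensional max-min problem \eqref{eq: middle non-sep LB} to a one-dimensional optimization whose stationarity condition is exactly Equation \eqref{eq: non-sep w1}. Write $i^* = \istarmu$ and $g_i(\wb) = \frac{\Delta_i^2}{2}\cdot\frac{w_{i^*} w_i}{w_{i^*}+w_i}$ for $i \neq i^*$, so the objective is $F(\wb) = \min_{i \neq i^*} g_i(\wb)$. Since $(a,b)\mapsto \frac{ab}{a+b}$ is half the harmonic mean and hence concave on the positive orthant, each $g_i$ is concave and $F$ is concave on the simplex, so a maximizer exists by compactness. First I would observe that any maximizer has all coordinates strictly positive: if some $w_i = 0$, then $g_i = 0$ and $F(\wb)=0$, which is beaten by any fully-supported $\wb$.

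The key structural step is to show that at any maximizer $\wb^*$ the values $g_i(\wb^*)$ share a common value $C$ for all $i\neq i^*$. I would argue this by perturbation: if the set $B = \{i : g_i(\wb^*) > \min_j g_j(\wb^*)\}$ were nonempty, I would shift an infinitesimal mass from the arms in $B$ onto $w_{i^*}$. Because $\partial g_i/\partial w_{i^*} > 0$ for every $i$, this strictly raises every term currently achieving the minimum, while the terms in $B$ remain strictly above the minimum for a small enough perturbation by continuity; hence the minimum strictly increases, contradicting optimality. This is the step I expect to require the most care, since one must verify that the reallocation keeps $\wb$ on the simplex, preserves positivity, and does not create new binding constraints.

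Given equalization, I would solve $g_i(\wb)=C$ for $w_i$ in terms of $w_0 := w_{i^*}$ and $C$, and set $w := w_0/(2C)$. This turns the relation into the clean form $w_i = w_0/(w\Delta_i^2 - 1)$, valid precisely when $w > 1/\Delta_{\min}^2$ so that every $w_i > 0$; this already matches the shape of $\ub$ in the statement under the identification $w = w^*$, and a direct substitution confirms $g_i = \frac{1}{2\norm{\ub}_1}$ is independent of $i$. Imposing $\sum_i w_i = 1$ fixes $w_0 = \big(1 + \sum_{i\neq i^*}\frac{1}{w\Delta_i^2 - 1}\big)^{-1}$, so the common value is $C(w) = w_0/(2w)$, and maximizing $F$ reduces to maximizing $C(w)$ over $w > 1/\Delta_{\min}^2$, equivalently minimizing $D(w) = 2w\big(1 + \sum_{i\neq i^*}\frac{1}{w\Delta_i^2 - 1}\big)$. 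Using $\frac{d}{dw}\frac{w}{w\Delta_i^2 - 1} = \frac{-1}{(w\Delta_i^2-1)^2}$, a one-line differentiation gives $D'(w) = 2\big(1 - \sum_{i\neq i^*}\frac{1}{(w\Delta_i^2-1)^2}\big)$, so $D'(w)=0$ is exactly Equation \eqref{eq: non-sep w1}.

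Finally I would settle existence, uniqueness, and the interval bound. Let $h(w) = \sum_{i\neq i^*}\frac{1}{(w\Delta_i^2-1)^2}$, which is continuous and strictly decreasing on $(1/\Delta_{\min}^2, \infty)$ with $h\to+\infty$ at the left endpoint and $h\to 0$ at infinity; hence $h(w)=1$ has a unique root $w^*$. Strict monotonicity of $h$ also forces $D'$ to change sign exactly once, from negative to positive, certifying that $w^*$ is the unique minimizer of $D$ and thus the unique maximizer of $C$. For the interval, at $w=2/\Delta_{\min}^2$ the $\Delta_{\min}$ term alone equals $1$, so $h\geq 1$; at $w=(1+\sqrt{n-1})/\Delta_{\min}^2$ every denominator obeys $w\Delta_i^2 - 1 \geq \sqrt{n-1}$, giving $h \leq (n-1)\cdot\frac{1}{n-1}=1$. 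Monotonicity of $h$ then places $w^*$ in $[\tfrac{2}{\Delta_{\min}^2}, \tfrac{1+\sqrt{n-1}}{\Delta_{\min}^2}]$. Since every maximizer equalizes and the equalizing family is parametrized by the single scalar $w$ with a unique optimal value $w^*$, the normalized vector $\ub/\norm{\ub}_1$ is the unique solution of \eqref{eq: middle non-sep LB}.
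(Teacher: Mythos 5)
Your proof is correct, but note that the paper does not actually prove this lemma: it is imported from \cite{bai-gaussian-barrier2022non}, and Appendix \ref{apd: opt-solving} only records the consequence that the left-hand side of Equation \eqref{eq: non-sep w1} is strictly decreasing on the stated interval so that $w^*$ can be found by binary search. Your argument therefore supplies a self-contained derivation where the paper defers to prior work, and the key steps all check out. The equalization-by-perturbation step is sound because every arm attaining the minimum satisfies $\partial g_i/\partial w_{i^*}=\tfrac{\Delta_i^2}{2}\,w_i^2/(w_{i^*}+w_i)^2>0$ and its own coordinate is untouched, so shifting mass from the slack arms onto $w_{i^*}$ strictly raises the minimum while the slack arms stay above it for a small enough perturbation. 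The change of variables $w=w_{i^*}/(2C)$ does give $w_i=w_{i^*}/(w\Delta_i^2-1)$ with common value $w_{i^*}/(2w)$, the derivative identity $D'(w)=2\bigl(1-\sum_{i\neq i^*}(w\Delta_i^2-1)^{-2}\bigr)$ is exact, and the endpoint evaluations of $h$ at $2/\Delta_{\min}^2$ (where the $\Delta_{\min}$ term alone contributes $1$) and at $(1+\sqrt{n-1})/\Delta_{\min}^2$ (where each of the $n-1$ terms is at most $1/(n-1)$) correctly bracket the unique root of the strictly decreasing $h$. Uniqueness of the maximizer then follows as you say, since every maximizer is fully supported, hence equalized, hence lies in the one-parameter family indexed by $w$ with $w_{i^*}$ pinned down by the simplex constraint, and that family has a unique optimum at $w^*$. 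The only cosmetic remark is that the concavity observation is not actually needed (compactness and continuity already give existence), but it does no harm.
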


    As discussed in \cite{bai-gaussian-barrier2022non}, the left-hand side of Equation \eqref{eq: non-sep w1} is strictly decreasing over the interval $\left[\frac{2}{\Delta_{min}^2}, \frac{1 + \sqrt{n-1}}{\Delta_{min}^2}\right]$. Consequently, simple numerical methods, such as binary search, can be employed to approximate the solution efficiently. Leveraging this property, we can efficiently compute the weights for the optimization problem in Equation \eqref{eq: middle non-sep LB}. This capability enables the design of an efficient learning algorithm, as discussed later.

\section{Discussion on the Case of Unknown Context Probability Matrix} \label{apd: unknown context}

    The assumption that the context probability matrix $\A$ is known does not always hold in practice. A natural question, then, is whether the problem becomes fundamentally harder, in terms of minimum sample complexity, when $\A$ is unknown. For instance, in the setting of contextual bandits, \cite{rl-difference-estimation-narang2024sample} showed that the sample complexity of BAI with unknown context distributions remains of the same order as in the case with known distributions. This indicates that, in that setting, not knowing the context distribution does not significantly increase the difficulty of the problem.
    
    However, the following lemma shows that this is not the case in the separator post-action context setting. In this setting, the absence of knowledge about $\A$ can arbitrarily increase the sample complexity lower bound, underscoring a fundamental difference from the contextual bandit setting.

    \begin{lemma} \label{lem: unknown-A}
        Consider the instance with $n=k=3$ and parameters 
        \begin{align} \label{eq: unknown-A-instance}
             \A = \begin{bmatrix}
                    \epsilon & 2 \epsilon & \frac12 \\
                    1 -  \epsilon & 1 - 2 \epsilon & \frac12
                \end{bmatrix}, \quad ~    
            \bmu = \begin{bmatrix}
                0 \\
                1
         \end{bmatrix},       
        \end{align}
        where $\epsilon$ is a small positive real number. Then, we have
        \begin{enumerate}
            \item On instance \eqref{eq: unknown-A-instance} with known $\A$, the characteristic $T_{S}^*(\bmu, \A)$ defined in \eqref{eq: sep LB1} is equal to $8$.
            \item For any $\delta$-correct algorithm $(P_t, \tau, \hat{i})$ interacting with instance \eqref{eq: unknown-A-instance} with unknown $\A$, $\mathbb{E}_{\bmu, \A}[\tau] \geq \frac{1}{4\epsilon} d_B(\delta, 1- \delta)$. 
        \end{enumerate}
    \end{lemma}

    \begin{proof}
        To prove the first part, recall that
        \begin{align*}
            T_{S}^*(\bmu, \A)^{-1} = \sup_{\mathbf{w} \in \Delta^{2}} \min_{i \neq i^*(\bmu, \A)} \frac{\Delta_{i}^2}{2 \sum_{j \in [k]} \frac{(\A_{j, i^*(\bmu, \A)} - \A_{j, i})^2}{\sum_{l \in [n]} w_l \A_{j, l}}}. 
        \end{align*}
        Note that $i^*(\bmu, \A) = 1$, $\Delta_2 = \epsilon$, and $\Delta_3 = \frac12 - \epsilon$. Moreover, for each $i \neq i^*(\bmu, \A), j \in [k]$, $\Delta_i^2 = (\A_{j, i^*(\bmu, \A)} - \A_{j, i})^2$. After inserting and simplification, we have 
        $$
        T_{S}^*(\bmu, \A)^{-1} = \sup_{\mathbf{w} \in \Delta^{2}}  \frac{1}{2 \left( \frac{1}{w_1 \epsilon + 2w_2 \epsilon + \frac{w_3}{2}} + \frac{1}{w_1 (1 - \epsilon) + w_2 (1 - 2\epsilon) + \frac{w_3}{2}} \right)} = \sup_{\mathbf{w} \in \Delta^{2}} \frac{w_{z,1} w_{z,2}}{2(w_{z,1} + w_{z,2})},
        $$
        where $w_{z,1} = w_1 \epsilon + 2w_2 \epsilon + \frac{w_3}{2}$ and $w_{z,2} = w_1 (1 - \epsilon) + w_2 (1 - 2\epsilon) + \frac{w_3}{2}$. Then setting $\wb = (0,0,1)$ implies $w_{z,1} = w_{z,2} = \frac12$ , then
        $$
        T_{S}^*(\bmu, \A)^{-1} \geq \frac18.
        $$

        On the other hand, since $w_{z,1} + w_{z,2} = 1$, we have $w_{z,1} w_{z,2} \leq \frac14$ and $\frac{w_{z,1} w_{z,2}}{2(w_{z,1} + w_{z,2})} \leq \frac{1}{8}$, which implies 
        $$
        T_{S}^*(\bmu, \A)^{-1} \leq \frac18.
        $$
        Therefore, we conclude that $T_{S}^*(\bmu, \A) = 8$.

        To prove the second part, note that Equation \eqref{eq: general_lower1}, by considering the instance $(\bmu', \A') \in \text{Alt}(\bmu, \A)$ with parameters 
        \begin{align*} 
        \bmu' = \bmu, \quad ~
            \A' = \begin{bmatrix}
                3 \epsilon & 2 \epsilon & \frac12 \\
                1 -  3 \epsilon & 1 - 2 \epsilon & \frac12
            \end{bmatrix},
        \end{align*}
        implies that for any $\delta$-correct algorithm $(P_t, \tau, \hat{i})$, we have $\mathbb{E}_{\bmu, \A}[\tau] \geq T^*(\mu, \A) d_B(\delta, 1- \delta)$, where
        $$
        T^*(\bmu, \A)^{-1} \leq \sup_{\mathbf{w} \in \Delta^{2}} \sum_{i \in \{1,2,3\}} w_i d\left( P_i^{\bmu, \A}, P_i^{\bmu',\A'} \right).
        $$
        Since these two instances only differ in the first column of their context probability matrix, then
        $$
        \sup_{\mathbf{w} \in \Delta^{2}} \sum_{i \in \{1,2,3\}} w_i d\left( P_i^{\bmu, \A}, P_i^{\bmu',\A'} \right) = \sup_{\mathbf{w} \in \Delta^{2}} w_1 d(\A_1, \A'_1),
        $$
        where $d(\A_1, \A'_1)$ denotes the KL divergence between two binary variables and is equal to
        $$
        d(\A_1, \A'_1) = \epsilon \ln\left( \frac{\epsilon}{3 \epsilon}\right) + (1 - \epsilon) \ln\left( \frac{1 - \epsilon}{1 - 3 \epsilon}\right) = - \epsilon \ln(3) + (1- \epsilon) \ln\left( \frac{1 - \epsilon}{1 - 3 \epsilon}\right).
        $$
        We can upper bound $d(\A_1, \A'_1)$ as
        \begin{align*}
            &\ln\left( \frac{1 - \epsilon}{1 - 3 \epsilon}\right) \leq \frac{1 - \epsilon}{1 - 3 \epsilon} - 1 = \frac{2 \epsilon}{1 - 3\epsilon} \\
            &\xLongrightarrow{1-\epsilon < 2(1 - 3\epsilon)} d(\A_1, \A'_1) \leq -\epsilon \ln(3) + 4 \epsilon < 4 \epsilon \\
            &\Longrightarrow
            T^*(\bmu, \A)^{-1} \leq \sup_{\mathbf{w} \in \Delta^{2}} w_1 d(\A_1, \A'_1) \leq 4\epsilon,
        \end{align*}
        which completes the proof.

    \end{proof}

    Now, we discuss the main challenge in designing optimal algorithms for the case of unknown $\A$. When $\A$ is unknown, the problem becomes more challenging because the set of alternative parameters (Alt) changes in structure. Recall that all tracking algorithms based on \cite{track-stop-garivier2016optimal} must solve the optimization problem in Equation \eqref{eq: general_lower1} to determine the optimal weights. Alternatively, one can use the formulation in Equation~\eqref{eq: general_fi}, but both approaches involve a minimization over some set. The main practical complexity of tracking algorithms lies in solving this optimization problem.
    
    If an oracle were available to compute the optimal weights in Equation~\eqref{eq: general_lower1}, one could still use tracking algorithms even when \(\A\) is unknown and achieve the optimal sample complexity. Note that the GLR statistic and the stopping rule need to be adapted to this setting.

    When $\A$ is unknown, both the matrices $(\bmu,\A)$ should be learned by the agent during the learning process and if the error of estimation of each of them is high, the agent cannot stop. Let $\I^u$ denotes the set of parameter pairs $(\bmu, \A)$ such that the best arm $i^*(\bmu, \A)$ is unique. Then for a problem instance parameterized by $(\bmu, \A)$, we define 
    $$
    \text{Alt}(\bmu, \A)  = \{(\bmu', \A') \in \I^u \mid  i^*(\bmu', \A') \neq i^*(\bmu, \A)  \}. 
    $$

    Similar to the case of known $\A$, $\text{Alt}(\bmu, \A)$ can be written as Alt$(\bmu, \A) = \cup_{i \neq i^*(\bmu, \A)} \C_i$, where
    $$
    \C_i = \{ (\bmu', \A') \in \I^u \mid \A_i^{'\top} \bmu'_i > \A_{i^*(\bmu, \A)}^{'\top} \bmu'_{i^*(\bmu, \A)} \}.
    $$

    When \(\A\) is known, the sets \(\C_j\) or \(\text{Alt}\) are convex, so standard convex optimization tools can be applied. However, when $\A$ is unknown, the sets $\C_j$ or Alt, may not be convex, and thus finding optimal weights via convex optimization methods becomes infeasible. Similar issues have been noted in the reinforcement learning literature \citep{al2021adaptive}.    
    
    As mentioned earlier, the case of separator context is a special case of non-separator setting, so it suffices to show non-convexity for the separator case.  Note that if $\A$ is unknown, the set $\text{Alt}(\bmu, \A)$ changes because $\A$ itself can vary within the alternative set. Formally, the new definition of \text{Alt}$(\bmu, \A)$ for this setting is
    
    $$
        \text{Alt}(\bmu, \A)  = \{(\bmu', \A') \in \I \mid  i^*(\bmu', \A') \neq i^*(\bmu, \A)  \},
    $$
    where $\I$ denotes the set of all instances (parameterized by $(\bmu, \A)$) that have a unique best arm. Similar to known setting, we have Alt$(\bmu, \A) = \cup_{i \neq i^*(\bmu, \A)} \C_i$, and
    $$
    \C_i = \{ (\bmu', \A') \in \I \mid \A_i^{'\top} \bmu' > \A_{i^*(\bmu, \A)}^{\top} \bmu \}.
    $$
    As mentioned earlier, we need to solve
    $$
        \inf_{(\bmu', \A') \in \text{Alt}(\bmu, \A)} \sum_{i \in [n]} w_i d(P^{\bmu, \A}_{i}, P^{\bmu', \A'}_{i}),
    $$
    or equivalently, using Equation~\eqref{eq: general_fi},
    \begin{align}
        f_j(\mathbf{w}, \bmu, \A) = \inf_{(\bmu', \A') \in \C_j} \sum_{i \in [n]} w_i d(P^{\bmu, \A}_{i}, P^{\bmu', \A'}_{i}).
    \end{align}
    
    For the case of known $\A$, each set $\C_j$ is convex, so $f_j(\mathbf{w}, \bmu, \A)$ reduces to a convex problem that can be solved via convex optimization methods. However, if $\A$ is unknown, that convexity property no longer holds. The following example shows that neither $\text{Alt}(\bmu, \A)$ nor $\C_j$ is necessarily convex.

    \textbf{Example.} Consider $n = k = 2$. Suppose $i^*(\bmu, \A) = 1$, then $\text{Alt}(\bmu, \A)$ equals $\C_2$. Therefore, we only need to show that $\text{Alt}(\bmu, \A)$ is not convex. Let
    \begin{align}
        &\A = \begin{bmatrix}
                0.6 & 0.4 \\
                0.4 & 0.6
            \end{bmatrix}, ~
        \bmu = \begin{bmatrix}
                    3\\
                    2
        \end{bmatrix}
    \end{align}
    Clearly, \(i^*(\bmu, \A) = 1\) because \(\mathbb{E}[Y \mid X=1] = 2.6\) and \(\mathbb{E}[Y \mid X=2] = 2.4\). Next, let
    \begin{align}
        &\A^{(1)} = \begin{bmatrix}
                0.4 & 0.2\\
                0.6 & 0.8
            \end{bmatrix}, ~
        \bmu^{(1)} = \begin{bmatrix}
                    1 \\
                    3
        \end{bmatrix}, \\
        &\A^{(2)} = \begin{bmatrix}
                0.1 & 0.2 \\
                0.9 & 0.8
            \end{bmatrix}, ~
        \bmu^{(2)} = \begin{bmatrix}
                    5 \\
                    1
        \end{bmatrix}.
    \end{align}
    We see $i^*(\bmu^{(1)}, \A^{(1)}) = i^*(\bmu^{(2)}, \A^{(2)}) = 2$. Therefore, $(\bmu^{(1)}, \A^{(1)})$ and $(\bmu^{(2)}, \A^{(2)})$ belong to $\text{Alt}(\bmu, \A)$. Consider a convex combination of $(\bmu^{(1)}, \A^{(1)})$ and $(\bmu^{(2)}, \A^{(2)})$ as
    
    \begin{align}
        \A^{(3)} = \frac{1}{2} (\A^{(1)}  + \A^{(2)}) = \begin{bmatrix}
                0.25 & 0.2 \\
                0.75 & 0.8
            \end{bmatrix},
        \bmu^{(3)} = \frac{1}{2} (\bmu^{(1)}  + \bmu^{(2)}) = \begin{bmatrix}
                    3 \\
                    2
        \end{bmatrix}.
    \end{align}
    Now, we have $i^*(\bmu^{(3)}, \A^{(3)}) = 1$. It indicates that $(\bmu^{(3)}, \A^{(3)})$ is \emph{not} in $\text{Alt}(\bmu, \A)$, which shows that $\text{Alt}(\bmu, \A)$ is not a convex set.

\section{Further Experiments} \label{apd: experiment}

This section contains additional details on the experimental setup and the results of further experiments for both non-separator and separator settings.

\subsection{Real-World Data Experiment}
\label{subsec:real_world_exp}

We also evaluate our methods on instances constructed from the \emph{KuaiSAR} recommendation logs \cite{real-dataset-sun2023kuaisar}, a real-world short video recommendation dataset containing user--item interaction records together with feedback signals such as clicks, likes, and watch behavior. The goal of these experiments is not to provide a large-scale benchmark, but rather to verify on real interaction data that the empirical behavior is consistent with the theoretical predictions of our framework.

For the non-separator setting, we construct a BAI instance directly from the real data. We use first level content categories as actions, resulting in $n=7$ actions with sufficient support. Concretely, each action corresponds to recommending an item from one of these content categories, so pulling an arm means selecting a category and then observing the feedback generated by an interaction record associated with that category. Operationally, whenever the learner selects an action, one row of the dataset corresponding to that action is sampled uniformly at random, and the observed post-action context and reward are taken from that sampled interaction record.
As post-action context, we consider three discrete feedback outcomes observed after recommendation exposure:
(i) no click,
(ii) click only, and
(iii) click+like.
As reward, we use the clipped watch-ratio
$$
R = \min\!\left(\frac{\text{playing time}}{\text{duration}},\,3\right),
$$
where \emph{playing time} is the total time the user spent watching the recommended video and \emph{duration} is the nominal video length. This ratio is a standard normalized engagement measure; it can exceed $1$ because users may replay or rewatch parts of a video, so the accumulated playing time can be longer than the video duration. We clip the ratio at $3$ to control extreme values while preserving meaningful variation in engagement. To obtain a statistically meaningful BAI instance, we retain actions with sufficient support across all three context values and enforce a minimum gap between the best and second best arms. This yields a genuine non-separator instance: even after conditioning on the post-action context, the expected reward still depends on the action.

For the separator setting, we start from the same real interaction data and use it to estimate the post-action context distribution of each action. We then construct rewards so that they depend only on the realized context. More precisely, for each context value $z$, we compute its empirical mean reward $\mu_z$ from the data and generate rewards as
$$
Y = \mu_z + \mathcal{N}(0,1).
$$
Hence, the action still influences which context is observed, but conditional on the context, the reward is independent of the action. Although this separator instance is not fully raw real-world data, it remains tightly grounded in the dataset because both the action-dependent context probabilities and the context level reward means are estimated from real user interactions.

For both settings, we report the average stopping time over $20$ independent runs. For the TS baseline on the non-separator instance, since the reward is bounded in $[0,3]$, we use the Gaussian stopping rule with variance proxy $\sigma^2=(3-0)^2/4=9/4$, corresponding to the standard sub-Gaussian parameter guaranteed by Hoeffding's lemma for bounded random variables. The results are shown in Table~\ref{tab:real_world_results}. In the non-separator setting, NSTS significantly improves over TS. In the separator setting, STS dramatically outperforms context-ignoring baselines, which fail to stop within the horizon of $50{,}000$ rounds. These results are consistent with the main message of the paper: exploiting post-action context leads to substantial gains when used in a manner aligned with the underlying problem structure.

\begin{table}[t]
\centering
\caption{Real-world data experiments based on KuaiSAR \cite{real-dataset-sun2023kuaisar}. Reported values are average stopping times over $20$ independent runs.}
\label{tab:real_world_results}
\begin{tabular}{llc}
\hline
Setting & Algorithm & Avg. stopping time \\
\hline
Non-separator (KuaiSAR, $7$ actions) & TS & $16355$ \\
Non-separator (KuaiSAR, $7$ actions) & NSTS (ours) & $\mathbf{6973}$ \\
\hline
Separator (KuaiSAR-based, $7$ actions) & STS (ours) & $\mathbf{411}$ \\
Separator (KuaiSAR-based, $7$ actions) & TS & $> 50000$ \\
Separator (KuaiSAR-based, $7$ actions) & LTS & $> 50000$ \\
\hline
\end{tabular}
\end{table}

\subsection{Non-Separator Context} 

We generate random instances for each combination of $n \in \{5, 10, 15\}$ and $k \in \{3, 5, 7\}$. Each instance has $n$ arms and $k$ context values, with rewards drawn from Gaussian distributions of unit variance. The mean matrix \(\bmu\) lies in \([0,10]^{k \times n}\), and \(\amin = \min_{i,j} \A_{j,i} \ge \frac{1}{4k}\). Without loss of generality, we assume the first arm is the best. We randomly generate the matrices $\bmu$ and $\A$, ensuring that for each $i > 1$, $\Delta_i \in [\frac{1}{2n}, \frac{i+1}{2n}]$.  This constraint prevents unchallenging instances in which all gaps are large. Throughout our experiments, we set the confidence parameter $\delta$ to $0.1$.

We compare our algorithm (NSTS) with the classic track-and-stop (TS) algorithm, which disregards the context variable and instead operates solely on the rewards with D-tracking. Note that the TS does not have a theoretical guarantee in this setting because each arm's reward distribution is a mixture of Gaussians, which does not belong to the one-parameter exponential family required for theoretical guarantees of TS \citep{track-stop-garivier2016optimal}. Note that as the expected values of the rewards are in $[0,10]$, it can be shown that the rewards for each arm follow a sub-Gaussian distribution. We then use this property to design a stopping rule. For more details, refer to Appendix \ref{apd: experiment}. 

Table \ref{tab: non-sep} reports the average number of rounds required by each algorithm before stopping, aggregated over $75$ runs. The results indicate that leveraging post-action contexts can significantly improve performance.

\begin{table}[H]
        \centering
        \small
        \caption{Average stopping times of NSTS and TS algorithms for different values of \(n\) and \(k\).  
        The Ratio column shows TS/NSTS.}
        \begin{tabular}{c ccc ccc ccc}
        \toprule
        \multirow{2}{*}{\( n \)} & 
        \multicolumn{3}{c}{\(k = 3\)} & 
        \multicolumn{3}{c}{\(k = 5\)} & 
        \multicolumn{3}{c}{\(k = 7\)} \\
        \cmidrule(lr){2-4}\cmidrule(lr){5-7}\cmidrule(lr){8-10}
         & NSTS & TS & Ratio & NSTS & TS & Ratio & NSTS & TS & Ratio \\
        \midrule
        5  & 30635  & 427872  & 14.0 & 31515  & 308976  & 9.8 & 60960  & 534147  & 8.8 \\
        10 & 92034  & 1244462 & 13.5 & 224236 & 2090792 & 9.3 & 320224 & 2433472 & 7.6 \\
        15 & 272064 & 3667765 & 13.5 & 425685 & 3768745 & 8.9 & 428615 & 3573301 & 8.3 \\
        \bottomrule
        \end{tabular} 
    \label{tab: non-sep}
\end{table}

    ‌Before presenting further details on the results, we discuss the sequential thresholds used in the algorithms. For NSTS, we used the exact thresholds from \eqref{eq: sep threshold}, though they tend to be overly conservative, keeping the error probability well below $\delta$.
    
    As discussed earlier, there is no theoretical guarantee for TS when interacting with a bandit featuring post-action context. This is because, in such cases, the reward distribution of each arm becomes a mixture of Gaussians, which does not belong to the exponential family of distributions. Moreover, experimental results indicate that applying the same thresholds as in the classic bandit setting with unit-variance Gaussian rewards does not control the error rate below $\delta$ (refer to Appendix \ref{apx: non-optimal-non-sep} for a detailed discussion). 
    
    To design new thresholds for TS, we utilized Lemma of \cite{russac2021b}. Since $\bmu \in [0,10]^{k \times n}$, the reward distribution of the arms is sub-Gaussian with parameter $\sigma = \sqrt{26}$. For employing the TS algorithm, we approximate the reward distribution with a Gaussian distribution with variance $\sigma^2 = 26$. Consequently, we can use the known threshold for classic bandit with Gaussian distribution \citep{kaufmann2021mixture}.
    
    Figure \ref{fig: non-sep-dist} shows the average $L^2$ distance between the vector $\frac{\Nb^{X}(t)}{t}$ and the optimal vector $\wstar{\bmu}$ over time for all nine instances introduced in the main text simulated over $75$ runs. For each run, after its stopping time, the distance is considered as zero in the average.  The results indicate that ignoring the post-action context leads to significant sub-optimality and misalignment in tracking the correct proportion of actions.




\begin{figure}
    \centering
    \begin{tabular}{cc}
        \includegraphics[width=0.35\textwidth]{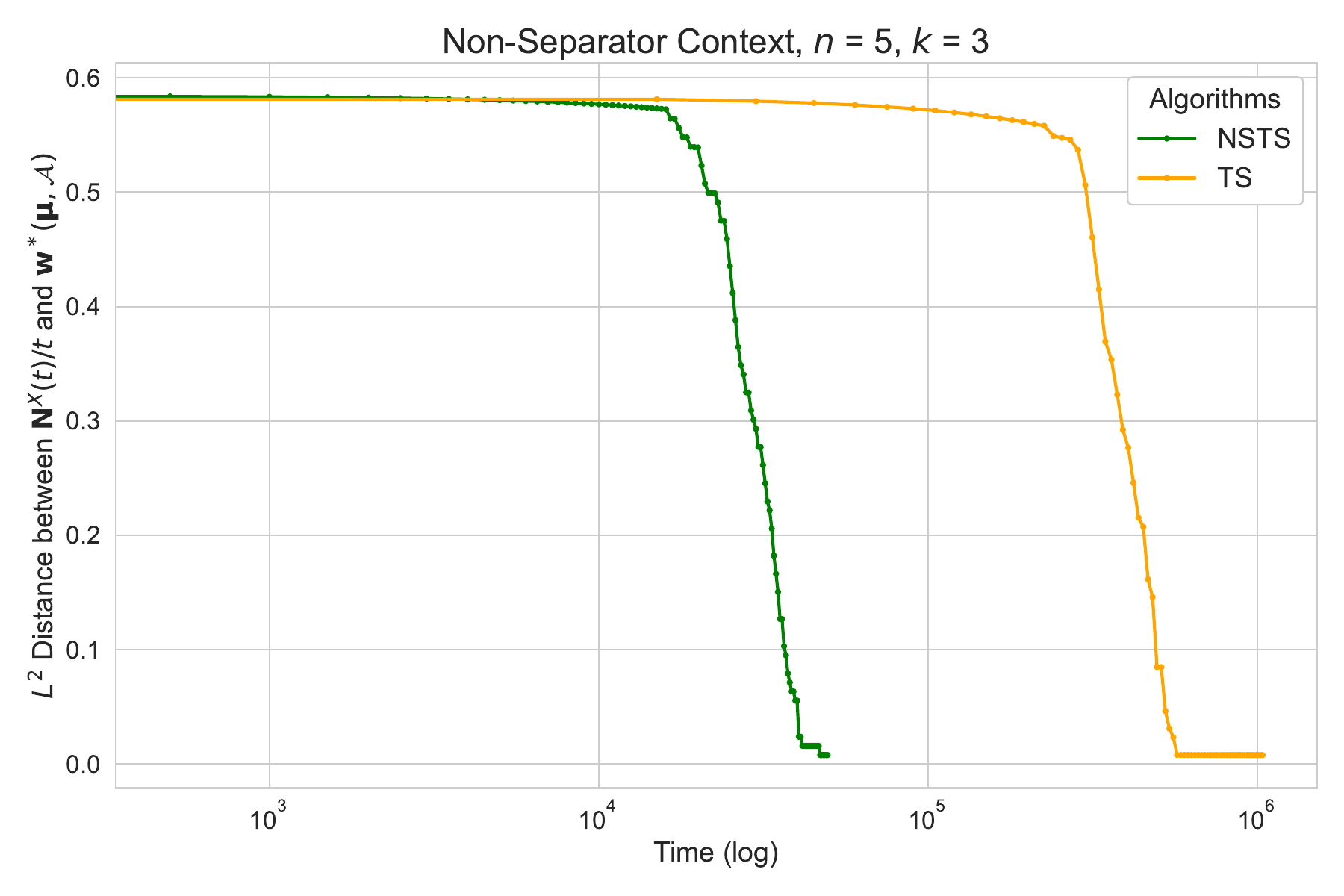} &
        \includegraphics[width=0.35\textwidth]{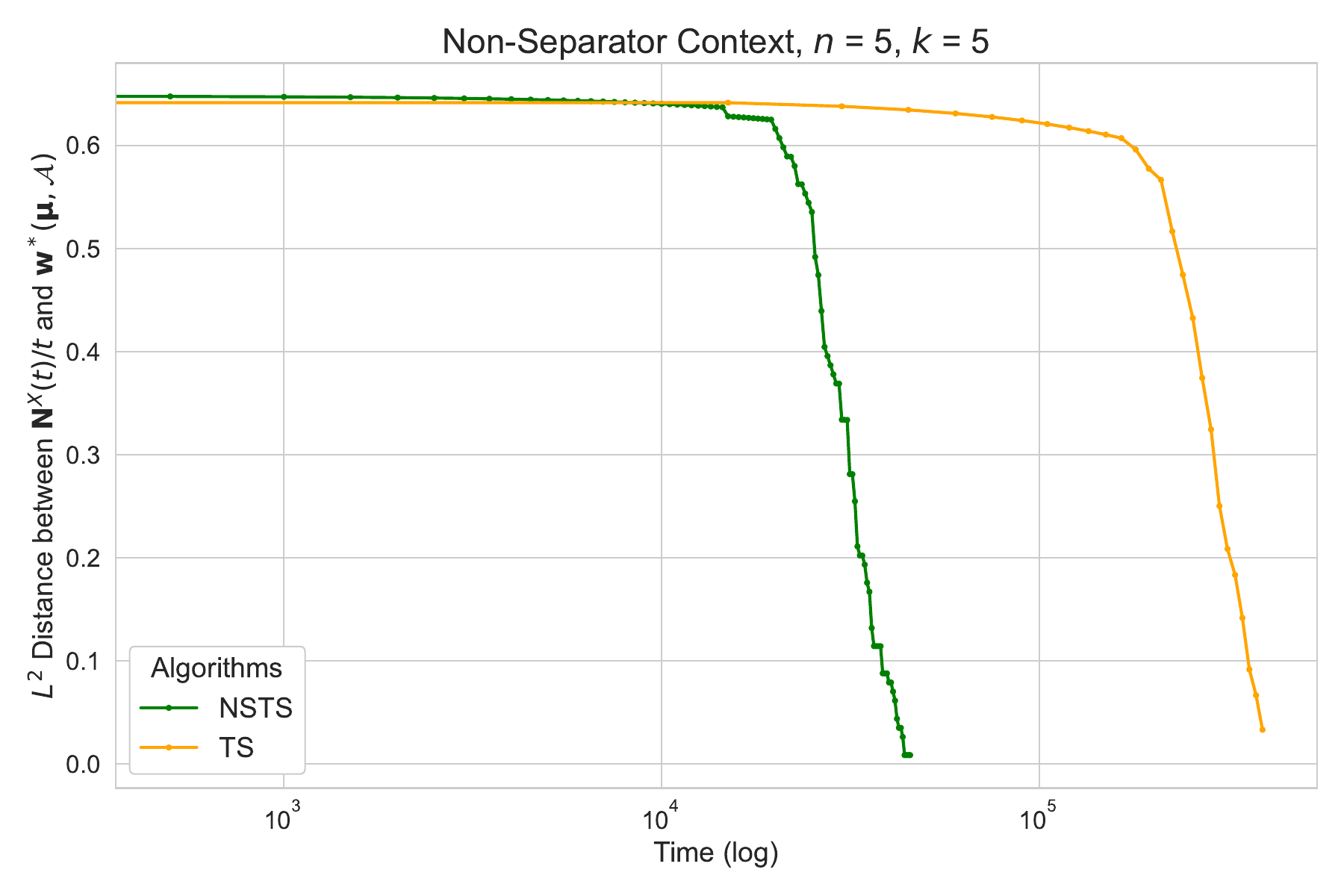} \\
        \includegraphics[width=0.35\textwidth]{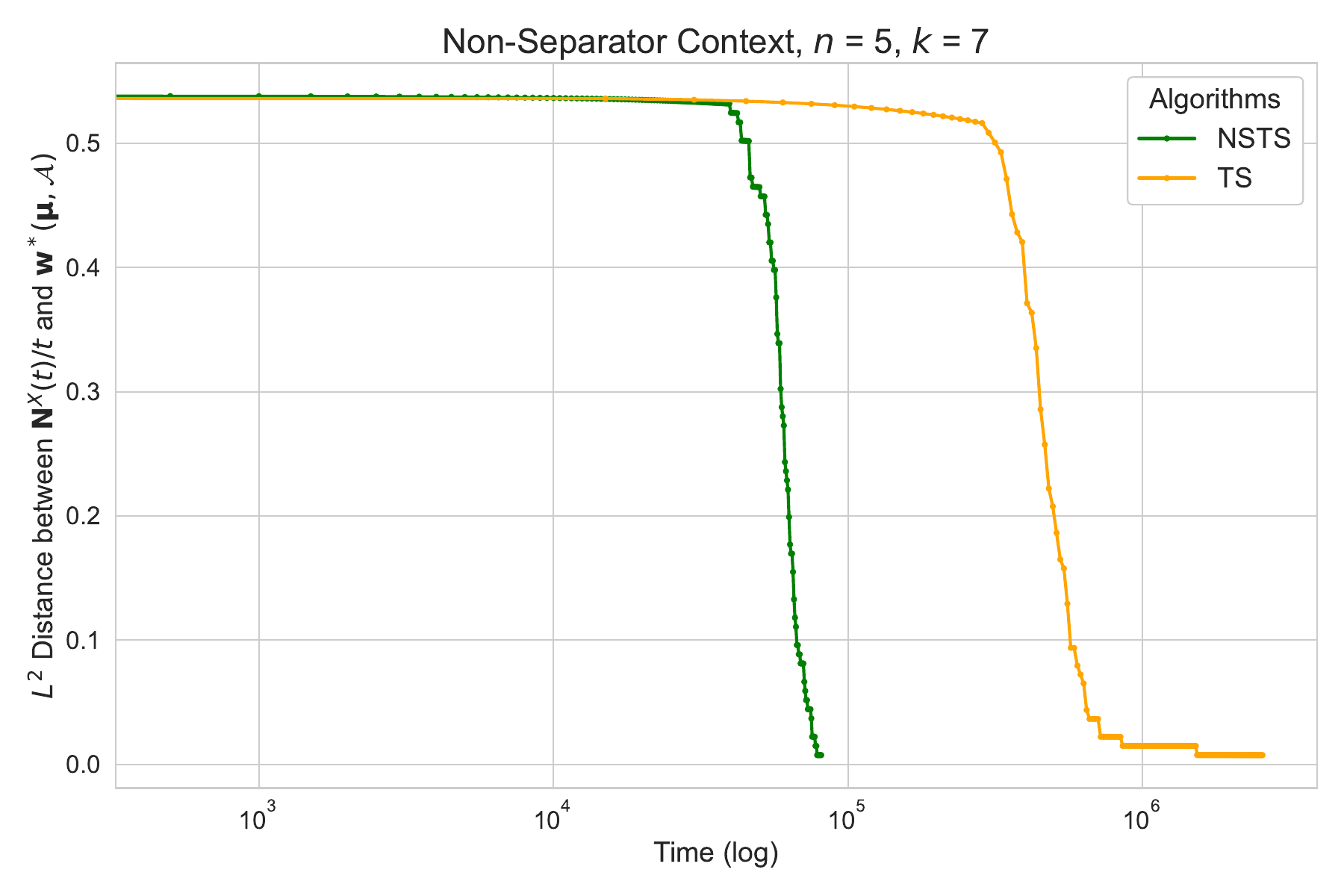} &
        \includegraphics[width=0.35\textwidth]{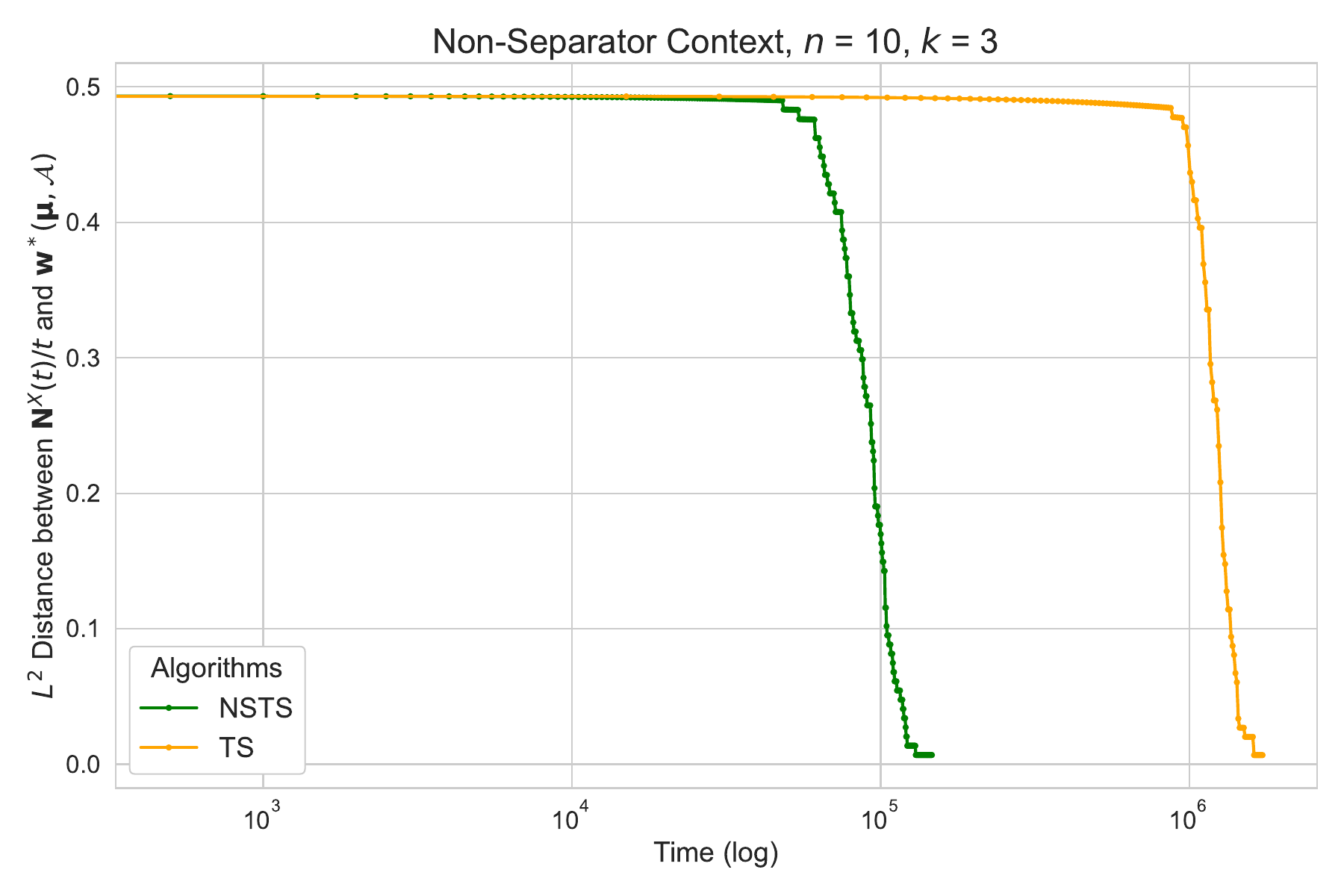} \\
        \includegraphics[width=0.35\textwidth]{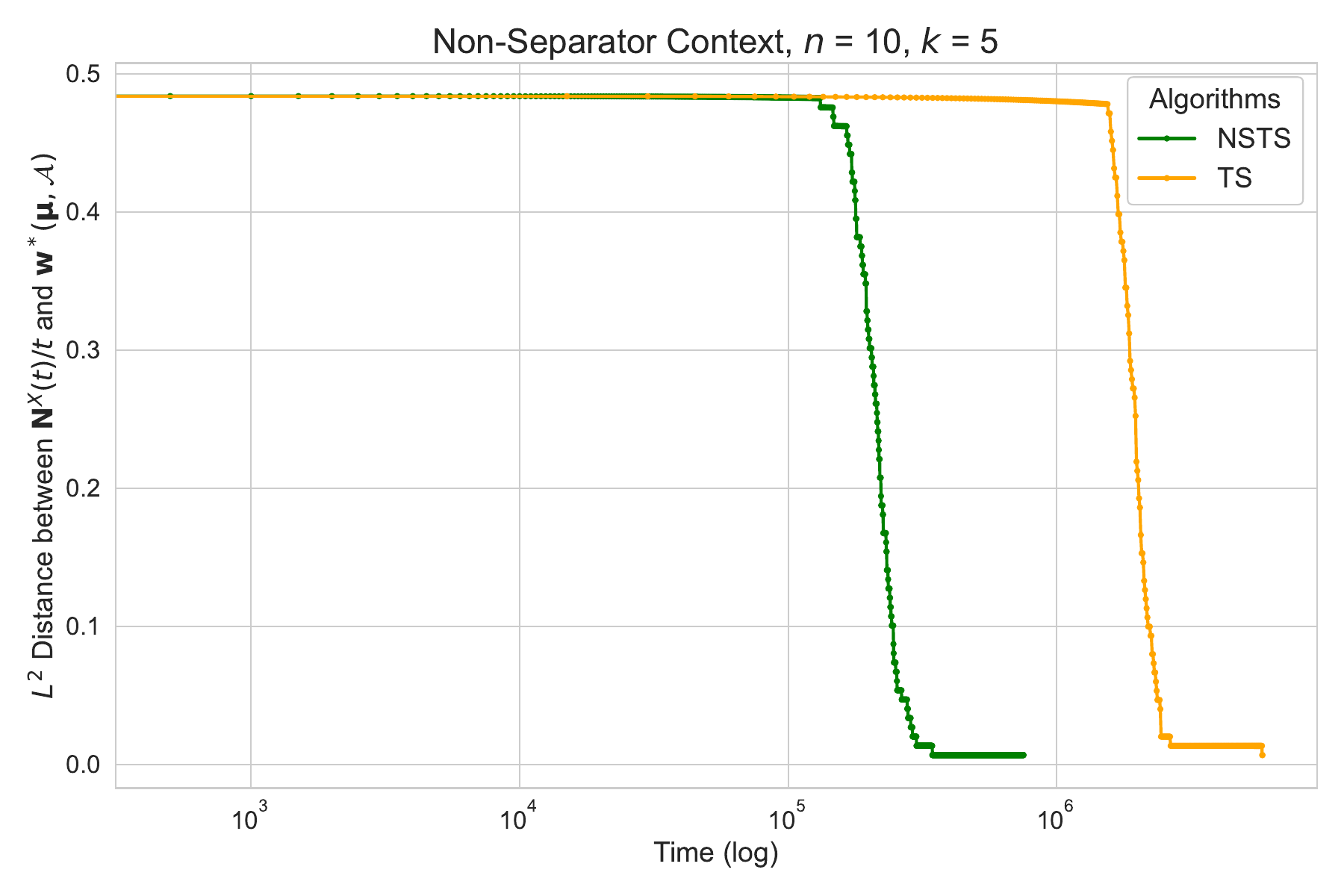} &
        \includegraphics[width=0.35\textwidth]{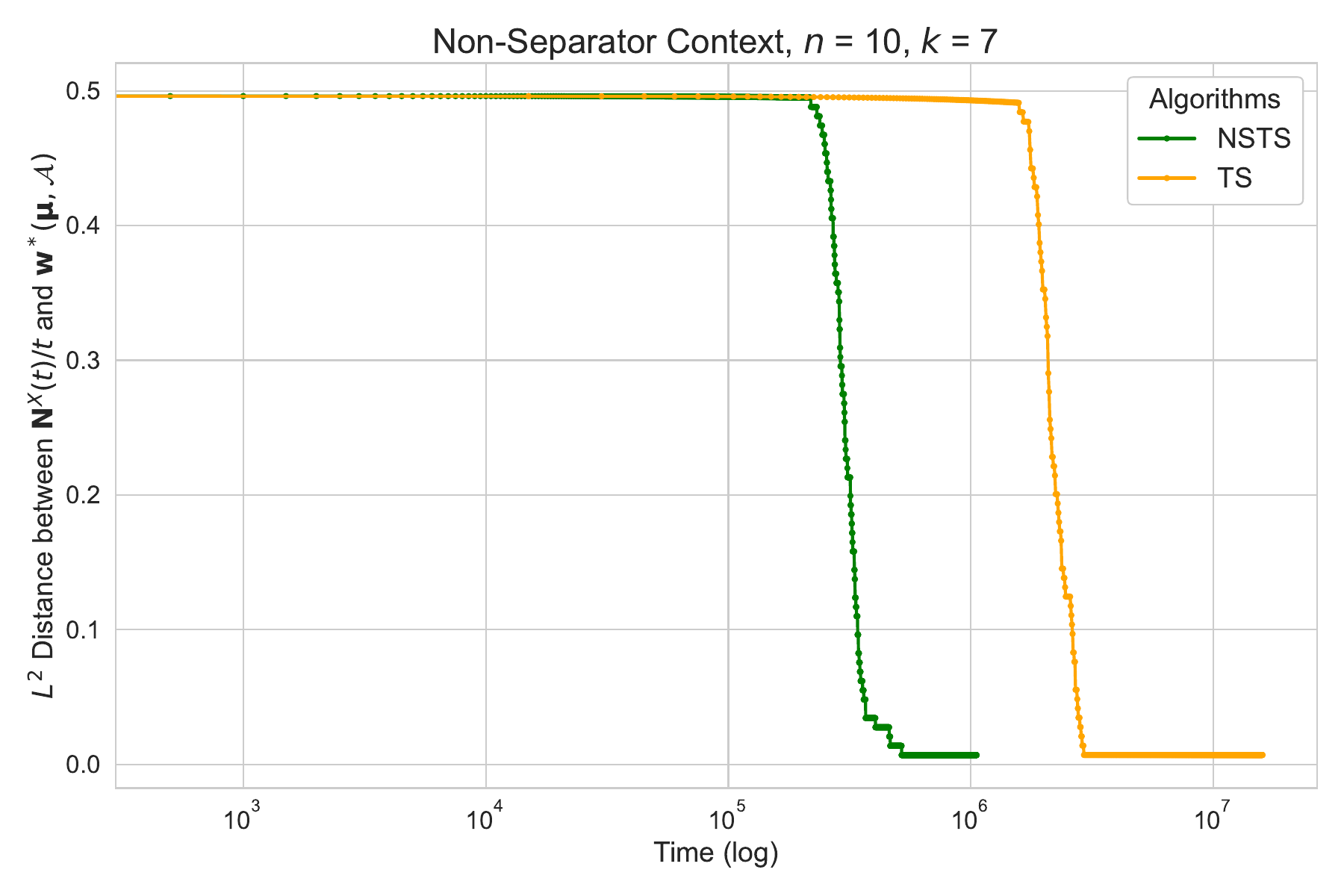} \\
        \includegraphics[width=0.35\textwidth]{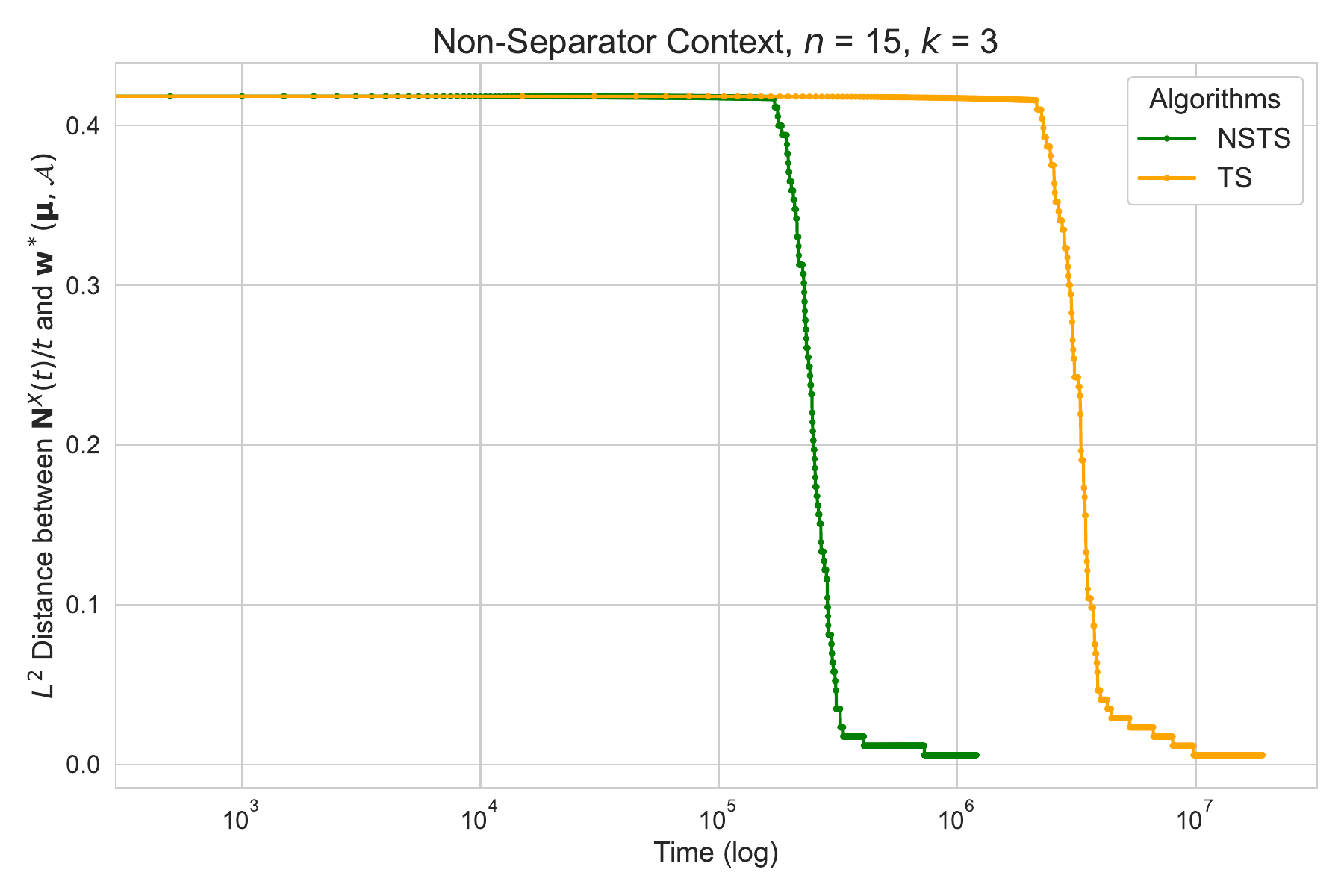} &
        \includegraphics[width=0.35\textwidth]{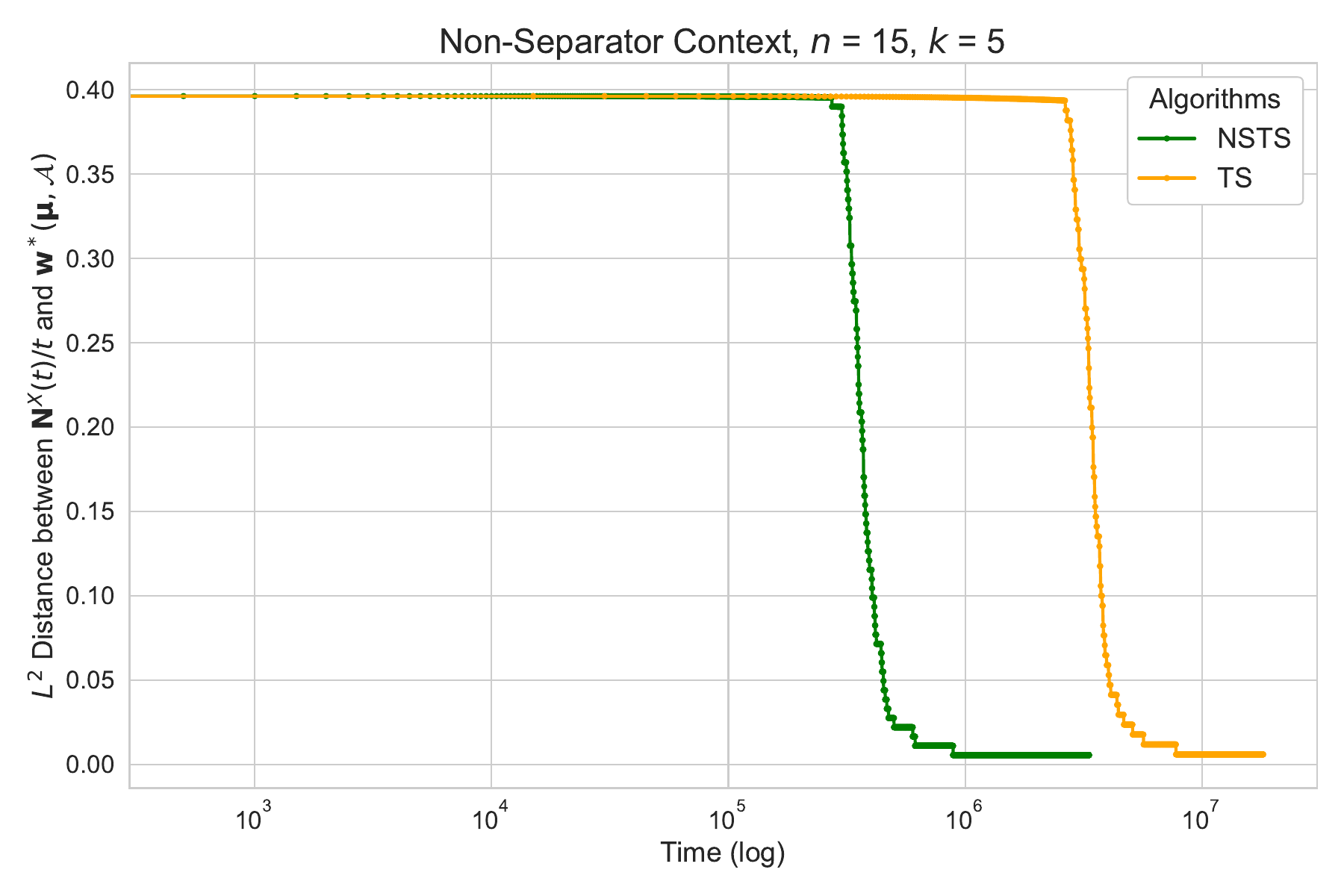} \\
        \includegraphics[width=0.35\textwidth]{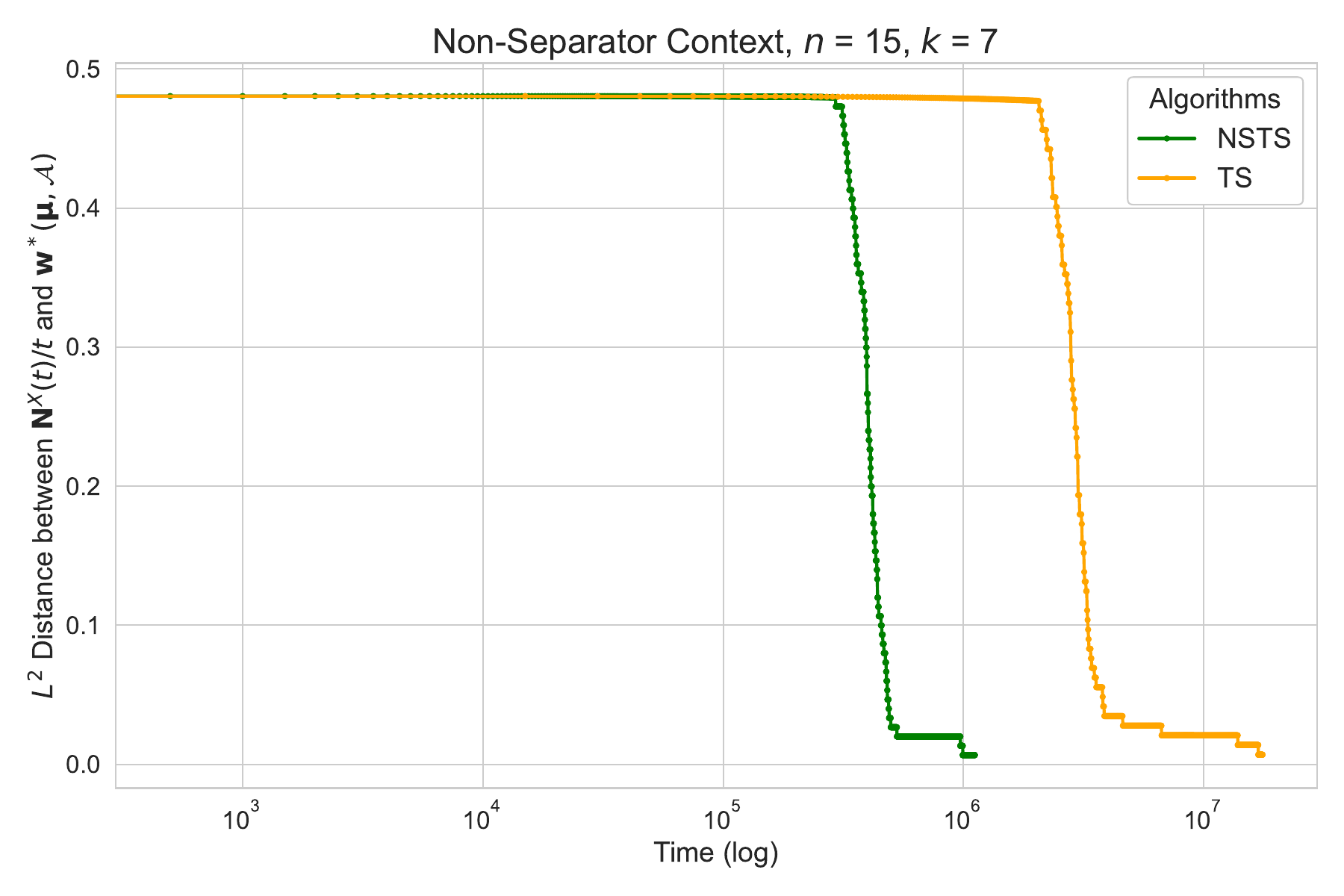} &
    \end{tabular}
    \caption{Comparison of the $L^2$ distance of the frequencies of pulled arms and the optimal frequency over time between two algorithms.}
    \label{fig: non-sep-dist}
\end{figure}




\begin{figure}
    \centering
    \begin{tabular}{cc}
        \includegraphics[width=0.35\textwidth]{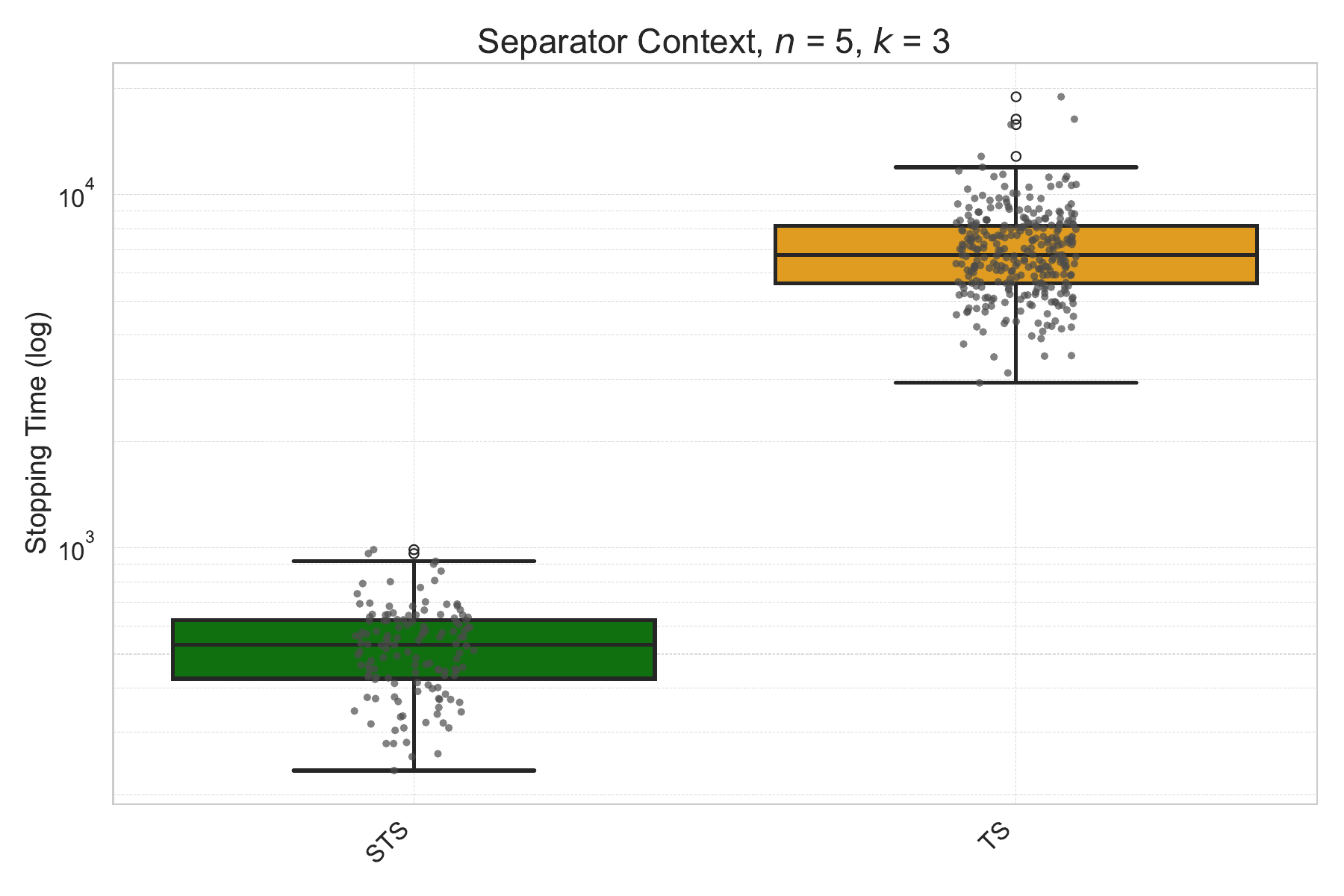} &
        \includegraphics[width=0.35\textwidth]{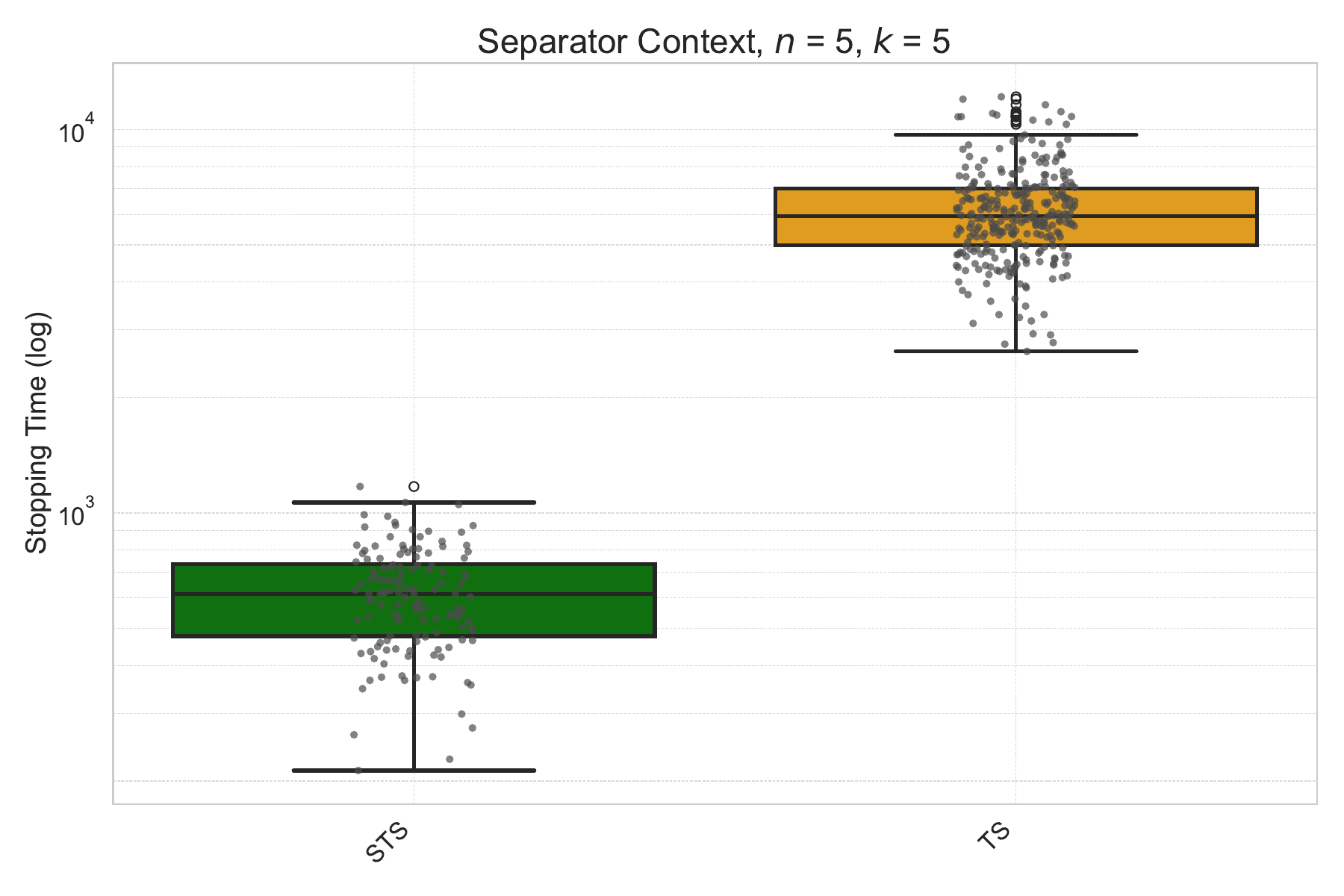} \\
        \includegraphics[width=0.35\textwidth]{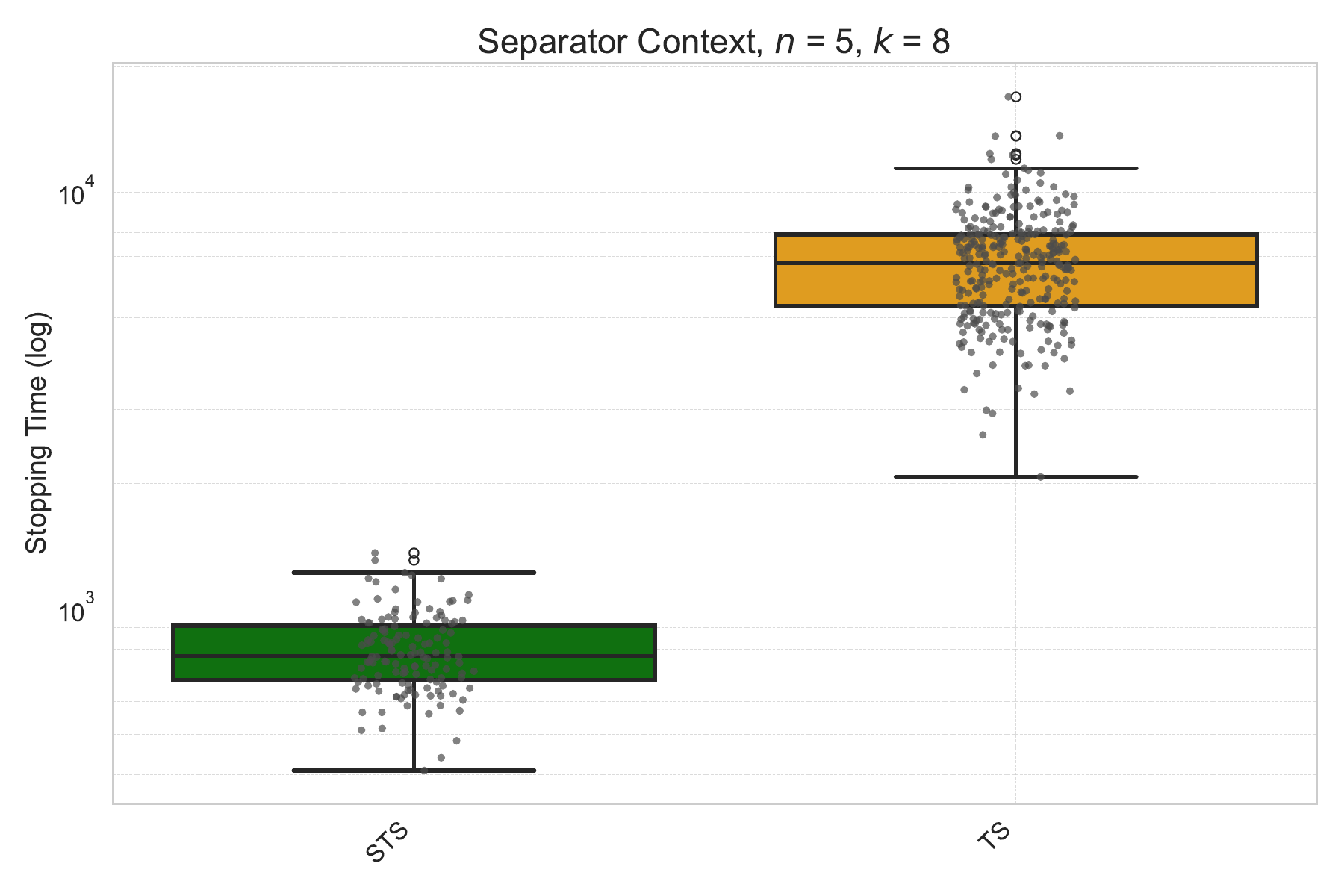} &

         \includegraphics[width=0.35\textwidth]{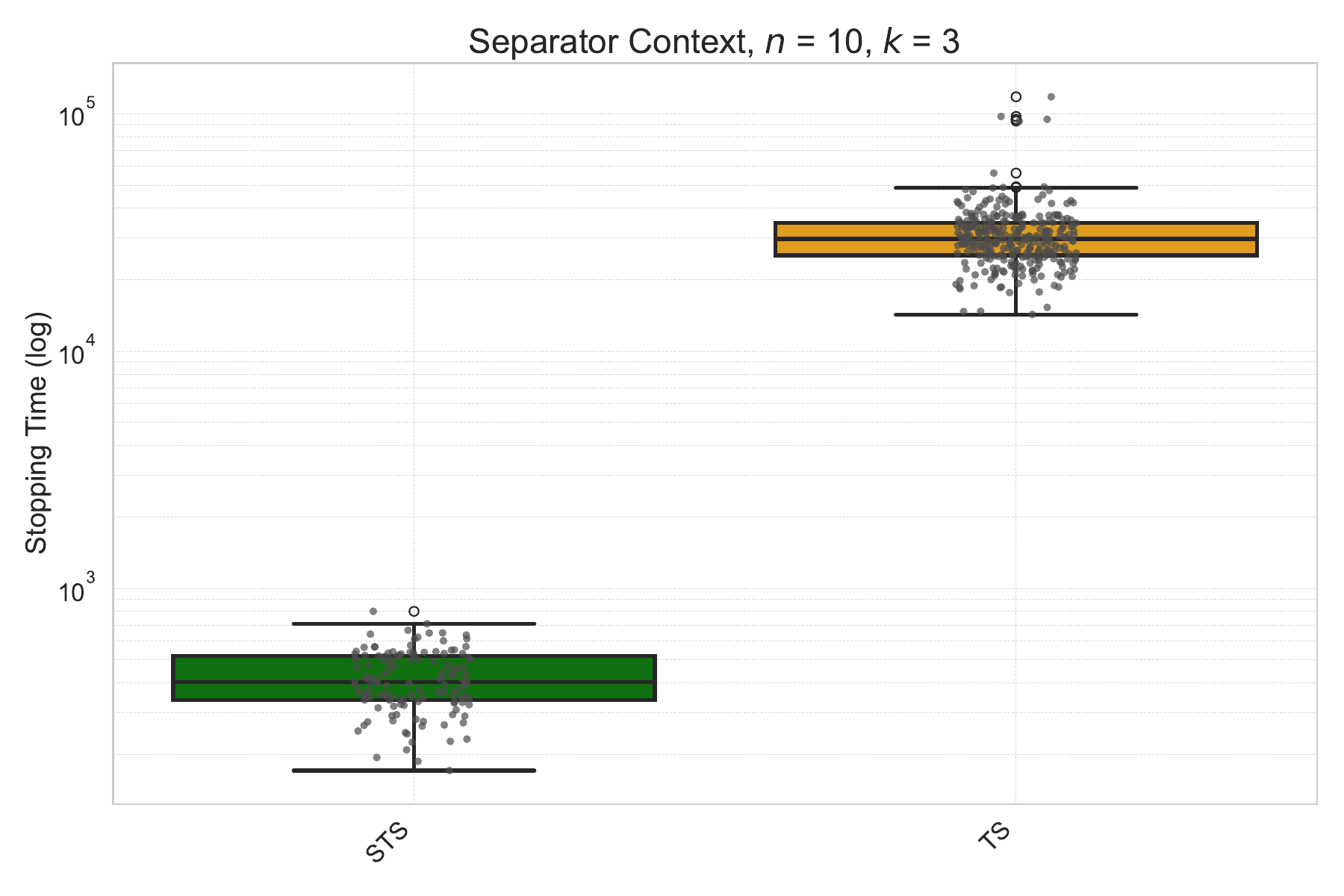} \\
        \includegraphics[width=0.35\textwidth]{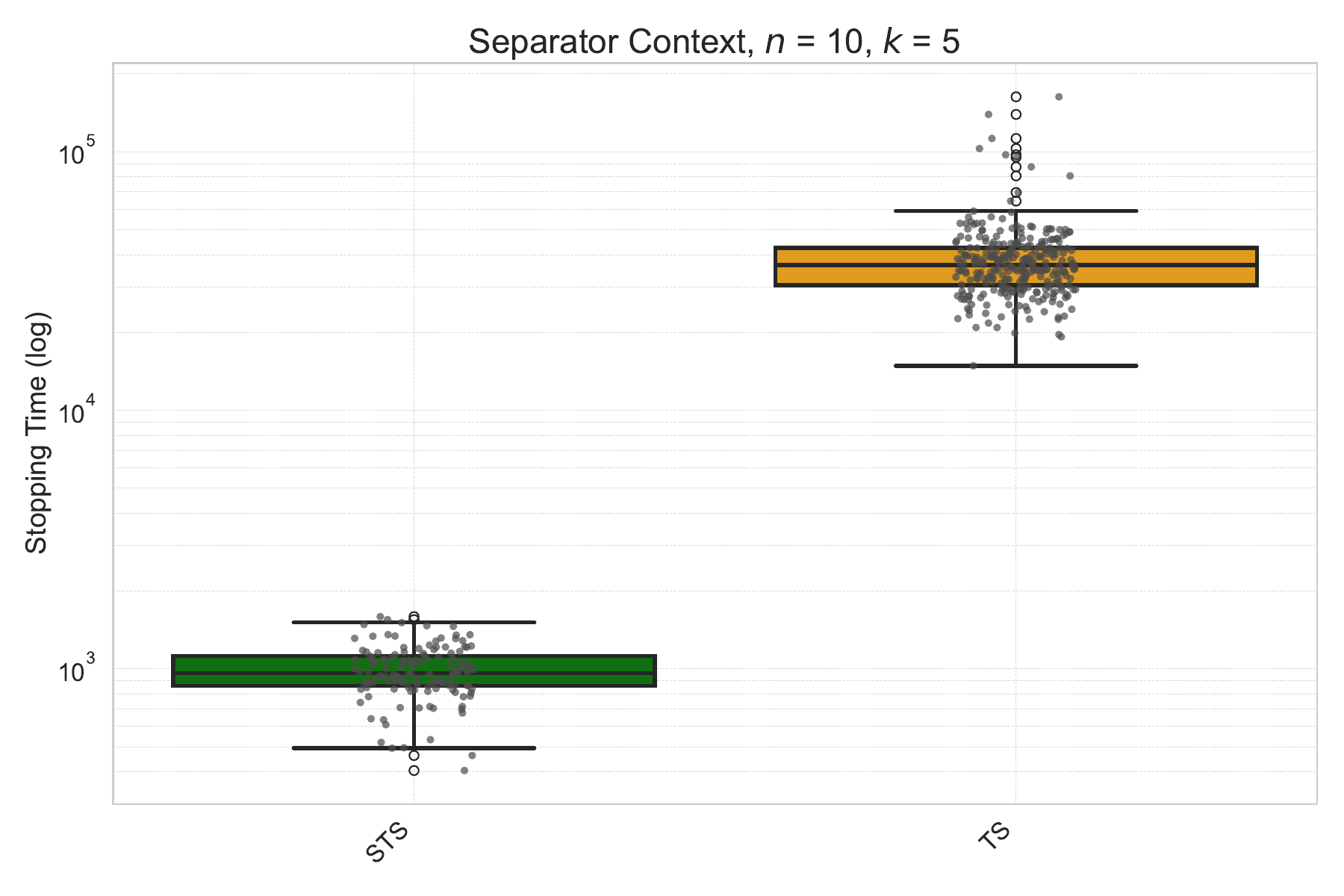} &
        \includegraphics[width=0.35\textwidth]{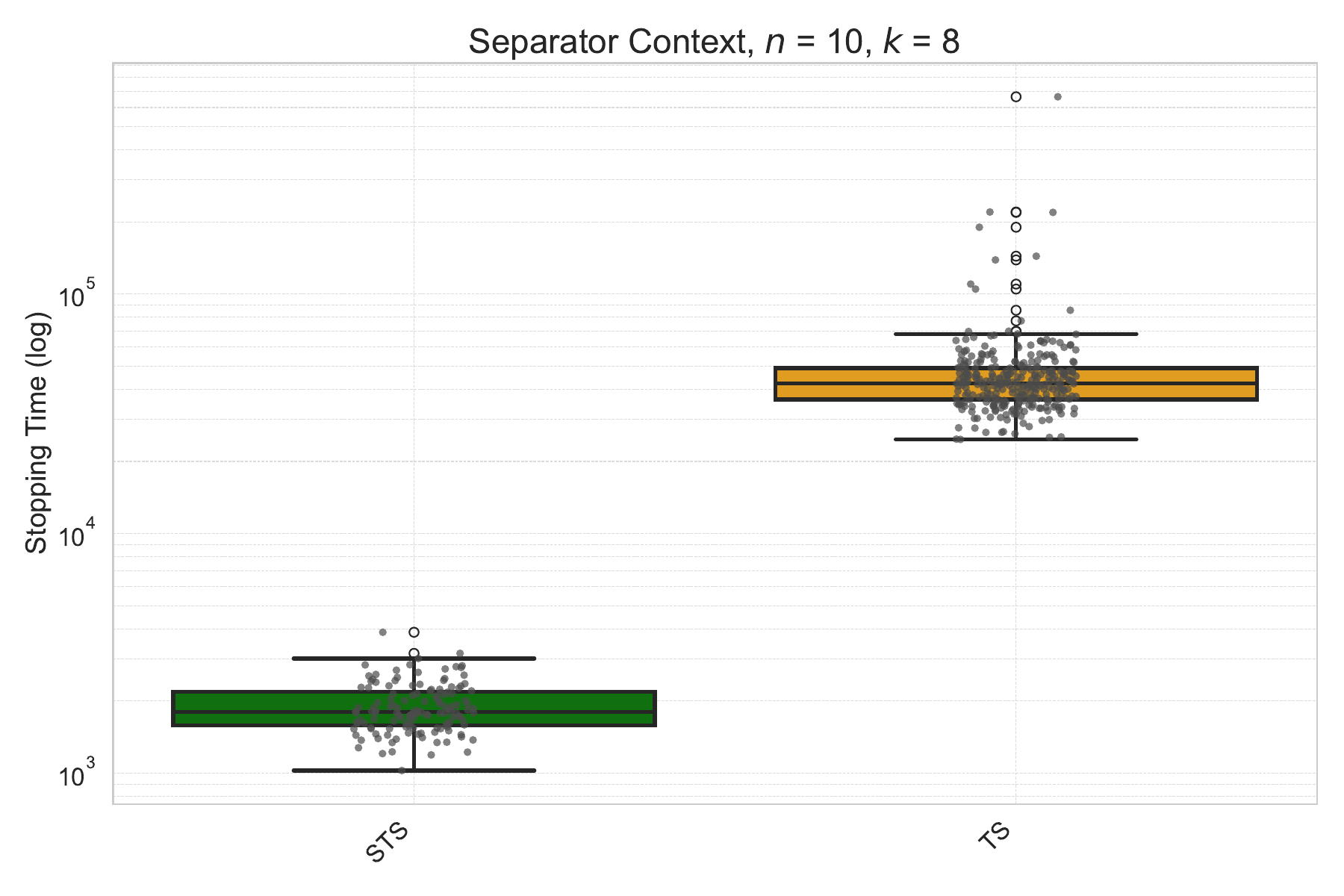} \\

         \includegraphics[width=0.35\textwidth]{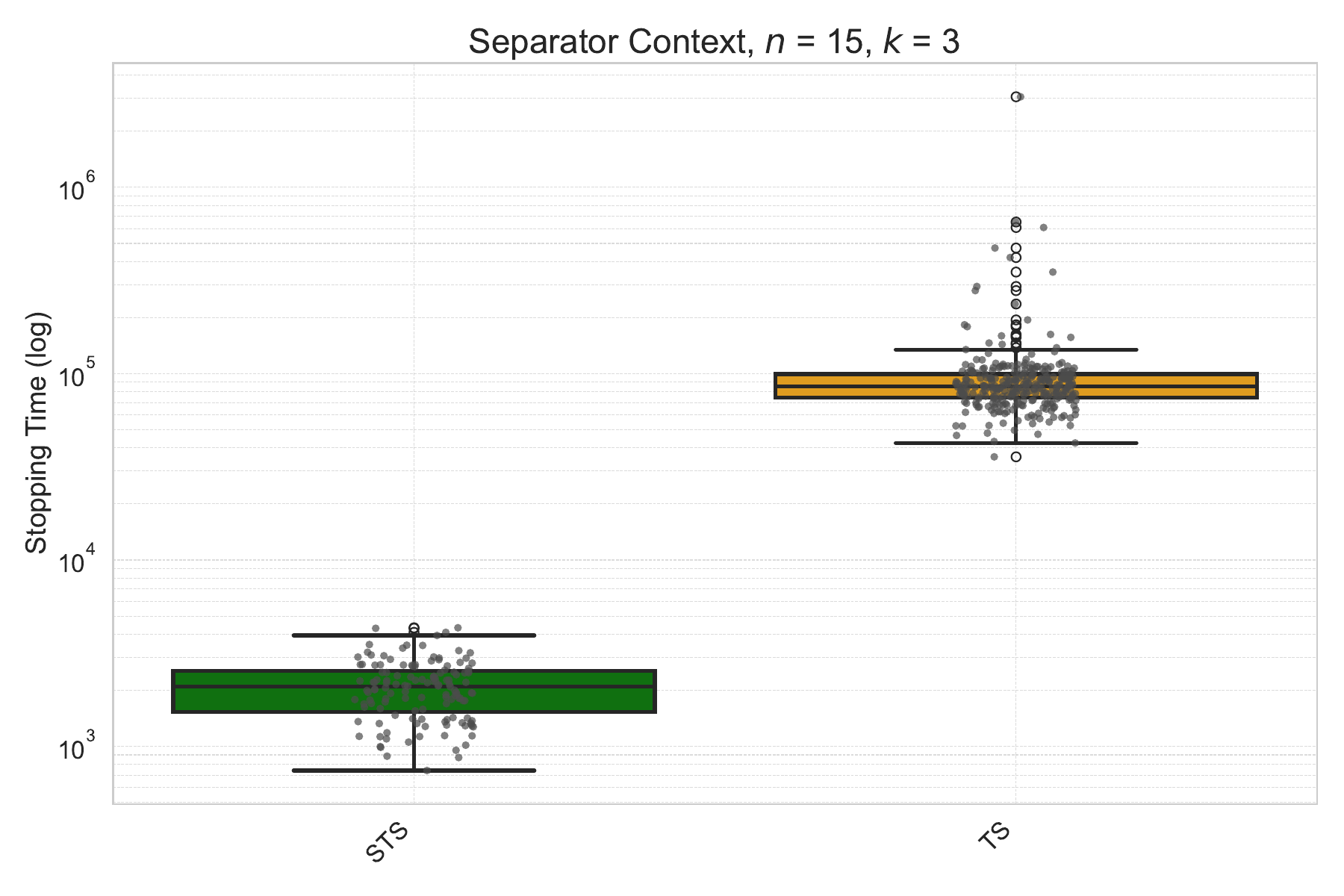} &
        \includegraphics[width=0.35\textwidth]{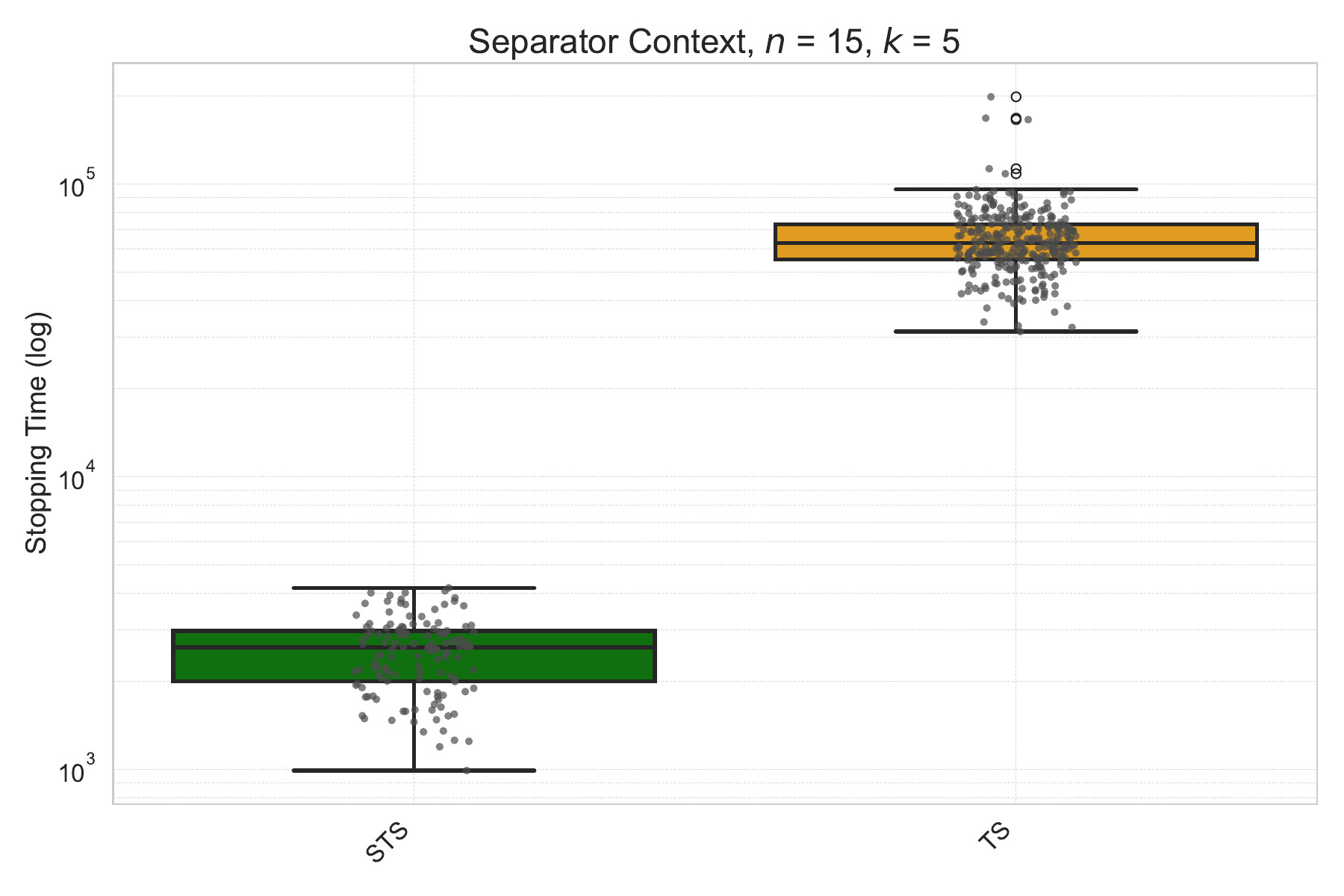} \\
        \includegraphics[width=0.35\textwidth]{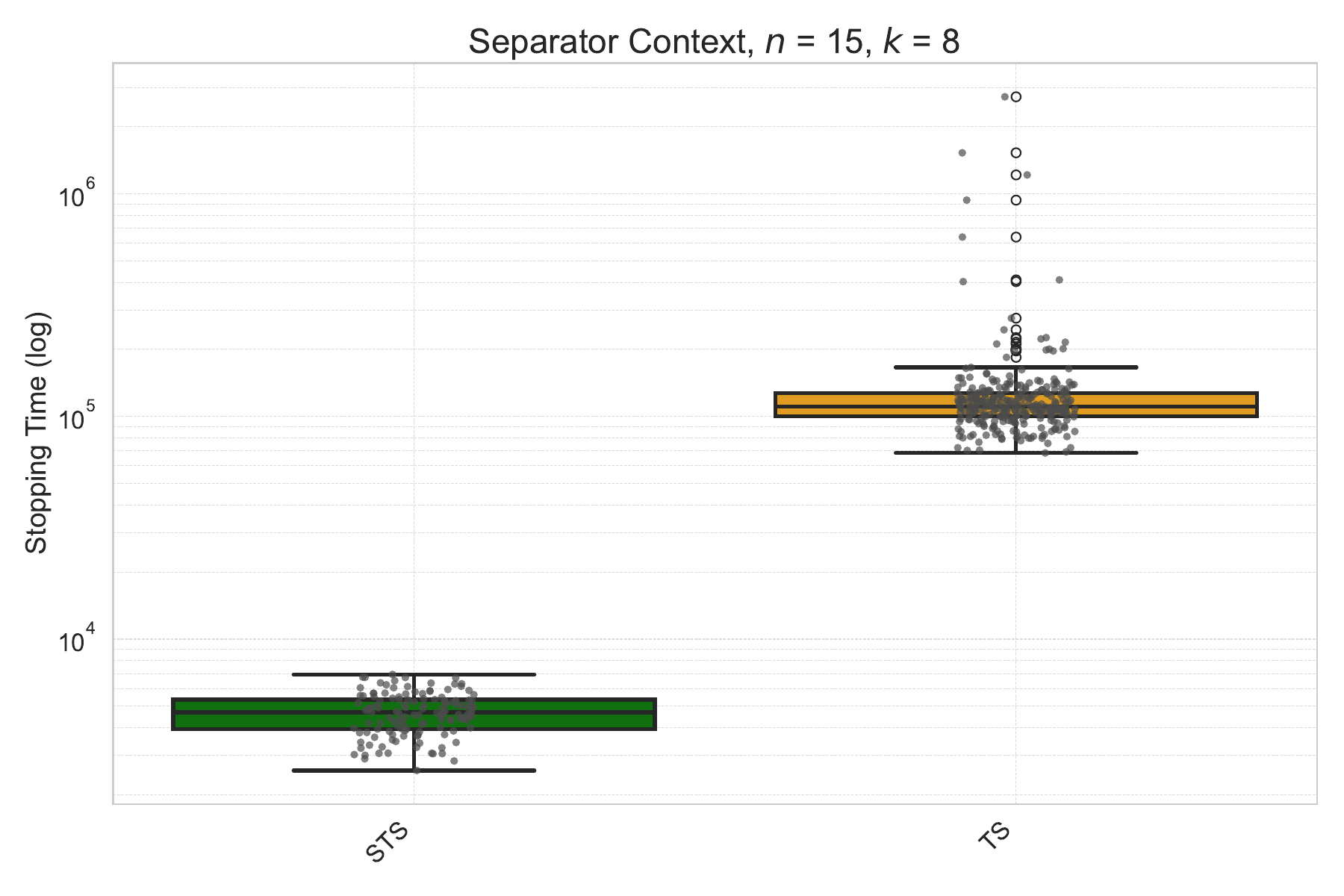} &
    \end{tabular}
    \caption{Comparison of the stopping times of different algorithms on randomly generated instances.}
    \label{fig: sep-box}
\end{figure}

\subsection{Separator Context}
    Here, we present additional results from running different algorithms on instances in the separator setting. For each $n \in \{5, 10, 15\}$ and $k \in \{3, 5, 8\}$, we randomly generated an instance with $n$ arms and $k$ context values. To avoid trivial instances, we repeatedly generated instances until finding one where $\Delta_i \in \left[\frac{1}{2n}, \frac{i+1}{2n}\right]$, with the first arm being the best arm. 

    As shown in the main text, the LTS algorithm does not perform competitively, and due to its long computation time, we do not report its results on these instances and only compare TS and Algorithm \ref{algo: sep}. Figure \ref{fig: sep-box} presents a box plot of the stopping times for these algorithms across all instances. The results demonstrate that ignoring information about the post-action context leads to significant sub-optimality. 

    Figure \ref{fig: sep-dist} shows the average $L^2$ distance of the context frequencies vector $\frac{\Nb^{Z}(t)}{t}$ and the optimal context frequencies $\wzstar{\bmu}$ over time for all previously introduced instances and for different algorithms.




\begin{figure}[t]
    \centering
    \begin{tabular}{cc}
        \includegraphics[width=0.35\textwidth]{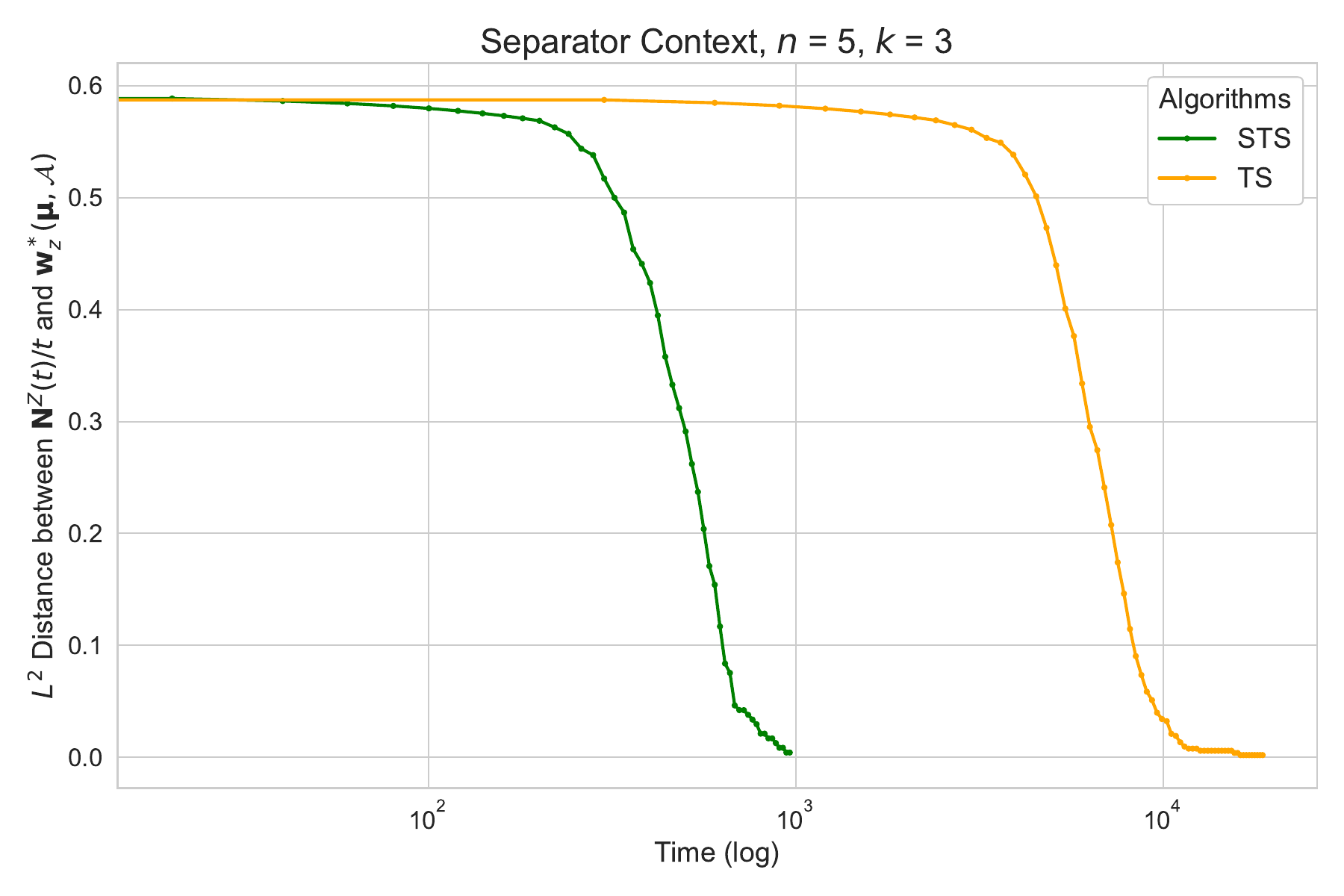} &
        \includegraphics[width=0.35\textwidth]{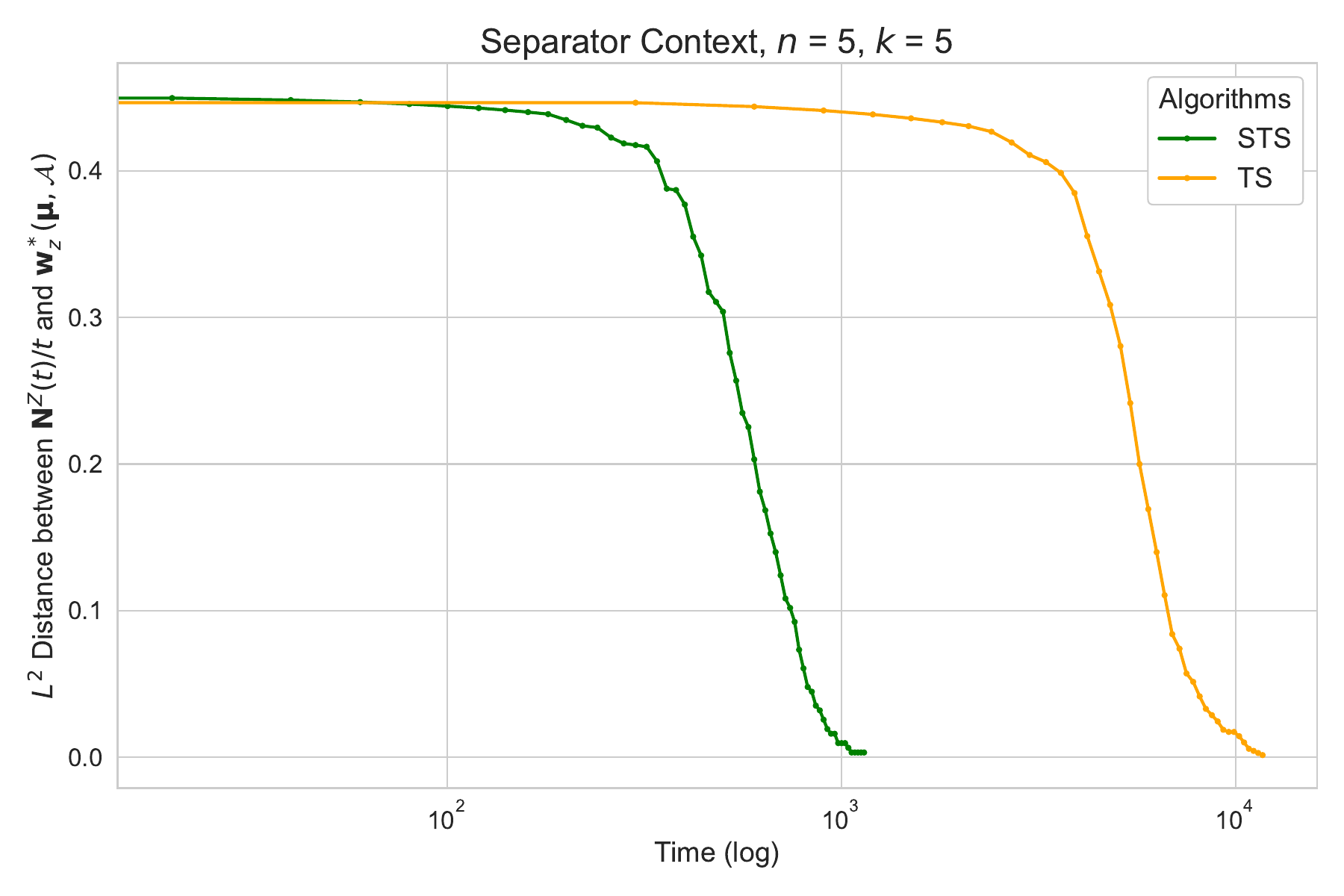} \\
        \includegraphics[width=0.35\textwidth]{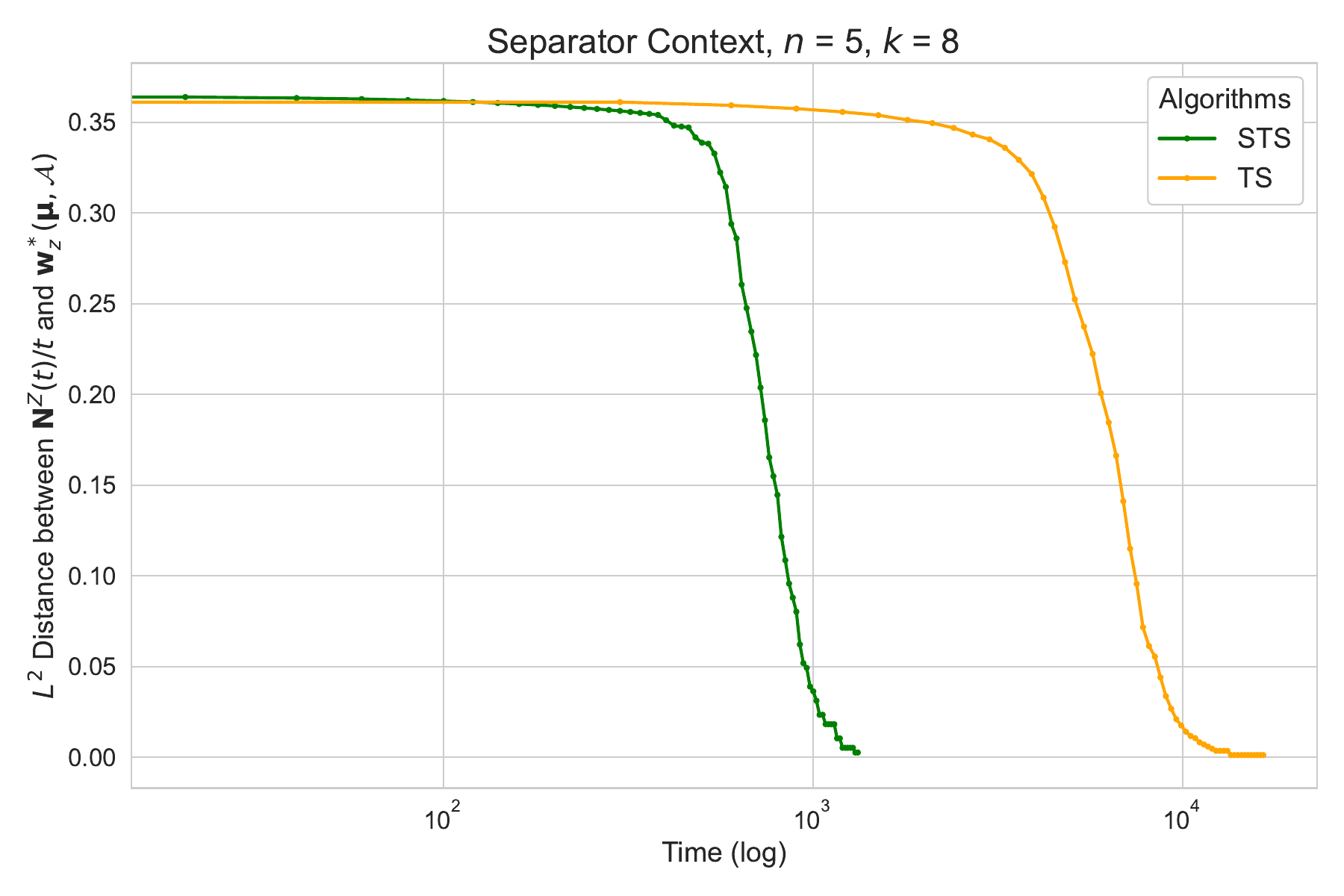} &

        \includegraphics[width=0.35\textwidth]{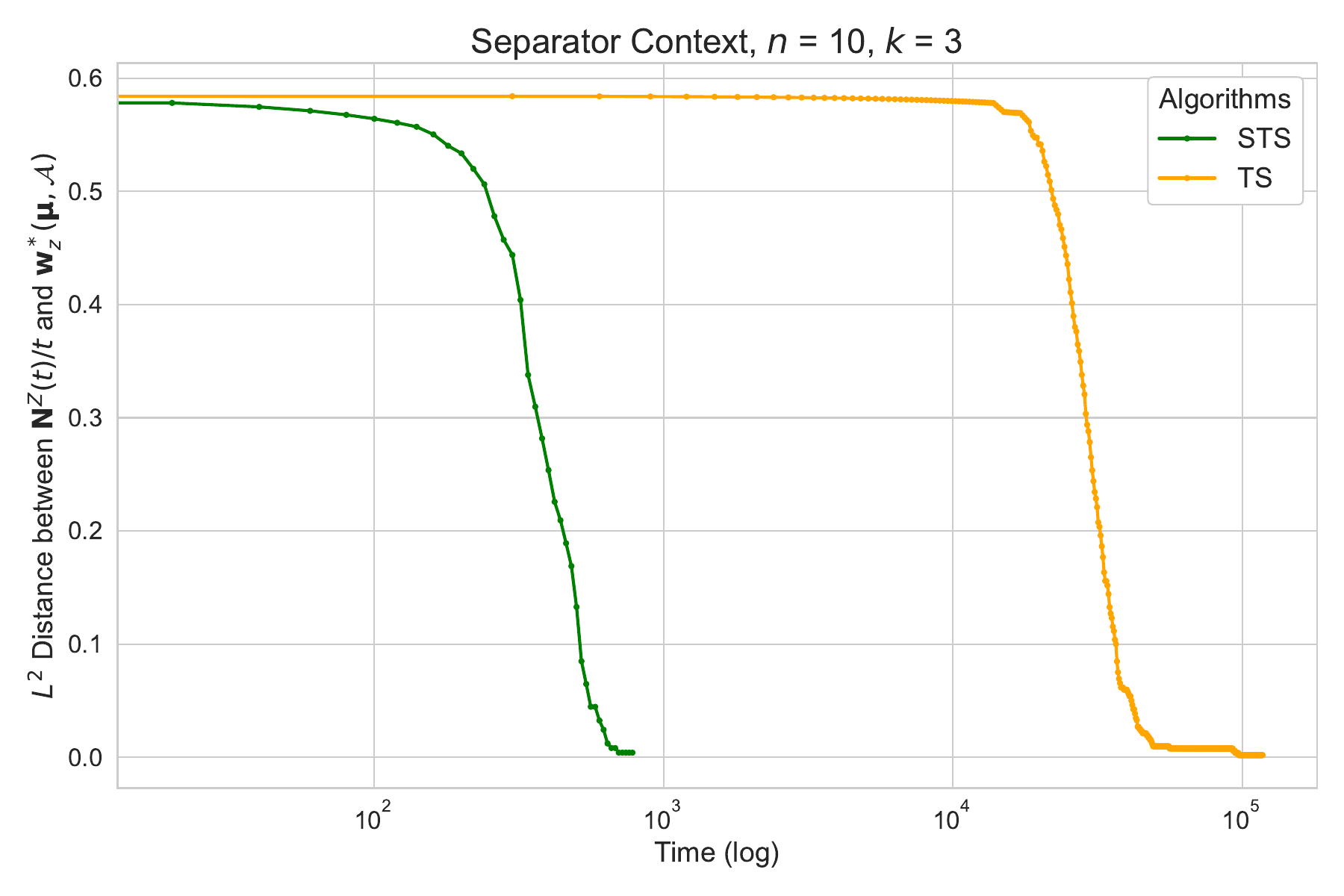} \\
        \includegraphics[width=0.35\textwidth]{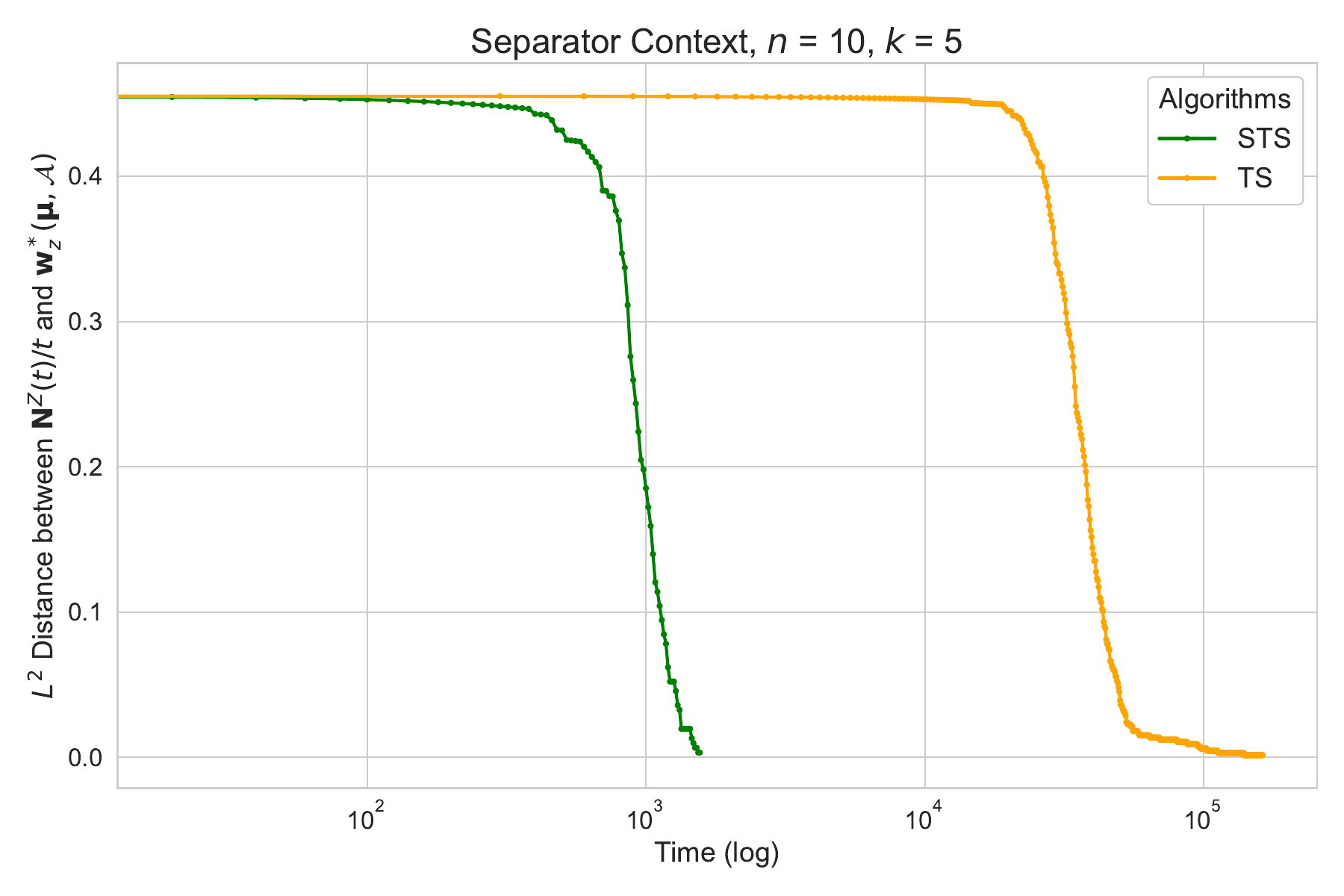} &
        \includegraphics[width=0.35\textwidth]{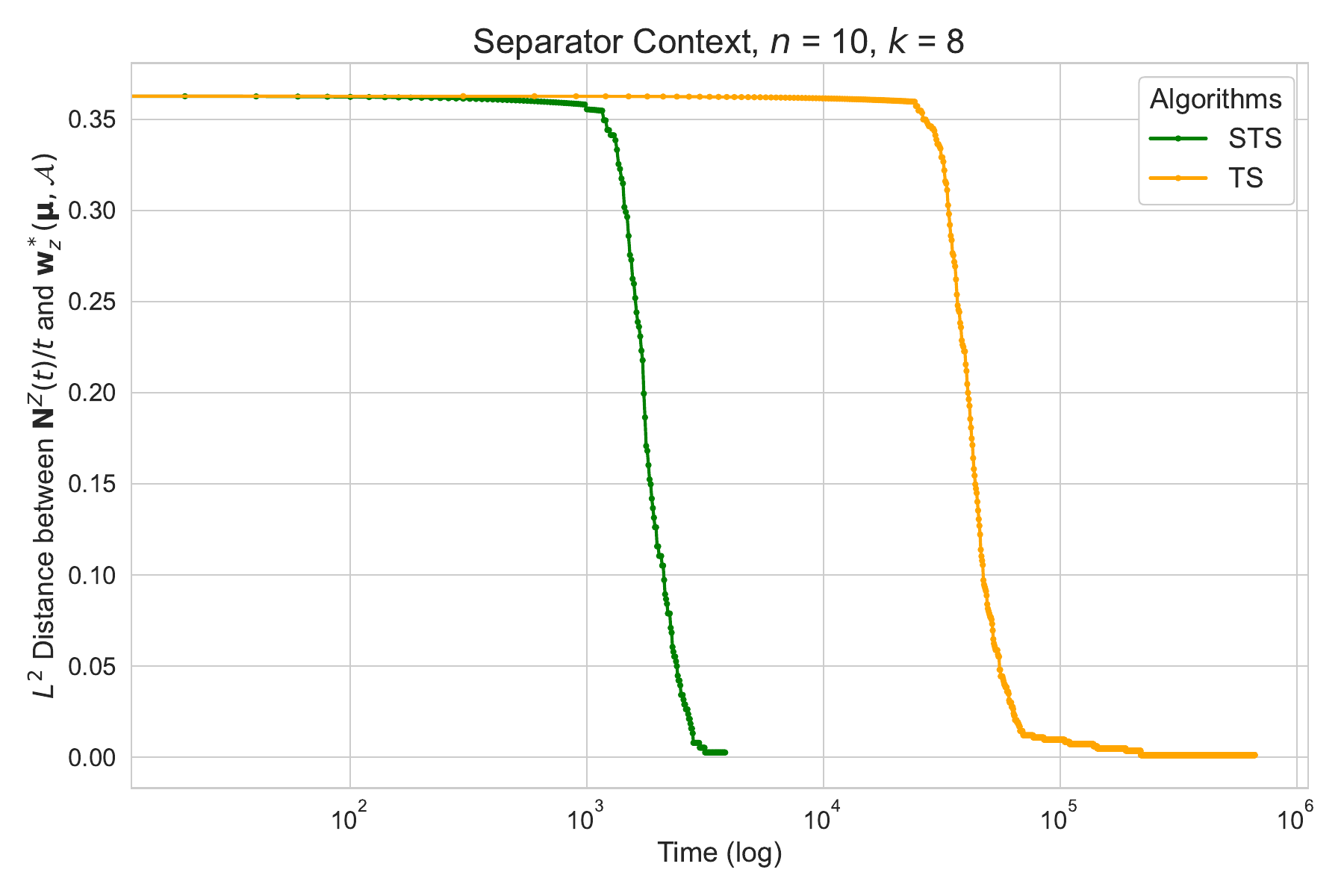} \\

        \includegraphics[width=0.35\textwidth]{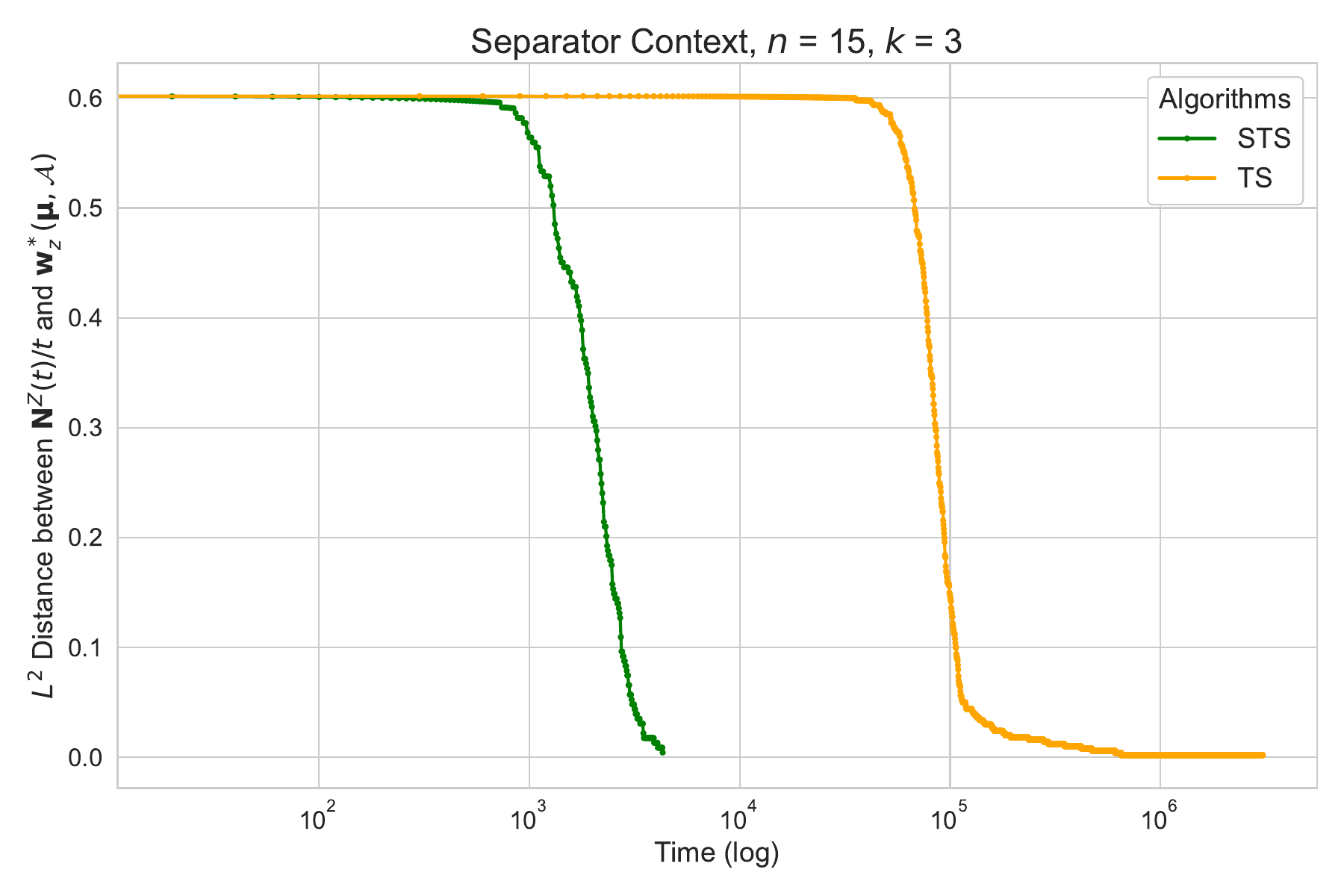} &
        \includegraphics[width=0.35\textwidth]{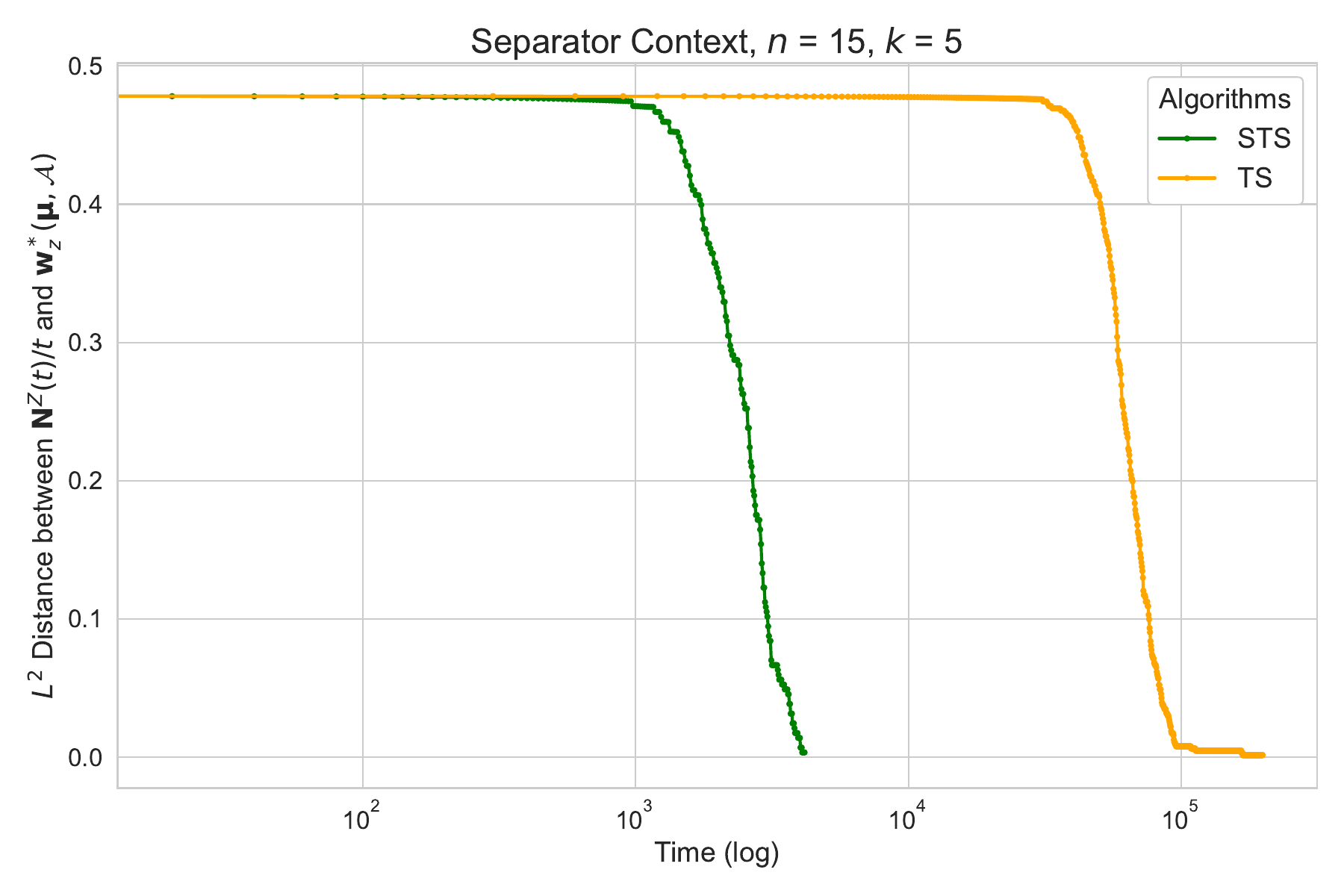} \\
        \includegraphics[width=0.35\textwidth]{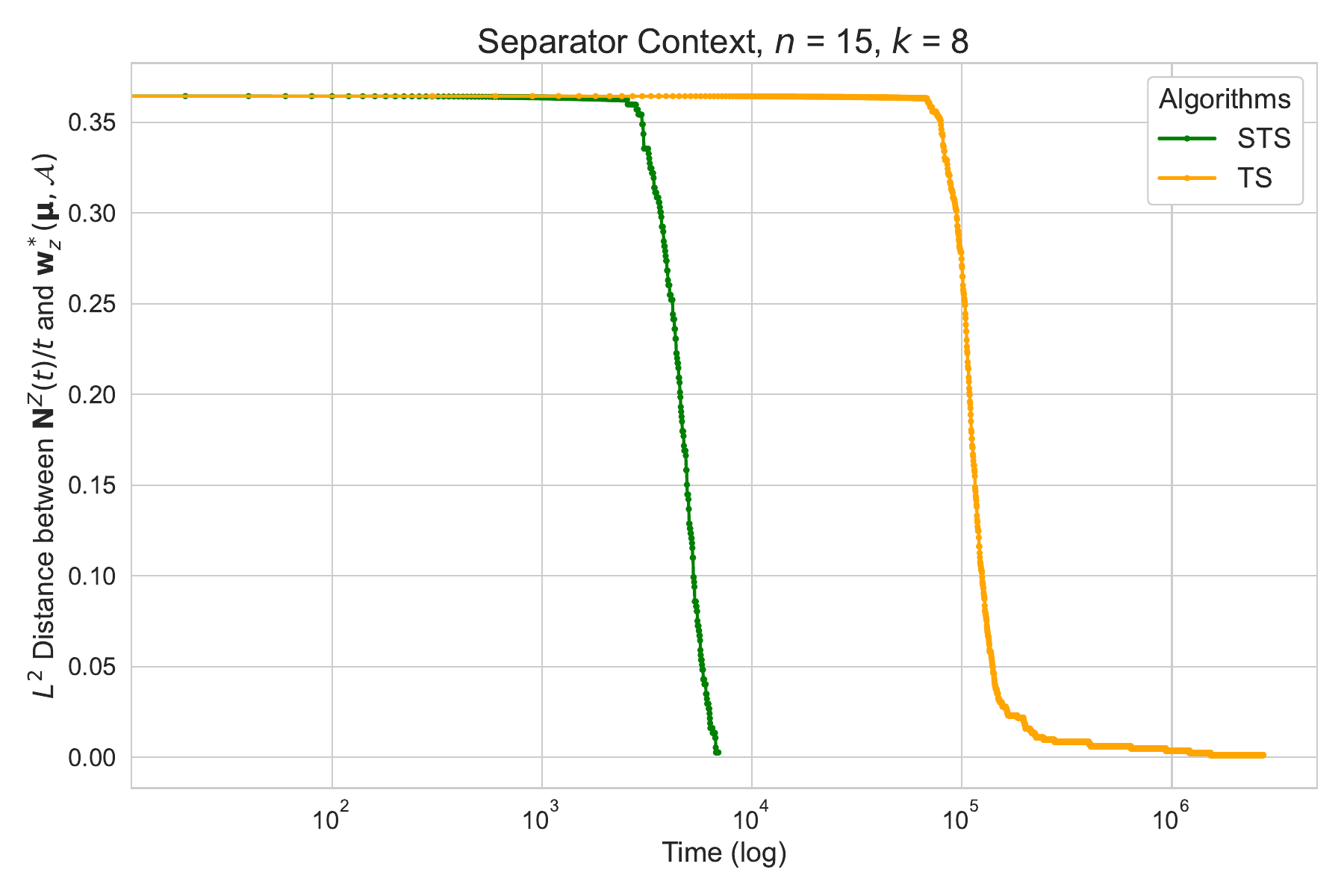} 
    \end{tabular}
    \caption{Comparison of the $L^2$ distance of the frequencies of observed contexts and the optimal frequency over time among different algorithms.}
    \label{fig: sep-dist}
\end{figure}



\end{document}